\renewcommand{\r}{\right}
\renewcommand{\l}{\left}
 \author{
 Spencer Frei
 \\
 UC Davis\\
 \texttt{sfrei@ucdavis.edu}
 \and
 Gal Vardi
 \\
 Weizmann Institute of Science \\
 \texttt{gal.vardi@weizmann.ac.il}
}
\title{\textbf{Trained Transformer Classifiers Generalize and \\Exhibit Benign Overfitting In-Context}}
\begin{document}
\maketitle

\begin{abstract}
Transformers have the capacity to act as supervised learning algorithms: by properly encoding a set of labeled training (``in-context'') examples and an unlabeled test example into an input sequence of vectors of the same dimension, the forward pass of the transformer can produce predictions for that unlabeled test example.  A line of recent work has shown that when linear transformers are pre-trained on random instances for linear regression tasks, these trained transformers make predictions using an algorithm similar to that of ordinary least squares.  In this work, we investigate the behavior of linear transformers trained on random linear classification tasks.  Via an analysis of the implicit regularization of gradient descent, we characterize how many pre-training tasks and in-context examples are needed for the trained transformer to generalize well at test-time.  We further show that in some settings, these trained transformers can exhibit ``benign overfitting in-context'': when in-context examples are corrupted by label flipping noise, the transformer memorizes all of its in-context examples (including those with noisy labels) yet still generalizes near-optimally for clean test examples. 
\end{abstract}

\section{Introduction}
A key feature of transformer-based large language models (LLMs) is their ability to perform \textit{in-context learning}: by providing a few labeled examples to a trained transformer with an unlabeled example for which the user wants a prediction, LLMs can formulate accurate predictions without any updates to their parameters~\citep{brown2020language}.  
This ability to perform in-context learning is influenced by the interplay between tokenization of inputs (i.e. how to feed data into the transformer), pretraining datasets, optimization algorithms used for pre-training, the particular architecture of the transformer, as well as the properties of the in-context examples that are provided at test time.  

A number of recent works have sought to develop a deeper theoretical understanding of in-context learning by investigating transformers trained on classical supervised learning tasks.  This line of work was initiated by the experiments of~\citet{garg2022can}, who showed that when transformers are trained on random instances of supervised learning problems, they learn to implement supervised learning algorithms at test-time: e.g., when training GPT2 architectures~\citep{radford2019language} on random linear regression tasks with weights sampled from a Gaussian prior, the transformer learns to implement an algorithm similar to ordinary least squares.  This led to a series of follow-up works which analyzed the dynamics of gradient descent/flow over simplified linear transformer architectures when trained on linear regression tasks~\citep{von2022transformers,akyurek2022learning}, some providing guarantees for how many tasks or samples were needed to generalize well~\citep{zhang2024icl,ahn2023transformers,wu2024pretraining}.  

\begin{figure}
    \centering
    \includegraphics[width=\linewidth]{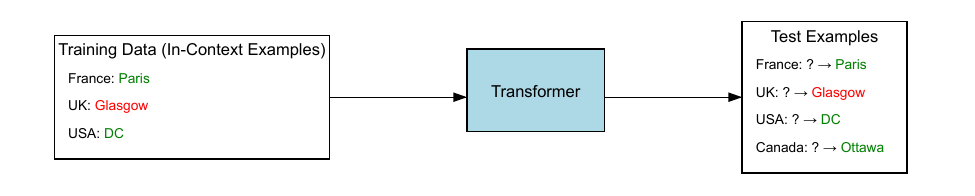}
    \caption{Benign overfitting in-context: after pre-training (not shown), when given a sequence of in-context examples, the transformer memorizes noisy labels yet still generalizes well to unseen test examples.}\label{fig:benign.overfitting.incontext}
\end{figure}

In this work, we analyze the behavior of linear transformer architectures which are trained by gradient descent on the logistic or exponential loss over random linear classification tasks.  We consider a restricted linear attention model, a setting considered in prior works~\citep{wu2024pretraining,kim2024transformersminimaxoptimalnonparametric}. 
We assume that pre-training tasks are sampled from random instances of class-conditional Gaussian mixture model data, i.e. for some $R>0$, covariance matrix $\Lambda$ and for each task $\tau=1,\dots, B$ we have
\[ \mu_\tau \iid \unif(R\cdot \mathbb{S}^{d-1}),\quad y_{\tau,i} \iid \unif(\{\pm 1 \}),\quad z_{\tau,i} \iid \normal(0, \Lambda),\qquad x_{\tau, i} = y_{\tau, i} \mu_\tau + z_{\tau, i}.\]
We assume that test-time in-context examples also come from class-conditional Gaussian mixture models as above but with two important differences.  First, the signal-to-noise (determined by $R$) at test time can differ from those seen during pre-training.  Second, we allow for label-flipping noise to be present at test time, i.e. we flip labels $y_{i} \mapsto -y_{i}$ with probability $p$.

Our first contribution is an analysis of how many pre-training tasks are needed in order to generalize well at test-time.  We find that after pre-training the transformer can tolerate smaller SNR than those tasks which it was trained on.  Moreover, we find that at test time the transformer can tolerate label-flipping noise, even though the pre-training data does not have label-flipping noise.  Thus, even when pre-training on simple and easy-to-learn datasets, the transformer can generalize on more complex tasks.  Our proof follows by an explicit analysis of the implicit regularization due to gradient descent during pre-training.  To our knowledge, this is the first theoretical result on in-context learning for linear classification. 

Our second contribution is the finding that the trained transformer can exhibit ``benign overfitting in-context'': namely, if the test-time sequence is subject to label-flipping noise, then in some settings the transformer memorizes the in-context training data yet still generalizes near-optimally.  The precise setting where this occurs requires the features to lie in a high-dimensional space and for the SNR to be relatively small, a set of conditions observed in prior works on benign overfitting in classification tasks~\citep{chatterji2020linearnoise,frei2022benign,frei2023benignkkt}.  Figure~\ref{fig:benign.overfitting.incontext} illustrates what this phenomenon would look like in the language setting.  To the best of our knowledge, no prior work demonstrated that transformers could exhibit benign overfitting.

\section{Related Work}

\paragraph{In-context learning for supervised learning tasks.}  Following the initial experiments of~\citet{garg2022can}, a number of works sought to understand what types of algorithms are implemented by transformers when trained on supervised learning tasks.~\citet{akyurek2022learning} and~\citet{von2022transformers} used approximation-theoretic and mechanistic approaches to understand how a variety of transformer architectures could implement linear regression algorithms.~\citet{bai2024transformers} showed that transformers could implement a variety of classical algorithms used for different supervised learning tasks, e.g. ridge regression, Lasso, and gradient descent on two-layer networks.~\citet{zhang2024icl},~\citet{ahn2023transformers} and~\citet{mahankali2023stepgradientdescentprovably} examined the landscape of single-layer linear transformers trained on linear regression tasks, with~\citet{zhang2024icl} additionally developing guarantees for convergence of (non-convex) gradient flow dynamics.  There are other works which examine training of transformers on data coming from hidden Markov chains~\citep{edelman2024evolutionstatisticalinductionheads} or nonparametric function classes~\citep{kim2024transformersminimaxoptimalnonparametric}. The most closely related work to this one is~\citet{wu2024pretraining}, who focused on a convex linear transformer architecture, as we do in this work, trained on linear regression tasks with SGD.  They provided task and (in-context) sample complexity guarantees when training by SGD.  In contrast, we develop guarantees for the classification setting, and we develop guarantees via an analysis of the implicit regularization of GD.

\paragraph{Implicit regularization in transformers.}  The starting point for our analysis of the task complexity of pre-training and the sample complexity of in-context learning is an analysis of the implicit regularization of the optimization algorithm.  We outline the most related works here and refer the reader to the survey by~\citet{vardi2022implicit}.  The convex linear transformer architecture we study is linear in (the vectorization of) its parameters, which by~\citet{soudry2018implicit} implies that gradient descent converges in direction to the max-margin solution.  The more general class of linear transformer architectures are non-convex in general, but certain subclasses of them are homogeneous in their parameters and hence converge (in direction) to KKT points of max-margin solutions~\citep{lyu2019gradient,ji2020directional}.  Another line of work seeks to understand the implicit bias of GD for softmax-based transformers~\citep{tarzanagh2023max,tarzanagh2023transformers,thrampoulidis2024implicitbiasnexttokenprediction,vasudeva2024implicit}, typically with more stringent assumptions on the structure of the training data. 

\paragraph{Benign overfitting.}  The ability for neural networks to overfit to noise yet still generalize well~\citep{zhang2016understanding} led to a significant line of work on reconciling this phenomenon with classical learning theory.  We focus on the most directly relevant works here and point the reader to the surveys~\citet{bartlett2021statisticalviewpoint,belkin2021survey} for a more extensive review.  Prior works have shown that kernel-based methods~\citep{belkin2018doesdatainterpolationcontradict} and the ordinary least squares solution~\citep{bartlett2020benignpnas} can exhibit benign overfitting in regression tasks, and that the max-margin solution can in classification tasks~\citep{chatterji2020linearnoise,frei2023benignkkt}.  Our proof technique comes from the works~\citet{frei2023implicit,frei2023benignkkt}, where it was shown that the max-margin solution over 2-layer neural nets behaves similarly to a nearly uniform average of the training data and that this average can exhibit benign overfitting in class-conditional Gaussian mixture models.  We similarly show that the forward pass of a pre-trained transformer behaves similarly to an average over the in-context training examples, and use this to show benign overfitting.

\section{Preliminaries}
In this section we introduce the pre-training distribution, the particular transformer architecture we consider, and the assumptions on the pre-training data.

\subsection{Notation}

We denote matrices with capital letters $W$ and vectors and scalars in lowercase.  The Frobenius norm of a matrix is denoted $\snorm{W}_F$, and its spectral norm as $\snorm{W}_2$.   We use $a\wedge b := \min(a,b)$ and $a\vee b = \max(a,b)$.  We use the standard $O(\cdot), \Omega(\cdot)$, and $ \Theta(\cdot)$ notations, where $\tilde O, \tilde \Omega, \tilde \Theta$ ignore logarithmic factors.  We use $\ind(x)$ as the indicator function, which is 1 if $x$ is true and 0 otherwise.  We denote $[M] = \{1, 2, \dots, M\}$ for a positive integer $M$, and we denote $\unif(R \cdot \mathbb S^{d-1})$ as the uniform distribution on the sphere of radius $R$ in $d$ dimensions.  In Table~\ref{tab:notation} in the appendix, we collect all notation used throughout the paper.

\subsection{Setting} 

In the linear regression setting, there is a natural distribution of pre-training tasks that serves as a basis for a number of theoretical works on in-context learning in that setting~\citep{von2022transformers,akyurek2022learning,zhang2024icl,ahn2023transformers}, namely, a fixed distribution over the features $x$ and a Gaussian prior over the weights $w_\tau$ for each task $\tau$ and labels generated as $y = w_\tau^\top x$ per task.  We are unaware of any prior works in the linear classification setting, so we propose the following. We assume each task is a random instance of a class-conditional Gaussian mixture model with cluster means $\mu_\tau \sim \unif(R \cdot \mathbb S^{d-1})$.  In particular, we assume the following.

\begin{assumption}\label{assumption:pretraining.data}
Let $\Lambda\mgtr 0$ be a symmetric positive-definite $d\times d$ matrix and let $B, N\geq 1$ and $R>0$.  Let $\tau=1,\dots B$ and $i=1,\dots, {N+1}$.
\begin{enumerate}
\item Let $\mu_\tau \iid \unif(R \cdot \mathbb S^{d-1})$  be uniform on the sphere of radius $R$ in $d$ dimensions.
\item Let $z_{\tau,i} \iid \normal(0, \Lambda)$, and $y_{\tau, i} \iid \unif(\{\pm 1 \})$, where $\mu_\tau$, $z_{\tau,i}$ and $y_{\tau,i}$ are mutually independent.
\item Set $x_{\tau,i} := y_{\tau, i} \mu_{\tau} + z_{\tau,i}$.
\end{enumerate} 
\end{assumption} 

We tokenize samples to be fed into the transformer in the following way: for a sequence of $N$ labeled examples $(x_i, y_i)_{i=1}^N$ which we use to formulate a prediction for an unlabeled test example $x_{N+1}$, we concatenate the $(x_i,y_i)$ pairs into a $d+1$ dimensional vector and tokenize the test example as $(x_{N+1}, 0)$,
\begin{equation}
E = \begin{pmatrix}
    x_1 & x_2 & \cdots & x_N & x_{N+1} \\
    y_1 & y_2 & \cdots & y_N & 0
\end{pmatrix} \in \R^{(d+1)\times (N+1)}.\label{eq:embedding.matrix.prompt}
\end{equation}
In the standard softmax-based attention with a single head~\citep{vaswani2017attention}, the transformer is defined in terms of parameters $W^V\in \R^{d_e\times d_e}$, $W^K, W^Q \in \R^{d_k \times d_e}$.  If $\theta = (W^K, W^Q, W^V)$ then softmax attention computes
\[ f(E; \theta) = E + W^V E \cdot \mathrm{softmax} \l( \f{ (W^K E)^\top W^Q E}{\rho_{N,d_e}} \r),\]
where $\rho_{N,d_e}$ is a normalization factor which is not optimized (but which may depend on $N$ and $d_e$).  In \textit{linear} transformers, the softmax is replaced with the identity function.  
Following prior work~\citep{von2022transformers,zhang2024icl,ahn2023transformers}, we merge the $K$ and $Q$ matrices into $W^{KQ}:=(W^K)^\top W^Q$ and we consider an objective function where the aim is to use the first $N$ columns of~\eqref{eq:embedding.matrix.prompt} to formulate predictions for $x_{N+1}$, whereby the bottom-right corner of the output matrix of $f(E; \theta)$ serves as this prediction.  Writing $W^{\Delta} = \begin{pmatrix}
    W_{11}^{\Delta} & w_{12}^\Delta \\ (w_{21}^\Delta)^\top & w_{22}^\Delta,
\end{pmatrix}$ for $\Delta \in \{V, KQ\}$, for the linear transformer architecture, this results in the prediction
\[ \hat y(E;\theta) = \begin{pmatrix}
    (w_{21}^V)^\top & w_{22}^{V} 
\end{pmatrix} \cdot \f{ 1}{N} EE^\top \cdot \begin{pmatrix}
    W_{11}^{KQ} \\ (w_{21}^{KQ})^\top 
\end{pmatrix}x_{N+1}.\]
Due to the product of matrices appearing above, the resulting objective function is non-convex, which makes the analysis of its training dynamics complex.  In this work, we follow~\citet{wu2024pretraining,kim2024transformersminimaxoptimalnonparametric} and instead consider a convex parameterization of the linear transformer which results from taking $w_{21}^{KQ}=w_{21}^V=0$ and setting $w_{22}^V=1$.  This results in the following prediction for the label of $x_{N+1}$,
\begin{equation}\label{eq:convex.linear.transformer.def}
    \hat y(E;W) = \l( \f 1 N \summ i N y_i x_i \r)^\top W x_{N+1}.
\end{equation}
That is, $\hat y(E;W)$ corresponds to the predictions coming from a linear predictor trained by one step of gradient descent on the logistic or squared loss (initialized at 0)  but where the parameter $W$ is a (learned) matrix preconditioner.  Prior work by~\citet{zhang2024icl} showed that when a single-layer linear transformer architecture is trained by gradient flow on linear regression tasks, it learns a function of the same type, where the preconditioner
was found to correspond to the inverse covariance matrix of the pretraining data.  

For each task $\tau$, denote the embedding matrix $E_\tau$ as the one formed using labeled examples $(x_{\tau,i}, y_{\tau_i})_{i=1}^N$ from Assumption~\ref{assumption:pretraining.data} as per~\eqref{eq:embedding.matrix.prompt}.  We consider linear transformers which are trained on the prediction of the last token using the logistic or exponential loss, namely 
\[ \hat L(W) := \f 1 B \summ \tau B \ell\big( y_{\tau,N+1} \cdot \hat y(E_\tau; W)\big)), \qquad \ell \in \{ q\mapsto \log(1+\exp(-q)),\,q\mapsto  \exp(-q)\}.\]
We are interested in the behavior of gradient descent on this objective,
\[ W_{t+1} = W_t - \alpha \nabla \hat L(W_t).\]
Note that $\hat y(E;W)$ is homogeneous in $W$ (linear in $\mathrm{vec}(W)$) and so by~\citet{soudry2018implicit} we know that gradient descent has an implicit bias towards maximum-margin solutions, as we recall in the following theorem:
\begin{theorem}[\citet{soudry2018implicit}]
Define the $\ell_2$-max margin solution as 
\begin{equation} 
\wmm := \argmin \l\{ \snorm{U}_F^2 : \l( \nicefrac 1 N \textstyle \summ i N y_{\tau,i} x_{\tau,i}\r)^\top U y_{\tau,N+1} x_{\tau,N+1} \geq 1,\, \forall \tau=1, \dots ,B\r\}.\label{eq:maxmargin}
\end{equation}
    If there exists $U$ which satisfies the constraints in~\eqref{eq:maxmargin}, then provided $\alpha$ is sufficiently small, then $W(t)$ converges in direction to $\wmm$, i.e. for some constant $c>0$ we have $\nicefrac{W_t}{\snorm{W_t}} \to c\wmm$.
\end{theorem}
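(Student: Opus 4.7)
The plan is to reduce the statement to the original theorem of \citet{soudry2018implicit} for linear classifiers by vectorizing the parameter matrix $W$. Observe from~\eqref{eq:convex.linear.transformer.def} that
\[ y_{\tau,N+1}\cdot \hat y(E_\tau; W) = \mathrm{tr}\!\left( W^\top \cdot y_{\tau,N+1}\Bigl(\tfrac1N\!\sum_{i=1}^N y_{\tau,i}x_{\tau,i}\Bigr) x_{\tau,N+1}^\top \right) = \langle \mathrm{vec}(W),\, \phi_\tau\rangle, \]
where $\phi_\tau := \mathrm{vec}\!\bigl(y_{\tau,N+1}\bigl(\tfrac1N\sum_i y_{\tau,i}x_{\tau,i}\bigr)x_{\tau,N+1}^\top\bigr) \in \mathbb{R}^{d^2}$. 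Writing $w:=\mathrm{vec}(W)$, the empirical risk becomes $\hat L(W) = \tfrac1B\sum_\tau \ell(\langle w,\phi_\tau\rangle)$, a standard linear classification objective on the ``dataset'' $\{(\phi_\tau,+1)\}_{\tau=1}^B$. Moreover, because $\snorm{W}_F = \snorm{\mathrm{vec}(W)}_2$ and the margin constraint in~\eqref{eq:maxmargin} is precisely $\langle \mathrm{vec}(U),\phi_\tau\rangle \geq 1$, the matrix max-margin problem $\wmm$ is identical to the $\ell_2$ max-margin problem for $\{\phi_\tau\}$ in $\mathbb{R}^{d^2}$.

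Next I would verify the hypotheses required by~\citet{soudry2018implicit}: (i) the loss $\ell$ is $\beta$-smooth and strictly decreasing with an exponential tail (both $q\mapsto \log(1+e^{-q})$ and $q\mapsto e^{-q}$ satisfy this directly); (ii) the data is linearly separable in the vectorized space, which by hypothesis of the theorem is ensured by the existence of $U$ satisfying the constraints in~\eqref{eq:maxmargin}; and (iii) the step size $\alpha$ is at most $2/\beta_{\max}$, where $\beta_{\max}$ depends on $\ell$ and $\max_\tau \snorm{\phi_\tau}_2$, which is the ``sufficiently small $\alpha$'' hypothesis. Notice that the gradient descent update on $\hat L$ with respect to $W$ is identical to the gradient descent update on $\hat L$ viewed as a function of $w = \mathrm{vec}(W)$, because $\mathrm{vec}$ is a linear isometry between $(\mathbb{R}^{d\times d}, \snorm{\cdot}_F)$ and $(\mathbb{R}^{d^2}, \snorm{\cdot}_2)$; so the iterates $w_t = \mathrm{vec}(W_t)$ coincide with the iterates that would be produced by gradient descent on the vectorized logistic/exponential regression problem.

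Applying Theorem~3 of~\citet{soudry2018implicit} (or the corresponding result for the exponential loss) to $w_t$ then yields $w_t/\snorm{w_t}_2 \to c\cdot \mathrm{vec}(\wmm)$ for some $c>0$. Undoing the vectorization gives $W_t/\snorm{W_t}_F \to c\wmm$, as claimed.

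The main (and essentially only) obstacle is cosmetic: one must confirm that the reduction via $\mathrm{vec}$ commutes with the gradient descent dynamics and with the norms used to define the max-margin problem, and that the ``sufficiently small $\alpha$'' in the statement matches the step-size conditions required by~\citet{soudry2018implicit}. Because the predictor is exactly linear in $\mathrm{vec}(W)$ and the Frobenius norm on matrices equals the Euclidean norm on their vectorizations, both checks are immediate and no new ideas beyond the original implicit-bias argument are needed.
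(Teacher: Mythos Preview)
Your proposal is correct and matches the paper's approach exactly: the paper does not give a standalone proof of this theorem but simply observes that $\hat y(E;W)$ is linear in $\mathrm{vec}(W)$ and invokes \citet{soudry2018implicit} directly. Your write-up just makes explicit the vectorization reduction (Frobenius norm $\leftrightarrow$ Euclidean norm, matrix margin constraint $\leftrightarrow$ linear constraint on $\mathrm{vec}(U)$, GD iterates commuting with $\mathrm{vec}$) that the paper leaves to the reader.
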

This theorem shows that the max-margin solution~\eqref{eq:maxmargin} characterizes the limiting behavior of gradient descent.  In the remainder, we will focus on the max-margin solution and derive all of the generalization guarantees as a consequence of the properties of the max-margin solution.

We next introduce the assumptions we need to ensure that the max-margin solution is well-behaved enough that we can later derive in-context learning guarantees.  We fix a probability threshold $\delta\in (0,1)$, and we shall show that all of our results hold with probability at least $1-\delta$ provided the absolute constant $C>1$ appearing below is a large enough universal constant (independent of $d$, $B$, $\Lambda$, and $\delta)$.  
\begin{enumerate}[label=(A\arabic*)]
    \item \label{a:R} $R$ is large enough so that 
    \begin{align*}
        R^2 \geq C^2 \sqrt{d \tr(\Lambda^2)}  \vee C^2 \l( \f{ \tr(\Lambda)}{d} \vee \sqrt{\f{ \tr(\Lambda^2)}{N}} \vee \snorm{\Lambda}_2 \r) \log(2B/\delta)
    \end{align*}
    \item \label{a:B} For some $c_B>0$, we have $B \geq c_B d$, and
    \item \label{a:d} $d \geq C \log^4(2B^2/\delta)$.
\end{enumerate}
The first assumption~\ref{a:R} guarantees that the signal-to-noise ratio is sufficiently large (recall that $\Lambda$ is the covariance of $z_{\tau,i}$ in the identity $x_{\tau,i} = y_{\tau, i} \mu_\tau + z_{\tau,i}$); provided $B$ is polynomial in $d$, this assumptions is satisfied for $\Lambda=I$ when $R\gg \sqrt d$.  The second assumption~\ref{a:B} specifies how many pre-training sequences we (pre-)train on; we shall see later on that the generalization error achieved for in-context examples is determined in part by how small $c_B\wedge 1$ is ($c_B\wedge 1$ larger ensures better generalization).  Note that we allow $c_B$ to be non-constant (so that $o_d(d)$ tasks as permitted).   The assumption~\ref{a:d} is needed for a technical reason, namely to ensure certain concentration bounds regarding sub-exponential random variables.   Finally, note the near-independence of these assumptions on the number of examples $N$ per task during training: in many situations (e.g., $\Lambda=I$), we only require a single demonstration (i.e., $N=1)$ of the form $(x_{\tau, 1}, y_{\tau, 1})$.

Finally, we introduce the distribution on the test-time in-context examples.  We allow for a more general distribution at test-time than the one used during pre-training, where the in-context examples can have label-flipping noise and a potentially smaller cluster mean, corresponding to a potentially smaller signal-to-noise ratio.  We assume, for $i=1, \dots, M$,
\begin{enumerate}
    \item $\tilde y_i \sim \unif(\{\pm 1 \})$.
    \item $y_i = \tilde y_i$ w.p. $1-p$ and $y_i = -\tilde y_i$ w.p. $p$ (label flipping noise).
    \item $\mu \sim \tilde R \cdot \unif(\mathbb S^{d-1})$.
    \item $z_i \sim \normal(0, \Lambda)$ and $x_i = \tilde y_i \mu + z_i$.
\end{enumerate}
We assume the random variables $\tilde y_i, \mu, z_i$ are all mutually independent.  We allow for the size of the cluster mean $\tilde R$ at test-time to potentially be different than the size of the cluster mean $R$ from the pretraining data, and indeed we shall see in Theorem~\ref{thm:generalization} below that we can permit a smaller $\tilde R$ than what we require for $R$.  Also note that we allow for a potentially different sequence length at test-time ($M$) than was observed during training ($N$); prior work in the linear regression setting showed that the transformer's behavior can depend quite differently on $M$ vs. $N$~\citep{zhang2024icl}.

\section{Main Results}

\subsection{Generalization}
Recall that the transformer with parameters $W$ makes predictions by embedding the set $\{(x_i, y_i)\}_{i=1}^M \cup \{(x_{M+1}, 0)\}$ into a matrix $E$ and then using $\sign(\hat y(E; W))$ as our prediction for $y_{M+1}$.  
Our goal is to understand the expected risk of the max-margin solution, i.e. the probability of misclassification of the test example $(x_{M+1}, y_{M+1})$: using~\eqref{eq:convex.linear.transformer.def},
\begin{align*}
   R(\wmm)  &= \P_{(x_i, y_i)_1^{M+1},\ \mu}( \sign(\hat y(E;\wmm) )\neq y_{M+1})  \\
   &= \P\l( \l[ \f 1 M \summ i M y_i x_i \r]^\top \wmm y_{M+1}x_{M+1} < 0 \r).\numberthis \label{eq:risk.wmm.def}
\end{align*}
Our main result regarding generalization is the following theorem.

\begin{samepage}
\begin{theorem}\label{thm:generalization}
\nopagebreak[4]
Let $\delta \in (0,1)$ be arbitrary.  Suppose that $p=1/2-c_p$ where $c_p\in (0,1/2]$ is an absolute constant.  There are absolute constants $C>1, c>0$ depending only on $c_p$ such that if assumptions~\ref{a:R},~\ref{a:B}, and~\ref{a:d} hold, then with probability at least $1-10\delta$ over the draws of $\{\mu_\tau, (x_{\tau,i}, y_{\tau, i})_{i=1}^{N+1}\}_{\tau=1}^B$, when sampling a new task $\{\mu, (x_i, y_i)_{i=1}^{M+1} \}$, the max-margin solution~\eqref{eq:maxmargin} satisfies
\begin{align*}
&\qquad \P_{(x_i, y_i)_{i=1}^{M+1},\ \mu}\big( \sign(\hat y(E;\wmm) )\neq y_{M+1} \big)  \\
&\leq \l(p + 2 \exp(-c M)\r)\ind(p>0) + 2 \exp \l( - c \rho \sqrt d\r) + 4 \exp \l( - \f{ c\rho  \tilde R}{ \snorm{\Lambda^{1/2}}_2} \r) + 2 \exp \l( - \f{ c\rho  M^{1/2} \tilde R^2}{\snorm{\Lambda}_2 \sqrt d} \r),
\end{align*}
where $ \rho := \f{ c_B \wedge 1}{\log^2(2B^2/\delta)}$.
\end{theorem}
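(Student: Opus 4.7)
The plan is to characterize $\wmm$ via its KKT conditions, show that the Lagrange multipliers are approximately uniform across the $B$ pre-training tasks, and then evaluate the resulting closed form on a fresh in-context prompt using standard quadratic-form concentration inequalities.

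Set $v_\tau := N^{-1}\sum_i y_{\tau,i}x_{\tau,i}$ and $u_\tau := y_{\tau,N+1}x_{\tau,N+1}$, so each constraint in~\eqref{eq:maxmargin} reads $\langle U, v_\tau u_\tau^\top\rangle_F \ge 1$. Since~\eqref{eq:maxmargin} is a convex quadratic program in $\mathrm{vec}(U)$, KKT gives the representation $\wmm = \sum_{\tau=1}^B \lambda_\tau v_\tau u_\tau^\top$ with $\lambda_\tau \ge 0$ and complementary slackness $\lambda_\tau(v_\tau^\top \wmm u_\tau - 1)=0$. Feasibility is checked by plugging in the candidate $U_0 = cR^{-2}I$: under~\ref{a:R}, Hanson--Wright applied to $v_\tau^\top u_\tau = \|\mu_\tau\|^2 + \mu_\tau^\top(y_{\tau,N+1}z_{\tau,N+1} + \bar z_\tau) + y_{\tau,N+1}\bar z_\tau^\top z_{\tau,N+1}$ (where $\bar z_\tau := N^{-1}\sum_i y_{\tau,i} z_{\tau,i}$) shows that $v_\tau^\top u_\tau \gtrsim R^2$ simultaneously for all $\tau$, so $U_0$ is feasible for a suitable $c$.

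The central step, and the main obstacle, is to show that all constraints are active and that the $\lambda_\tau$'s are jointly bounded from below and above by constant multiples of a common deterministic quantity. Following the KKT-based template of Frei et al., I would introduce the Gram matrix $G\in\mathbb R^{B\times B}$ with $G_{\tau,\tau'} := \langle v_\tau, v_{\tau'}\rangle\langle u_\tau, u_{\tau'}\rangle$, so that the active-constraint KKT system reads $G\lambda = \mathbf{1}$. Under~\ref{a:R} the signal dominates in both $v_\tau$ and $u_\tau$, so $G_{\tau,\tau}$ concentrates at $R^4$ while $G_{\tau,\tau'}$ concentrates at the sub-exponential $(\mu_\tau^\top\mu_{\tau'})^2$ of mean $R^4/d$. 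A joint Bernstein/Hanson--Wright union bound across all $B^2$ entries---the source of the $\log^2(2B^2/\delta)$ factor in $\rho$---together with~\ref{a:B} and~\ref{a:d}, yields $G = R^4(1-1/d)I_B + (R^4/d)J_B + E_B$ with $\|E_B\|_2$ controllably small in terms of $c_B\wedge 1$. Since $\mathbf 1$ is an eigenvector of the leading part with eigenvalue of order $R^4\max(1,B/d)$, inversion gives $\lambda_\tau > 0$ of uniform order $\lambda_\tau \asymp \rho R^{-4}\min(1, d/B)$, proving that every constraint is active and that $\|\wmm\|_F^2 = \sum_\tau \lambda_\tau \asymp \rho \min(B,d)/R^4$.

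For the test-time analysis, set $s_i := y_i\tilde y_i$ (so $\mathbb E[s_i]=2c_p$), $\bar s := M^{-1}\sum_i s_i$, and $\bar z := M^{-1}\sum_i y_i z_i$ (which is Gaussian with covariance $\Lambda/M$ conditional on $y_1,\dots,y_M$). The test quantity decomposes as
\[
\l[\tfrac{1}{M}\textstyle\sum_i y_i x_i\r]^\top \wmm\, y_{M+1}x_{M+1} = s_{M+1}\bar s\,\mu^\top\wmm\mu + \bar s\, y_{M+1}\mu^\top\wmm z_{M+1} + s_{M+1}\bar z^\top\wmm\mu + y_{M+1}\bar z^\top\wmm z_{M+1}.
\]
Hoeffding on $\bar s$ together with $\P(s_{M+1}=-1)=p$ contribute the $(p+2\exp(-cM))\ind(p>0)$ term; on the complementary event the leading term is at least $c_p\mu^\top\wmm\mu$. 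Substituting the KKT representation, $\mu^\top\wmm\mu = \sum_\tau \lambda_\tau(\mu^\top v_\tau)(u_\tau^\top\mu)$ is (to leading order) $\sum_\tau \lambda_\tau(\mu^\top\mu_\tau)^2$, a sum of $B$ independent sub-exponentials of mean $R^2\tilde R^2/d$; Bernstein concentrates this at $\asymp \rho\tilde R^2/R^2$ with deviation probability $2\exp(-c\rho\sqrt d)$. The three remaining cross terms are bilinear or linear Gaussian forms: conditioning on $\wmm$ and $\mu$, Gaussian tail bounds on the two linear-in-$z$ terms contribute $4\exp(-c\rho\tilde R/\|\Lambda^{1/2}\|_2)$, while Hanson--Wright on $\bar z^\top\wmm z_{M+1}$ (exploiting the $1/M$ scale of $\bar z$'s covariance and the Frobenius/spectral bounds on $\wmm$ from the previous paragraph) contributes $2\exp(-c\rho M^{1/2}\tilde R^2/(\|\Lambda\|_2\sqrt d))$. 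A union bound over these events and the Gram-concentration event delivers the theorem.
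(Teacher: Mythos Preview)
Your high-level architecture is right (KKT representation of $\wmm$, then concentrate the four pieces of the test quantity), but two of your central steps are either unnecessary or incorrectly argued, and the paper's route is materially simpler on both.

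First, the Gram-matrix inversion. You propose to solve $G\lambda=\mathbf 1$ and control the perturbation $E_B$ in operator norm so as to obtain uniform individual bounds $\lambda_\tau\asymp\rho R^{-4}\min(1,d/B)$. In the main regime $B\gtrsim d/\log^2(2B^2/\delta)$ this is not justified: from the entrywise bounds of Lemma~\ref{lemma:training.datasets} the off-diagonal entries of $G$ are $O(R^4\log^2(2B^2/\delta)/d)$, so even after subtracting the rank-one mean the row sums of $E_B$ are $O(R^4(B/d)\log^2)$, which is not small compared to the diagonal $\asymp R^4$ once $B\asymp d$; a genuine random-matrix argument would be needed, and the rows of $E_B$ are not independent. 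The paper sidesteps this entirely: it never claims all constraints are active and never bounds individual $\lambda_\tau$. It only needs the aggregates $\sum_\tau\lambda_\tau$, $\tr(W)$, and $\snorm{W}_F$ (Lemma~\ref{lemma:sum.lambdatau.bounds}). The upper bounds come from feasibility of $2R^{-2}I_d$; the lower bound on $\sum_\tau\lambda_\tau$ comes from a single row of the feasibility inequality $1\le\lambda_\tau\snorm{\hat\mu_\tau}^2\snorm{x_\tau}^2+\sum_{q\ne\tau}\lambda_q\,G_{\tau q}$ together with a short dichotomy (either some $\lambda_\tau$ is small, forcing $\sum_{q\ne\tau}\lambda_q$ large, or all $\lambda_\tau$ are large). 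This is where the $c_B\wedge 1$ and the $\log^2(2B^2/\delta)$ enter, with no matrix inversion.

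Second, your treatment of $\mu^\top\wmm\mu$. You call $\sum_\tau\lambda_\tau(\mu^\top v_\tau)(u_\tau^\top\mu)$ ``a sum of $B$ independent sub-exponentials''. It is not: the theorem's probability is over the fresh $\mu$ with the pre-training data (hence $W$, $\lambda_\tau$, $v_\tau$, $u_\tau$) held fixed, so all $B$ summands share the single random vector $\mu$ and are fully dependent; and even if you reversed the conditioning, each $\lambda_\tau$ depends on \emph{all} of $\{\mu_q\}_q$. The correct view---which the paper takes---is that $\mu^\top W\mu$ is a quadratic form in $\mu\sim\unif(\tilde R\cdot\mathbb S^{d-1})$ with deterministic matrix $W$, handled by a Hanson--Wright inequality for the sphere (Lemma~\ref{lemma:hanson.wright.uniform.on.sphere}, via Adamczak's convex-concentration version), giving concentration around $\tilde R^2\tr(W)/d$ with fluctuations governed by $\snorm{W}_F$. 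This is precisely why only the aggregate bounds on $\tr(W)$ and $\snorm{W}_F$ are needed, not individual $\lambda_\tau$'s. Your remaining pieces---Hoeffding on $\bar s$, Gaussian linear forms for $\mu^\top W\zeta$ and $\zeta^\top W\mu$, and Gaussian chaos for $\bar z^\top W z_{M+1}$---match Lemmas~\ref{lemma:fraction.noisy.examples}, \ref{lemma:mu.dot.W.g}, and~\ref{lemma:zeta.dot.q.zeta'} and are fine.
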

\end{samepage}
Let us make a few observations on the above theorem.   For simplicity let us assume that the covariance matrix $\Lambda=I_d$, so that $\snorm{\Lambda^{1/2}}=\snorm{\Lambda}=1$, and let us assume we want the results to hold with a fixed probability threshold of $\delta=0.001$.  Then assumptions~\ref{a:R} through~\ref{a:d} are satisfied for any $N\geq 1$ examples per task so long as $R = \snorm{\mu_{\tau}} \gg C\sqrt{d}$.  For the test error, there are a number of different regimes which require different considerations.
\begin{itemize}
    \item Due to the dependence on $\rho$, the test error degrades when $c_B$ gets smaller, and we require $c_B = \tilde \Omega( d^{-1/2})$ to achieve a non-vacuous generalization bound, i.e. $B=\tilde \Omega(d^{1/2})$ pre-training tasks suffices if $\tilde R$ and $M$ are large enough.\footnote{We also require $B = O(\mathrm{poly}(d))$, this comes from using a union bound over the pre-training tasks to ensure the max-margin solution is well-behaved.}
    \item While we require pre-training signal $R=\Omega(\sqrt d)$, to achieve test error near the noise rate at test-time the signal $\tilde R$ can be as small as $\tilde R = \tilde \Omega(1)$ provided $M$ is large enough and $\rho = \tilde{\Omega}(1)$. This means the signal-to-noise ratio at test-time can be significantly smaller than what was observed during pre-training.  
    \item When $p=0$, there is no label-flipping noise.  Here it is possible to have as few as one example per task $(M=1)$ provided $\tilde R = \tilde \Omega(d^{1/4} )$, and $\rho = \tilde \Omega(1)$.  
    \item When $p>0$, there is label-flipping noise.  The same analysis holds as in the $p=0$ setting except that we always require $M = \tilde \Omega(1)$, which makes intuitive sense (with label noise, one must see more than one example to guarantee generalization w.h.p.).  Notably, there was no label noise during pre-training, yet the transformer can generalize under label noise at test time. 
\end{itemize}


\subsection{Benign Overfitting}

In this subsection we shall assume that the covariance matrix $\Lambda=I_d$.  This is needed for technical reasons of the proof.  We could more accommodate a covariance matrix of the form $\Lambda$ where $c_1 I \preceq \Lambda \preceq c_2 I$ but we just present the case $\Lambda=I$ for simplicity. 

Our goal will be to show that the max-margin transformer can exhibit \textit{benign overfitting in-context}.  By this we mean that in its forward pass, the transformer memorizes all of the in-context examples, yet still generalizes well for new test examples.  To make this more precise, for a sequence of training examples $\{(x_i, y_i)\}_{i=1}^M$, denote $E(x)$ as the embedding matrix corresponding to the sequence $(x_i, y_i)_{i=1}^M, (x, 0)$; recall that the examples $(x_i,y_i)_{i=1}^M$ are used to formulate predictions for $x$.  We say that the transformer exhibits benign overfitting in-context if,
\begin{itemize}
    \item All training examples are memorized: with high probability (over $\mu$ and $(x_i,y_i)_{i=1}^M$), for each example $(x_k, y_k)$, $k\leq M$, we have $\sign(\hat y(E(x_k); W)) = y_k$.
    \item Test examples are classified near-optimally: $\P_{\mu,\, (x_i,y_i)_{i=1}^{M+1}}(\sign(\hat y(E(x_{M+1});W) = y_{M+1}) \leq p + \eps$ for some vanishing $\eps$.  
\end{itemize}

\begin{figure}[t]
    \centering
    \begin{minipage}[t]{0.32\textwidth}
        \centering
        \includegraphics[width=\textwidth]{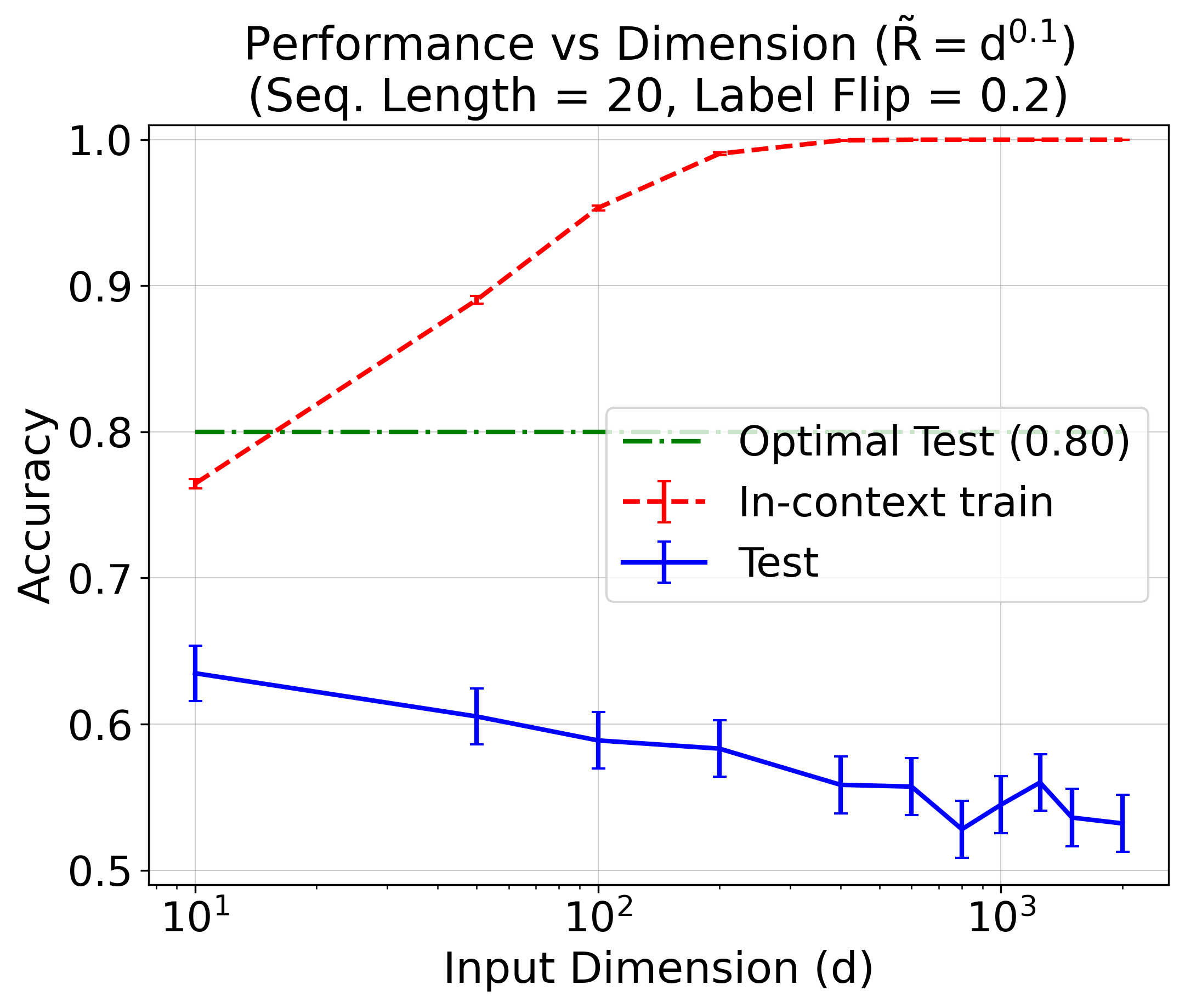}
    \end{minipage}
    \hfill
    \begin{minipage}[t]{0.32\textwidth}
        \centering
        \includegraphics[width=\textwidth]{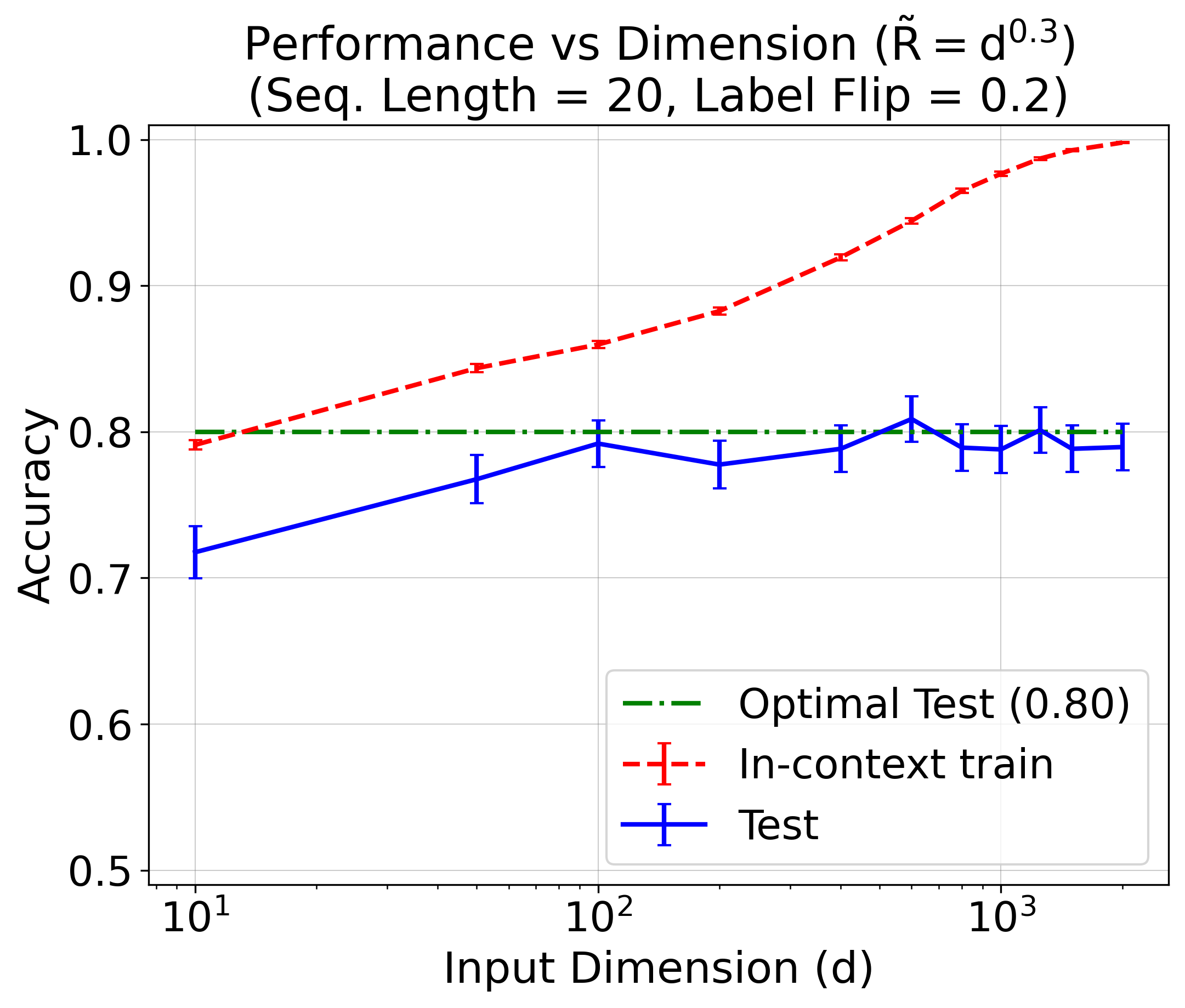}
    \end{minipage}
    \hfill
    \begin{minipage}[t]{0.32\textwidth}
        \centering
        \includegraphics[width=\textwidth]{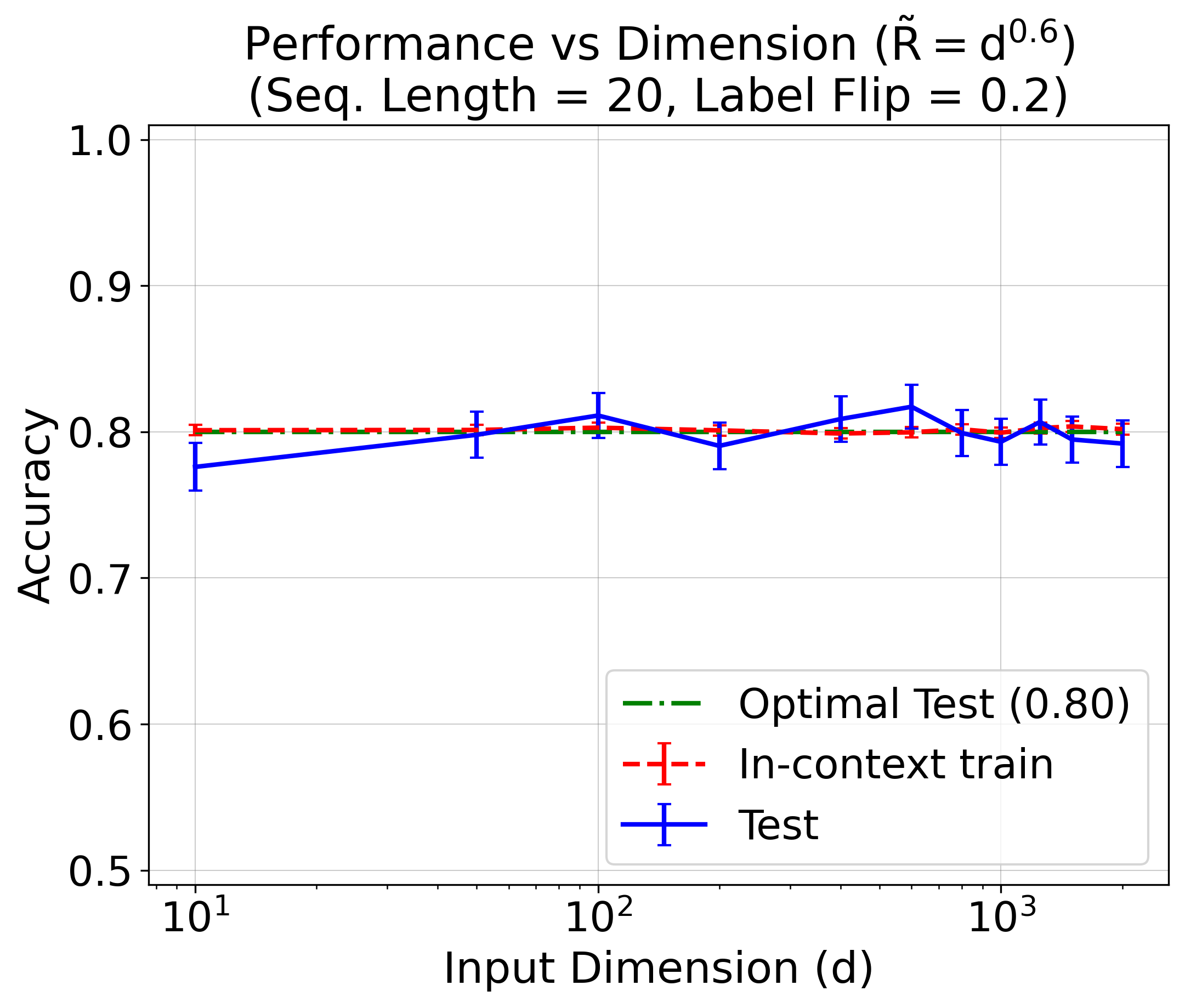}
    \end{minipage}
    \caption{Cluster separation $\tilde R$ affects both (in-context) train and test accuracy, with small $\tilde R$ leading to overfitting, and large $\tilde R$ leading to better test accuracy; in between, benign overfitting occurs.}
    \label{fig:Rtilde}
\end{figure}

In Figure~\ref{fig:benign.overfitting.incontext} we provide a schematic of what this phenomenon would look like in language tasks, where a user provides a sequence of country-capital pairs, some of which have errors, and the transformer repeats these errors at test time for those countries which it has seen with noisy labels, yet still generalizes well for never-before-seen countries.

To prove this phenomenon holds, we need to show both memorization and generalization. Theorem~\ref{thm:generalization} in the previous subsection proved the generalization part, thus all that remains is to show that memorization can co-occur with generalization.  The next theorem demonstrates that this is possible if the dimension $d$ is sufficiently large with respect to the number of in-context examples $M$.  

\begin{theorem}\label{thm:benignoverfitting}
\nopagebreak[4]
Assume $\Lambda=I$ and let $\delta \in (0,1)$ be arbitrary.  Suppose $p=1/2-c_p$ where $c_p\in (0,1/2)$ is an absolute constant.  There are absolute constants $C>1$, $c>0$ depending only on $c_p$ such that if assumptions~\ref{a:R} through~\ref{a:d} hold, 
then the following holds.  With probability at least $1-10\delta$ over the draws of $\{\mu_\tau, (x_{\tau,i}, y_{\tau, i})_{i=1}^{N+1}\}_{\tau=1}^B$, for all $\tau\in [B]$, when sampling a new task $\{\mu, (x_i, y_i)_{i=1}^{M+1} \}$, the max-margin solution~\eqref{eq:maxmargin} satisfies,
\begin{itemize}
    \item In-context training examples are memorized: for $ \rho := \f{ c_B \wedge 1}{\log^2(2B^2/\delta)}$,
    \begin{align*}
    &\qquad \P_{(x_i, y_i)_{i=1}^{M},\ \mu}\l( \exists k \in [M]\ \text{ s.t.}\ \sign(\hat y(E(x_k); \wmm)) \neq y_k \r) \\
    &\leq 4 M  \exp \l( - \f{ c \rho \sqrt d}{\sqrt M} \r)+ 8M \exp \l( - \f{ c \rho d}{ M (\tilde R^2 \vee \tilde R)} \r).
\end{align*}
    \item In-context test example achieves test error close to the noise rate:
\begin{align*}
&\qquad \P_{(x_i, y_i)_{i=1}^{M+1},\ \mu}\big( \sign(\hat y(E(x_{M+1});\wmm) )\neq y_{M+1} \big)  \\
&\leq p + 2 \exp(-c M) + 2 \exp \l( - c \rho \sqrt d\r) + 4 \exp \l( - c \rho \tilde R \r) + 2 \exp \l( - \f{ c \rho M^{1/2} \tilde R^2}{ \sqrt d} \r),
\end{align*}
\end{itemize}
\end{theorem}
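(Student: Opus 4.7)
The generalization (second) bullet is immediate from Theorem~\ref{thm:generalization}: with $\Lambda = I$ we have $\snorm{\Lambda^{1/2}}_2 = \snorm{\Lambda}_2 = 1$, and since $p = 1/2 - c_p > 0$ the indicator in Theorem~\ref{thm:generalization} equals one, so the bound specialises exactly to the stated form. All of the new content therefore lies in the memorization bullet.

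My plan is to exploit the KKT representation of $\wmm$. Writing $a_\tau := \frac{1}{N}\sum_{i=1}^N y_{\tau,i} x_{\tau,i}$ and $b_\tau := y_{\tau,N+1} x_{\tau,N+1}$, each constraint in~\eqref{eq:maxmargin} is the bilinear inequality $\langle U, b_\tau a_\tau^\top\rangle_F \geq 1$, so stationarity forces
\[
\wmm \;=\; \sum_{\tau=1}^B \lambda_\tau\, b_\tau a_\tau^\top, \qquad \lambda_\tau \geq 0.
\]
The key structural fact I would borrow from the proof of Theorem~\ref{thm:generalization}---the transformer analogue of the ``support-vector average'' arguments of~\citet{frei2023implicit,frei2023benignkkt}---is that these dual weights are nearly uniform across $\tau \in [B]$. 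Combined with the concentration $\frac{1}{B}\sum_\tau \mu_\tau \mu_\tau^\top \approx (R^2/d) I_d$ (together with the averaging-out of the residual matrices $\mu_\tau \bar z_\tau^\top$, $y_{\tau,N+1} z_{\tau,N+1} \mu_\tau^\top$, $y_{\tau,N+1} z_{\tau,N+1} \bar z_\tau^\top$ across $\tau$, where $\bar z_\tau := \frac{1}{N}\sum_i y_{\tau,i} z_{\tau,i}$), this yields an approximation $\wmm \approx (\gamma R^2/d)\, I_d + (\text{small residual})$ for some scalar $\gamma > 0$ determined by the margin.

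For fixed $k \in [M]$, with $a := \frac{1}{M}\sum_{i=1}^M y_i x_i$ and $y_k^2 = 1$, I would decompose
\[
y_k\, \hat y(E(x_k); \wmm) \;=\; \frac{1}{M}\Bigl[\, x_k^\top \wmm x_k \;+\; \sum_{i \neq k} y_i y_k\, x_i^\top \wmm x_k\, \Bigr].
\]
The diagonal term is $\approx (\gamma R^2/d)\snorm{x_k}^2 \asymp \gamma R^2$, since $\snorm{x_k}^2 \approx d$ in the benign regime where $\tilde R \ll \sqrt d$. Each off-diagonal term $x_i^\top \wmm x_k$ for $i \neq k$ involves essentially orthogonal independent noise $z_i, z_k$, and a Hanson--Wright-type bound should give fluctuations of order $\gamma R^2/\sqrt d$ up to $\tilde R$-dependent corrections; summing over $i \neq k$ the noise part scales like $\gamma R^2 \sqrt{M/d}$ while signal-interacting contributions scale like $\gamma R^2 \tilde R / \sqrt d$ or $\gamma R^2 \tilde R^2 / d$. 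The diagonal dominates whenever $d \gg M$ and $d \gg M(\tilde R^2 \vee \tilde R)$, exactly matching the two exponents in the stated bound, and a union bound over $k \in [M]$ supplies the $M$-prefactor.

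The main technical obstacle is carrying two levels of concentration at once: the near-uniformity of the dual multipliers $\lambda_\tau$ (which lives on the pretraining randomness $\{\mu_\tau, z_{\tau,\cdot}\}$) has to be composed with sub-exponential concentration of the bilinear forms $x_i^\top b_\tau a_\tau^\top x_k$ in the independent test-time randomness $(\mu, z_\cdot)$, while tracking that the signal direction $\mu$ is shared across all test-time $x_i$. I expect the $\exp(-c\rho \sqrt{d/M})$ term to arise from the purely noise--noise cross-contributions $z_i^\top \wmm z_k$, and the $\exp(-c\rho\, d/(M(\tilde R^2 \vee \tilde R)))$ term from signal--noise and signal--signal contributions such as $\mu^\top \wmm z$ and $\mu^\top \wmm \mu$, with the $\tilde R^2 \vee \tilde R$ reflecting the split between the regimes $\tilde R \geq 1$ (quadratic in $\tilde R$) and $\tilde R < 1$ (linear in $\tilde R$).
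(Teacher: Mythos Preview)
Your overall decomposition $y_k \hat y(E(x_k);\wmm) = \frac{1}{M}\bigl[x_k^\top \wmm x_k + \sum_{i\neq k} y_iy_k\, x_i^\top \wmm x_k\bigr]$, the union bound over $k\in[M]$, and your attribution of the two exponential terms all match the paper. The divergence is in how the bilinear forms are controlled.

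You propose to first establish $\wmm \approx (\gamma R^2/d)\, I_d$ by borrowing ``near-uniformity of the dual multipliers $\lambda_\tau$'' from the proof of Theorem~\ref{thm:generalization}. That proof contains no such statement: Lemma~\ref{lemma:sum.lambdatau.bounds} only delivers matching bounds on the \emph{aggregate} quantities $\sum_\tau \lambda_\tau$, $\tr(\wmm)$, and $\snorm{\wmm}_F$, never per-$\tau$ control. Pushing your route through would require proving individual-$\lambda_\tau$ bounds and then a separate spectral-concentration argument for $\sum_\tau b_\tau a_\tau^\top$---work the paper never does, and whose proof sketch explicitly disclaims (``$W$ is not a multiple of the identity matrix so this proof technique does not directly apply'').

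The paper sidesteps this entirely: it treats $\wmm$ as an opaque matrix and uses only the two scalars $\tr(\wmm)$ and $\snorm{\wmm}_F$. The dominant positive contribution to $x_k^\top \wmm x_k$ is identified as $z_k^\top \wmm z_k$, which by standard Hanson--Wright concentrates around $\tr(\wmm)$ with deviations of order $\snorm{\wmm}_F$; since $\tr(\wmm)/\snorm{\wmm}_F \gtrsim \rho\sqrt d$ by Lemma~\ref{lemma:sum.lambdatau.bounds}, this already gives the $\exp(-c\rho\sqrt d)$ piece. Every remaining term---$\mu^\top \wmm \mu$, the cross terms $\mu^\top \wmm z_k$ and $z_k^\top \wmm \mu$, and the entire off-diagonal sum (after writing $\sum_{i\neq k} y_i x_i = N_k\mu + \sqrt{M-1}\,g_k$)---is shown to be at most a small fraction of $\tr(\wmm)$ using the same two scalars together with Lemmas~\ref{lemma:hanson.wright.uniform.on.sphere} and~\ref{lemma:mu.dot.W.g}. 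This is strictly simpler than your route and dissolves the ``two-level concentration'' obstacle you flagged as the main difficulty: once the pretraining event of Lemma~\ref{lemma:sum.lambdatau.bounds} holds, all the test-time concentration is done against a fixed matrix with known trace and Frobenius norm.
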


The claim regarding generalization in the above theorem is the same as in Theorem~\ref{thm:generalization} (we focus here on the case $p>0$ since label noise is essential for \textit{overfitting}), and the same comments following that theorem apply here as well.  But a natural question is whether one can satisfy both memorization and near-noise-rate test error, i.e. benign overfitting.  This is indeed possible, and can be seen most easily in the following setting: let $\delta=0.001$, let $M$ be a large constant satisfying $p + 2 \exp(-cM)\leq p+0.0001$, and assume $B=d$ so $\rho = 1/\log^2(2d^2/\delta)$.  If $\tilde R = d^\beta$ for $\beta \in (1/4, 1/2)$, then w.p. at least 99.9\% over the pre-training data, since $\beta < 1/2$ memorization occurs with probability $1-o_d(1)$, and since $\beta>1/4$ the test error for the fresh test example $x_{M+1}$ is at most $p + 0.0001 + o_d(1)$.  More generally, one can see that memorization occurs when $d \gg M (\tilde R^2 \vee \tilde R)$ and $d\gg M$, i.e. when the dimension is large relative to the number of samples and when the signal of the test-time sequence is not too large relative to the dimension divided by the number of in-context samples.  

While Theorem~\ref{thm:generalization} and Theorem~\ref{thm:benignoverfitting} apply for the infinite-time limit of gradient descent, a natural question is whether gradient descent trained for finite steps has similar behavior, and we indeed find that it does.  Figure~\ref{fig:Rtilde} examines the role of high dimensionality and $\tilde R$, illustrating three qualitatively different phenomena: overfitting without generalization, when $\tilde R$ is small and $d/M$ is large; benign overfitting, when $\tilde R$ is in the sweet spot for generalization; and optimal test performance without overfitting, which occurs when $\tilde R$ is very large.  In Figure~\ref{fig:batch}, we show the effect of the number of pre-training tasks $B$ on the in-context test accuracy, again showing improved performance for more pre-training tasks with fewer pre-training tasks needed for optimal performance when $\tilde R$ is large, as suggested by Theorem~\ref{thm:benignoverfitting}.  In both figures, we have GD-trained networks with step size $\eta=0.01$ for 300 steps.  Details on experiments and a link to our codebase are provided in Appendix~\ref{app:experiments}.

\begin{figure}     
\centering
        \begin{minipage}[t]{0.32\textwidth}
            \centering
            \includegraphics[width=\textwidth]{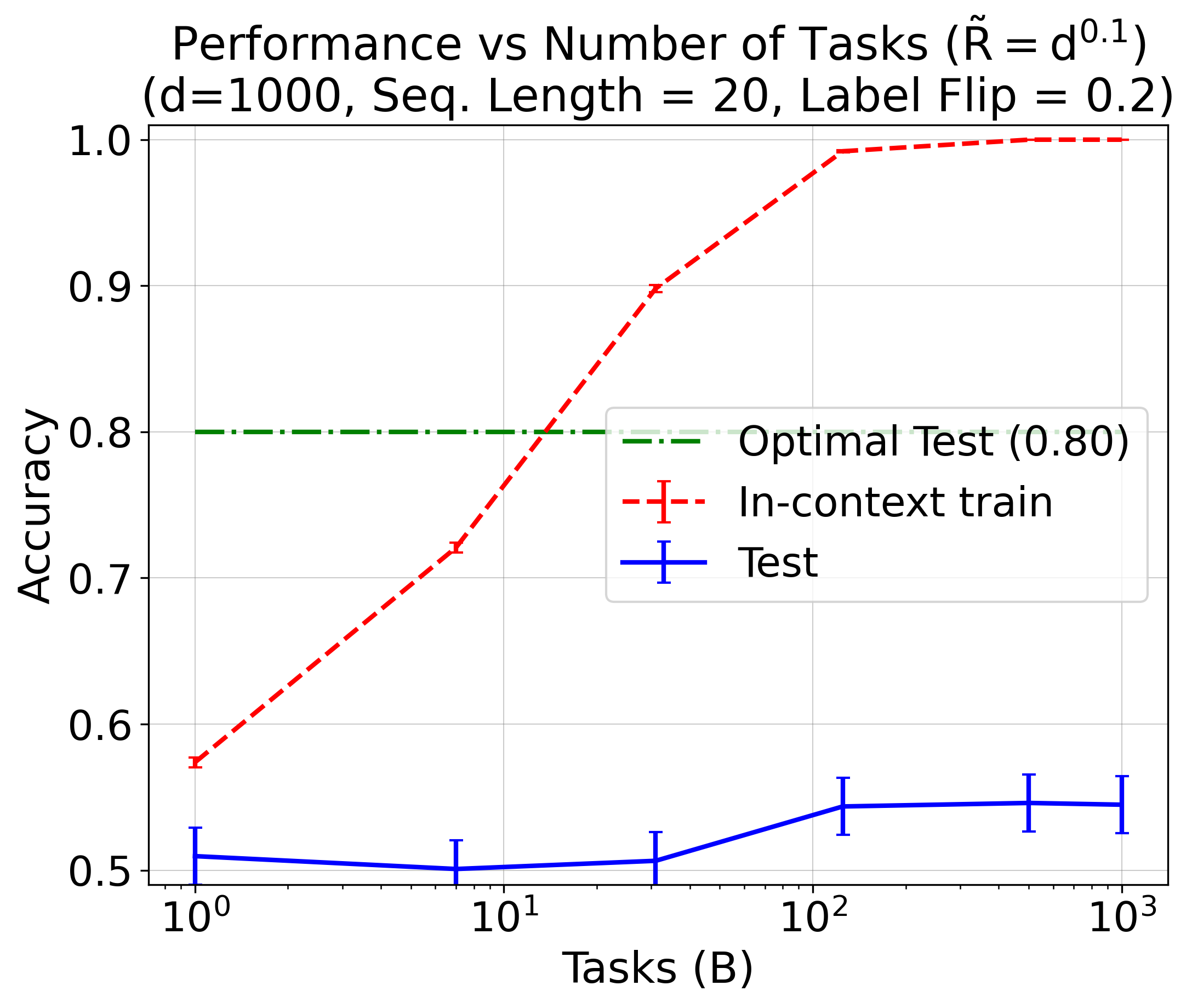}
        \end{minipage}
        \hfill
        \begin{minipage}[t]{0.32\textwidth}
            \centering
            \includegraphics[width=\textwidth]{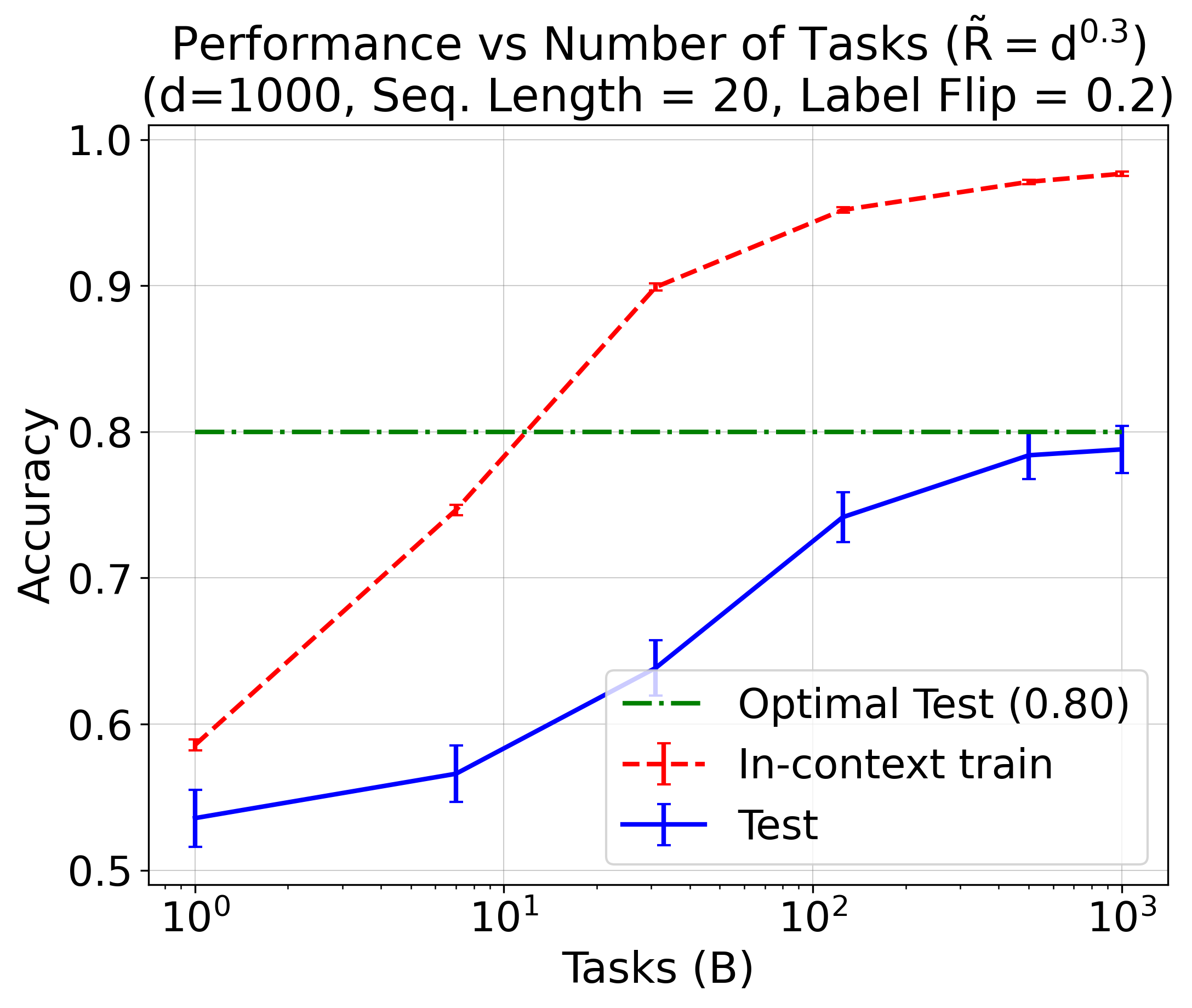}
        \end{minipage}
        \hfill
        \begin{minipage}[t]{0.32\textwidth}
            \centering
            \includegraphics[width=\textwidth]{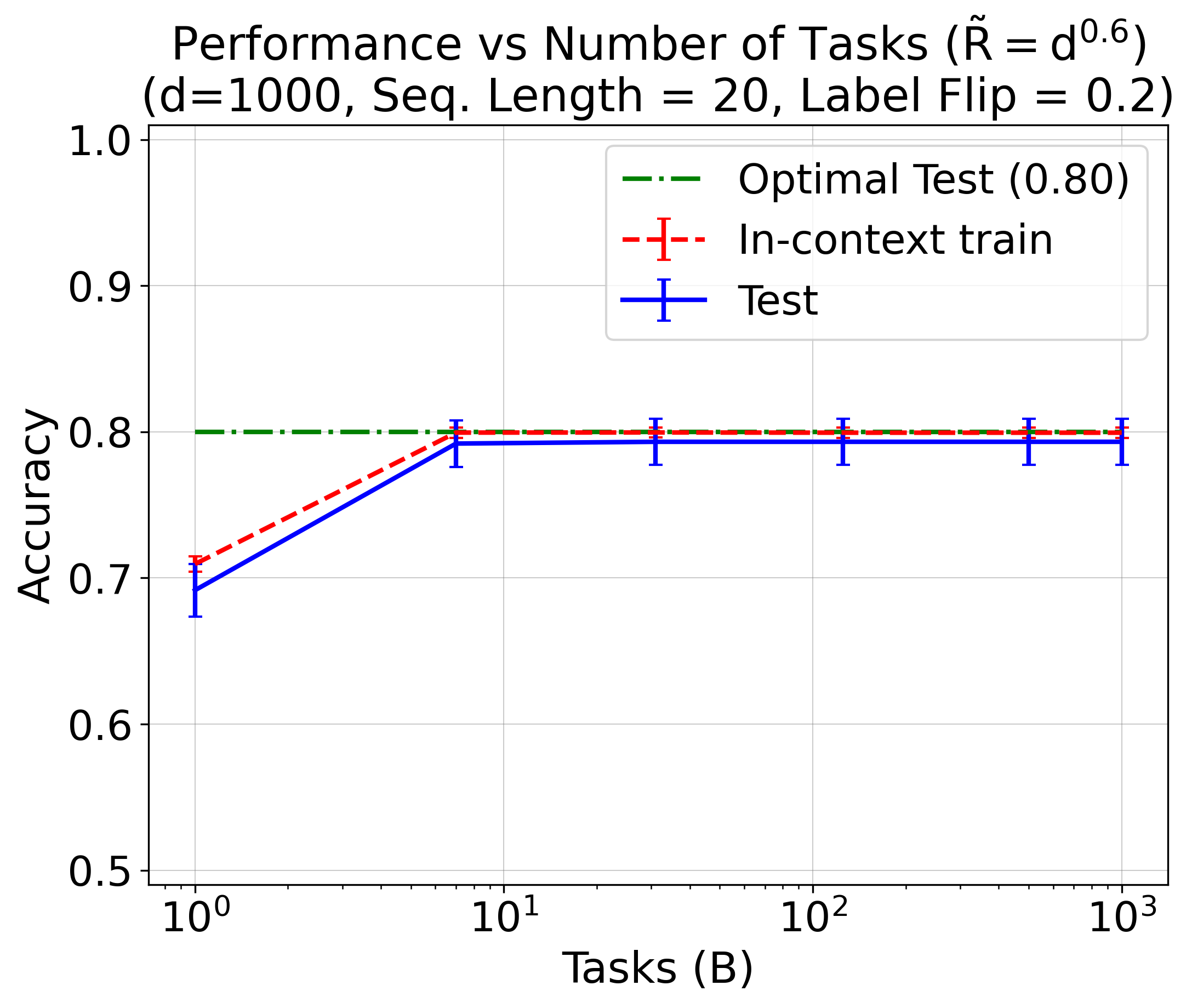}
        \end{minipage}
    \centering
    \caption{In-context test accuracy improves as the number of pre-training tasks (batch size) $B$ approaches the ambient dimension $d$, provided $\tilde R$ is large enough, with fewer $B$ needed for large $\tilde R$.}
    \label{fig:batch}
\end{figure}

\section{Proof Sketch}
We will first discuss a sketch of the proof of generalization given in Theorem~\ref{thm:generalization} when $\Lambda=I$.  For notational simplicity let us denote $W = \wmm$, $\hat \mu := \f 1 M \summ i M y_i x_i$, and let us drop the $M+1$ subscript so that we denote $(x_{M+1}, y_{M+1}, \tilde y_{M+1}) = (x,y, \tilde y)$.  Then the test error is given by the probability of the event,
\[ \{ \sign(\hat y(E; W) )\neq y_{M+1} \} = \{ \hat \mu^\top W yx \leq 0,\, y = \tilde y \} \cup \{ \hat \mu^\top W yx \leq 0,\, y = -\tilde y \}.\]
The test error can thus be bounded as
\begin{align*}
\P(\hat y(E; W) \neq y_{M+1}) \leq \P( \hat \mu^\top W \tilde yx\leq 0) + \P(y = -\tilde y) = \P( \hat \mu^\top W \tilde yx \leq 0) + p.
\end{align*}
Let us denote the examples $(x_i, y_i)$ in the test-time sequence which have clean labels $y_i=\tilde y_i$ as $i\in \calC$, while those with noisy labels $y_i = -\tilde y_i$ as $i\in \calN$.  Then $\summ i M y_i x_i = (|\calC| - |\calN|)\mu + \summ i M y_i z_i = (M-2|\calN|)\mu + \summ i M y_i z_i$.   If we denote $\hat p := |\calN|/M$ then by standard properties of the Gaussian, we have for $z, z'\iid \normal(0,I)$,
\begin{align*}
\hat \mu &\eqd \l(1 - 2\hat p\r) \mu + M^{-1/2} z ,\\
\tilde yx &\eqd \mu + z'.
\end{align*}
 We thus have
\begin{align*}
&\qquad        \P(\hat y(E; W) \neq y_{M+1})  \\
        &\leq p + \P\Big( \big((1-2 \hat p) \mu + M^{-1/2} z\big )^\top W (\mu + z') < 0\Big) \\
    &= p + \P\Big( (1-2 \hat p) \mu^\top W \mu  < - (1-2 \hat p) \mu^\top W z' - M^{-1/2} z^\top W \mu - M^{-1/2} z^\top W z'\Big).
\end{align*}
This decomposition allows for us to show the test error is near the noise rate $p$ if we can show that $(1-2\hat p) \mu^\top W \mu$ is large and positive while all of the other terms involving $\mu^\top W z'$, $z^\top W \mu$, and $z^\top W z'$ are small in comparison.  Now if $\mu$ were a standard Gaussian, or if it were sub-Gaussian with independent components, then $\mu^\top W \mu$ would be close to its mean $\tr(W)$, with fluctuations determined from its mean determined by $\snorm{W}_F$, via a standard Hanson-Wright inequality.  However, since $\mu \sim \unif(\tilde R \cdot \mathbb S^{d-1})$, the components of $\mu$ are not independent, and so we derive a modified version of Hanson-Wright (Lemma~\ref{lemma:hanson.wright.uniform.on.sphere}) which applies in this setting where we show $\mu^\top W \mu$ is close to its mean $\f{\tilde R^2}{d} \tr(W)$ with fluctuations from its mean determined by $\snorm{W}_F$.  Thus, in order to show that the max-margin solution achieves (in-context) test error close to the noise rate $p$, we must show the following:
\begin{itemize}
\item A lower bound for $\tr(W)$, so that the mean of $\mu^\top W \mu$ is large and positive.
\item An upper bound on $\snorm{W}_F$, so that we can control the fluctuations of $\mu^\top W \mu$ from its mean.
\item An upper bound on $\hat p$ so that $(1-2\hat p)$ is positive.
\item Upper bounds on the quantities $|(1-2\hat p) \mu^\top W z'|$, $|M^{-1/2} z^\top W \mu|$ and $|M^{-1/2} z^\top W z'|$ so that these quantities are smaller than the positive term coming from the mean of $\mu^\top W \mu$.  These, in turn, will require upper bounds on $\snorm{W}_F$.  
\end{itemize}
Thus provided we have control over $\tr(W)$ and $\snorm{W}_F$, we can apply straightforward concentration inequalities to show that the (in-context) test error is near the noise rate.  So let us now describe how to derive such bounds. 

The starting place for our analysis is the definition of the max-margin solution~\eqref{eq:maxmargin}.  For notational simplicity let us define $\hat \mu_\tau := \f 1 N \summ i n y_{\tau,i} x_{\tau,i}$ and let us denote $x_{\tau,N+1}, y_{\tau,N+1}$ as $x_\tau, y_\tau$ so that the max-margin solution can be written
\begin{equation} 
W := \argmin \{ \snorm{U}_F^2 : \hat \mu_\tau^\top U y_\tau x_\tau  \geq 1,\, \forall \tau=1, \dots ,B\}.\label{eq:maxmargin.hatmu}
\end{equation}
Since $\nabla \hat y(E_\tau; W) = \hat \mu_\tau x_\tau^\top$, the KKT conditions imply that there exist $\lambda_1, \dots, \lambda_B\geq 0$ such that
\begin{equation}\label{eq:W.from.kkt}
W = \summ \tau B \lambda_\tau y_\tau \hat \mu_\tau x_\tau^\top,
\end{equation}
and moreover we have $\lambda_\tau = 0$ whenever $y_\tau \hat \mu_\tau^\top W x_\tau \neq 1$.  By a careful analysis of the KKT conditions (see Lemma~\ref{lemma:sum.lambdatau.bounds}), we can derive nearly-matching upper and lower bounds for $\sum_\tau \lambda_\tau$, $\tr(W)$, and $\snorm{W}_F$: if we assume that $c_B>0$ is an absolute constant (i.e. the number of tasks $B$ is greater than a constant multiple of the input dimension $d$) then these bounds take the form
\begin{equation} \label{eq:trW.snormWF.bounds}
\summ \tau B \lambda_\tau = \tilde \Theta(d/R^4),\qquad \tr(W) = \tilde \Theta(d/R^2),\qquad \snorm{W}_F = \tilde \Theta(\sqrt d /R^2).
\end{equation}
These bounds, together with the concentration inequalities outlined above, suffice for proving Theorem~\ref{thm:benignoverfitting}.

As for the possibility of benign overfitting in-context shown in Theorem~\ref{thm:benignoverfitting}, since Theorem~\ref{thm:generalization} guarantees generalization, the only task that remains to be shown is that \textit{overfitting} occurs: namely, we want to ensure that for every example $(x_k, y_k)$, if we denote $E(x_k)$ as the embedding matrix with input sequence $(x_i, y_i)_{i=1}^M$ but with test example $x_k$, then we want to show that
\[ \sign \l( \hat y(E(x_k); W) \r) = y_k \Longleftrightarrow \hat \mu^\top W y_k x_k > 0.\]
Now, since $\tr(c I) = c d$ and $\snorm{cI}_F = c\sqrt d$, the identities in~\eqref{eq:trW.snormWF.bounds} suggest that $W$ has properties similar to that of a scaled identity matrix.  If $W$ were indeed a scaled identity matrix, then the goal would be to show that
\[ M\hat \mu^\top y_k x_k = \l( \summ i M y_i x_i \r)^\top y_k x_k = \snorm{x_k}^2 + \sum_{i\neq k} \sip{y_i x_i}{y_k x_k} \geq \snorm{x_k}^2 - \sum_{i\neq k} |\sip{x_i}{x_k}|> 0.\]
That is, we would need to ensure that the examples $\{x_i\}_{i=1}^M$ are nearly-orthogonal in a particular sense~\citep{frei2023implicit}.  This occurs provided the ambient dimension $d$ is much larger than the number of examples $M$ and the signal strength $\tilde R^2 = \snorm{\mu}^2$, and has been shown in prior work on benign overfitting in neural networks~\citep{frei2022benign,frei2023benignkkt}.  In our setting, $W$ is not a multiple of the identity matrix so this proof technique does not directly apply, but at a high level the proof ideas are similar, and appear in Section~\ref{sec:app.benign}.

\section{Conclusion}

In this work we developed task complexity and sample complexity guarantees for in-context learning class-conditional Gaussian mixture models with a single-layer linear transformer architecture. We analyzed the implicit regularization of gradient descent to characterize the algorithm implemented by the transformer after pre-training.  This allowed for us to quantify how many in-context samples are needed in order to achieve small test error, which to the best of our knowledge has not been explored in the classification setting prior to this work.  We also showed how the trained transformer can exhibit benign overfitting in-context, i.e. in its forward pass the transformer can memorize noisy examples yet still achieve near-optimal test error.

There are a number of natural directions for future research.  We relied upon a convex linear transformer architecture which allows for us to identify the pre-trained transformer as the global max-margin solution in parameter space.  More general linear transformer architectures are not convex but are often homogeneous in their parameters.  In this setting we thus know~\citep{lyu2019gradient,ji2020directional} that GD has an implicit bias towards first-order stationary (KKT) points of the max-margin problem in parameter space.  It may be possible to analzye the consequences of the KKT conditions to develop generalization guarantees~\citep{safran2022effective,frei2023doubleedgedswordimplicitbias,frei2023benignkkt}.  For softmax-based transformer architectures, it would be interesting to see if prior works on implicit regularization of GD over these architectures~\citep{tarzanagh2023max,tarzanagh2023transformers,thrampoulidis2024implicitbiasnexttokenprediction,vasudeva2024implicit} can be used to understand in-context learning.  

Finally, we assumed that the signal-to-noise ratio in the pre-training data was quite large (see~\ref{a:R}), and that the pre-training data did not have noisy labels.  It would be interesting to understand if pre-training on more difficult or noisy data would result in a qualitatively different algorithm implemented by the pre-trained transformer. We believe the theoretical analysis of pre-training with noisy labels would be significantly different:  if we aimed to use the implicit regularization approach based on analyzing solutions to our equation~\eqref{eq:maxmargin}, in this setting there may not be a $U$ which satisfies the constraints of~\eqref{eq:maxmargin}, and even if such a $U$ does exist then we would be investigating the behavior of a pre-trained transformer which has memorized noisy labels.  In this setting it is not clear that generalization is possible.

\subsection*{Acknowledgements}

Part of this work was completed while SF was a visitor at the Modern Paradigms in Generalization Program at the Simons Institute for the Theory of Computing.  GV is supported by the Zuckerman STEM Leadership Program, and by research grants from the Center for New Scientists at the Weizmann Institute of Science, and the Shimon and Golde Picker -- Weizmann Annual Grant.

\newpage 
{\hypersetup{linkcolor=Black}\tableofcontents}
\appendix 

\begin{table}[t]
\caption{Notation used throughout the paper.}
\label{tab:notation}
\centering
\begin{tabular}{ll}
\hline
\textbf{Symbol} & \textbf{Description} \\
\hline
$\mu$ & Cluster mean, $\mu \in \R^d$\\
$x$ & Features, $x\in \R^d$, $x_{\tau, i} = y_{\tau,i} \mu_{\tau} + z_{\tau,i}$ \\
$y$ & Labels, $y\in \{ \pm 1 \}$ \\
$z$ & Noise variables, $z\in \R^d$ \\
$d$ & Feature dimension \\
$\Lambda$ & Cluster noise covariance matrix \\
$\delta$ & Probability of failure \\
$R$ & Norm of cluster means during pre-training \\
$\tilde{R}$ & Norm of cluster means at test time \\
$B$ & Number of pre-training tasks \\
$c_B$ & Quantity such that $B \geq c_B d$ \\
$\rho$ & $(c_B \wedge 1) / (\log^2(2B^2/\delta))$, appears in generalization bounds \\
$N$ & Number of samples per pre-training task \\
$M$ & Number of samples per test-time task \\
$p$ & Label noise flipping rate \\
$E$ & Data matrix $E = \begin{pmatrix}
    x_1 & x_2 & \cdots & x_N & x_{N+1} \\
    y_1 & y_2 & \cdots & y_N & 0
\end{pmatrix} \in \mathbb{R}^{(d+1)\times (N+1)}$ \\
$\hat{\mu}$ & Mean predictor: $\frac{1}{M} \sum_{i=1}^M y_i x_i$ \\
$\hat{y}(E(x); W)$ & Neural net output: $\frac{1}{M} \sum_{i=1}^M y_i x_i^T W x = \hat{\mu}^T W x$ \\
$\hat{L}(W)$ & Logistic loss \\
\hline
\end{tabular}
\end{table}

\section{Properties of the pre-training datasets}
We begin by developing guarantees for various properties of the pre-training datasets, which will form the basis for understanding properties of the max-margin solution~\eqref{eq:maxmargin}. 
\begin{lemma}\label{lemma:training.datasets}
There is an absolute constant $c_0>1$ such that with probability at least $1-10\delta$ over the draws of $\{\mu_\tau, (x_\tau, y_\tau), (x_{\tau,i}, y_{\tau, i})_{i=1}^N\}_{\tau=1}^B$, for all $\tau\in [B]$ and $q\neq \tau$,
\begin{align*}
    \l| \snorm{\hat \mu_\tau}^2 - R^2 \r| &\leq \f{ c_0 R \sqrt{\tr(\Lambda)} \log(2B/\delta)}{\sqrt{Nd}} + 4 \f{ \tr(\Lambda) \vee c_0 \snorm{\Lambda}_2 \log(2B/\delta)}{N},\\
    |\snorm{x_\tau}^2 -R^2| &\leq \f{ 2 c_0 R \sqrt{\tr(\Lambda)} \log(2B/\delta)}{\sqrt d} + 4 \l( \tr(\Lambda) \vee c_0 \snorm{\Lambda}_2 \log(2B/\delta)\r),\\
    |\sip{\hat \mu_q}{\hat \mu_\tau}| &\leq c_0 \l( \f{ R^2}{\sqrt d} + \f{ R \sqrt{\tr(\Lambda)}}{\sqrt {Nd}} + \f{ \sqrt{\tr(\Lambda^2)}}N \r) \log(2B^2/\delta),\\
    |\sip{x_\tau}{x_q}| &\leq  c_0 \l( \f{ R^2}{\sqrt d} + \f{ R \sqrt{\tr(\Lambda)}}{\sqrt d} + \sqrt{\tr(\Lambda^2)} \r) \log(2B^2/\delta),\\
    \l| \sip{\hat \mu_\tau}{y_\tau x_\tau} -  R^2\r| &\leq c_{0} \l( \l[ 1 + \f{1}{\sqrt N} \r] \f{ R \sqrt{\tr(\Lambda)} }{\sqrt d} + \f{ \sqrt{\tr(\Lambda^2)}}{\sqrt N} \r) \log(2B/\delta)
\end{align*}
\end{lemma}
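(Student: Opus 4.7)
The plan is to decompose each of the five quantities into a small number of elementary terms and bound each by standard concentration inequalities. Write $\hat\mu_\tau = \mu_\tau + \bar z_\tau$ where $\bar z_\tau := \f{1}{N}\summ i N y_{\tau,i} z_{\tau,i}$. Since the labels $y_{\tau,i}$ merely flip signs of symmetric centered Gaussians, $\bar z_\tau \sim \normal(0,\Lambda/N)$ and the vectors $\{\bar z_\tau\}_\tau$ are mutually independent and independent of $\{\mu_\tau\}_\tau$. Moreover $y_\tau x_\tau = \mu_\tau + y_\tau z_{\tau,N+1}$ where $z_{\tau,N+1}$ is independent of $\bar z_\tau$ (disjoint noise indices). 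Expanding each of the five inner products via these decompositions yields a handful of terms falling into three buckets: (a) sphere--sphere contributions, either $\snorm{\mu_\tau}^2 = R^2$ or $\sip{\mu_q}{\mu_\tau}$ for $q\neq\tau$; (b) sphere--Gaussian linear forms such as $\sip{\mu_q}{\bar z_\tau}$ or $\sip{\mu_\tau}{z_{\tau,N+1}}$; and (c) purely Gaussian quadratic or bilinear forms $\snorm{\bar z_\tau}^2$, $\sip{\bar z_q}{\bar z_\tau}$, $\sip{z_\tau}{z_q}$, and $\sip{\bar z_\tau}{z_{\tau,N+1}}$.

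For bucket (a), $\snorm{\mu_\tau}^2=R^2$ is deterministic, while $R^{-2}\sip{\mu_q}{\mu_\tau}$ is the inner product of two i.i.d.\ uniform points on the unit sphere in $\R^d$, which is sub-Gaussian of scale $O(1/\sqrt d)$ and yields $|\sip{\mu_q}{\mu_\tau}|\lesssim R^2\log(1/\delta)/\sqrt d$, matching the $R^2/\sqrt d$ terms of the third and fourth inequalities. For bucket (b), conditioning on the sphere variables makes the linear form a centered Gaussian with variance $\mu^\top(\Lambda/N)\mu$ (resp.\ $\mu^\top\Lambda\mu$) when $\bar z$ (resp.\ $z_{\tau,N+1}$) is the Gaussian factor; applying the spherical Hanson--Wright bound (Lemma~\ref{lemma:hanson.wright.uniform.on.sphere}) to control $\mu^\top\Lambda\mu$ near its mean $R^2\tr(\Lambda)/d$, combined with a standard Gaussian tail bound, produces the $R\sqrt{\tr(\Lambda)/d}$ scalings with an extra $1/\sqrt N$ factor whenever the $\bar z$ vector is involved. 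For bucket (c), the classical Hanson--Wright inequality gives $|\snorm{\bar z_\tau}^2 - \tr(\Lambda)/N| \lesssim \sqrt{\tr(\Lambda^2)\log(1/\delta)}/N + \snorm{\Lambda}_2\log(1/\delta)/N$, and using $\tr(\Lambda^2)\leq \snorm{\Lambda}_2\tr(\Lambda)$ with AM--GM the $\sqrt{\tr(\Lambda^2)}/N$ term is absorbed into the $\tr(\Lambda)/N \vee \snorm{\Lambda}_2\log(1/\delta)/N$ piece of the first inequality; the independent-Gaussian bilinear forms $\sip{\bar z_q}{\bar z_\tau}$ and $\sip{\bar z_\tau}{z_{\tau,N+1}}$ are mean-zero and yield deviation $\sqrt{\tr(\Lambda^2)}/N$ and $\sqrt{\tr(\Lambda^2)/N}$ respectively, up to logarithmic factors.

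Assembling the contributions term-by-term then gives each of the five stated inequalities. The asymmetry between $\hat\mu_\tau$ (an average of $N$ noise vectors) and $x_\tau$ (a single noise vector) accounts for the absent $1/\sqrt N$ factors in the $\snorm{x_\tau}^2$ and $\sip{x_\tau}{x_q}$ bounds, while the expansion $\sip{\hat\mu_\tau}{y_\tau x_\tau} = \snorm{\mu_\tau}^2 + \sip{\bar z_\tau}{\mu_\tau} + y_\tau\sip{\mu_\tau}{z_{\tau,N+1}} + y_\tau\sip{\bar z_\tau}{z_{\tau,N+1}}$ cleanly produces the three deviation contributions $R\sqrt{\tr(\Lambda)/d}/\sqrt N$, $R\sqrt{\tr(\Lambda)/d}$, and $\sqrt{\tr(\Lambda^2)/N}$ in the fifth bound. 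A union bound over the $B$ single-task events and the $B(B-1)$ ordered pairs upgrades per-event $\delta$-level bounds to the stated simultaneous guarantee and is responsible for the $\log(2B/\delta)$ and $\log(2B^2/\delta)$ factors.

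The only real care required is the ordering of conditioning when invoking Hanson--Wright on the bilinear terms: one conditions first on $\{\mu_\tau\}_\tau$ and then on the appropriate subset of noise vectors so that the remaining randomness is a genuine centered Gaussian form with independent arguments. This is legitimate because $\bar z_\tau$ uses only $z_{\tau,1},\ldots,z_{\tau,N}$ while $z_{\tau,N+1}$ is fresh, and $\bar z_q$ for $q\neq\tau$ uses entirely disjoint randomness. Beyond this bookkeeping the only nonroutine input is the spherical Hanson--Wright bound on $\mu_\tau^\top\Lambda\mu_\tau$, which is what enables the sharp $R\sqrt{\tr(\Lambda)/d}$ rate (rather than the looser $R\sqrt{\snorm{\Lambda}_2}$ one would get from bounding the conditional variance by its operator-norm upper bound); that lemma, derived elsewhere in the paper, is applied as a black box here.
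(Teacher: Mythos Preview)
Your proposal is correct and follows the same overall skeleton as the paper's proof: both write $\hat\mu_\tau = \mu_\tau + \bar z_\tau$ with $\bar z_\tau\sim\normal(0,\Lambda/N)$, expand each of the five quantities into the same sphere--sphere, sphere--Gaussian, and Gaussian--Gaussian pieces, and close with a union bound over $B$ (or $B^2$) events. The one substantive difference is in how the sphere--Gaussian cross terms $\sip{\mu_q}{z'_\tau}$ are controlled. You condition on $\mu$, obtain a Gaussian with conditional variance $\mu^\top\Lambda\mu$, and then invoke the spherical Hanson--Wright lemma (Lemma~\ref{lemma:hanson.wright.uniform.on.sphere}) to pin $\mu^\top\Lambda\mu$ near $R^2\tr(\Lambda)/d$. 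The paper instead uses the sub-Gaussian comparison lemma \citep[Lemma~6.2.3]{vershynincup} together with the Gaussian chaos MGF \citep[Lemma~6.2.2]{vershynincup} to obtain a direct sub-exponential bound with $\psi_1$-norm $\asymp R\sqrt{\tr(\Lambda)}/\sqrt d$, without ever needing to control $\mu^\top\Lambda\mu$ separately. Both routes yield the same $R\sqrt{\tr(\Lambda)/d}$ rate; the paper's is one step shorter and avoids forward-referencing the spherical Hanson--Wright lemma, while yours is perhaps more transparent about where the $\tr(\Lambda)/d$ factor comes from. A second, smaller difference: for $\snorm{\bar z_\tau}^2$ the paper uses norm concentration \citep[Theorem~6.3.2]{vershynincup} to get $\snorm{z'_\tau}^2\leq 4(\tr(\Lambda)\vee c\snorm{\Lambda}_2\log(2B/\delta))$ directly, whereas you use Hanson--Wright and then absorb the $\sqrt{\tr(\Lambda^2)}$ deviation term via AM--GM; either produces the stated $\tr(\Lambda)\vee c_0\snorm{\Lambda}_2\log(2B/\delta)$ form.
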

\begin{proof}

By definition of $\hat \mu_{\tau}$ and properties of the Gaussian distribution, there is $z_\tau'\sim \normal(0, \Lambda)$ such that
\[ \hat \mu_\tau = \f 1 N \summ i N y_{\tau, i} (y_{\tau, i} \mu_\tau + z_{\tau, i}) = \mu_\tau + \f 1 N \summ i N y_{\tau, i} z_{\tau, i} = \mu_\tau + \f{1}{\sqrt N} z_{\tau}'.\]
Thus for $\tau \neq q$ there are $z_\tau', z_q' \iid \normal(0, \Lambda)$ such that 
\begin{align*}
\sip{\hat \mu_q}{\hat \mu_\tau} & \stackrel{\text{d}}{=} \sip{\mu_\tau + N^{-1/2} z_\tau'}{\mu_q + N^{-1/2} z_q'}\\
&= \sip{\mu_\tau}{\mu_q} + N^{-1/2} \sip{z_\tau'}{\mu_q} + N^{-1/2} \sip{z_q'}{\mu_\tau} + N^{-1} \sip{z_\tau'}{z_q'}.\numberthis \label{eq:hatm.pairs.intermediate}
\end{align*}
We first derive an upper bound for this quantity when $q\neq \tau$.  
Now, since $\mu_q, \mu_\tau$ are independent and sub-Gaussian random vectors with sub-Gaussian norm at most $cR/\sqrt d$~\citep[Theorem 3.4.6]{vershynincup} for some absolute constant $c>0$, by~\citet[Lemma 6.2.3]{vershynincup}, we have for some $c'>0$ it holds that for any $\beta \in \R$, if $g, g' \iid \normal(0, I_d)$,
\[ \E[\exp(\beta \mu_q^\top \mu_\tau)] \leq \E[\exp(c' R^2 d^{-1} \beta g^\top g') ].\]
By~\citet[Lemma 6.2.2]{vershynincup}, for some $c_1>0$, provided $c |\beta| R^2/d \leq c_1$, it holds that
\[ \E[\exp(c R^2 d^{-1} \beta g^\top g') ]\leq \exp(c_1\beta^2 R^4 d^{-2} \snorm{I_d}_F^2) = \exp(c_1 \beta^2 R^4 d^{-1}).\]
Since $\mu_q, \mu_\tau$ are mean-zero, by~\citet[Proposition 2.7.1]{vershynincup} this implies the quantity $\mu_q^\top \mu_\tau$ is sub-exponential with $\snorm{\mu_q^\top \mu_\tau}_{\psi_1} \leq c_2 R^2/\sqrt d$ 
for some absolute constant $c_2>0$.   We therefore have, for some absolute $c_3>0$, w.p. at least $1-\delta$, for all $\tau\in [B]$,
\begin{align*}\numberthis \label{eq:uniform.sphere.nearly.orthogonal}
    |\sip{\mu_\tau}{\mu_q}| &\leq c_3 R^2 d^{-1/2} \log(2B/\delta). 
\end{align*}
Again using Lemmas 6.2.2 and 6.2.3 from~\citep{vershynincup}, since $\mu_q$ has sub-Gaussian norm at most $cR/\sqrt d$ and $z'_\tau = \Lambda^{1/2} g''$ where $g''$ has sub-Gaussian norm at most $c$, we have
\begin{align*}
    \E[\exp(\beta \mu_q^\top z_\tau')] &\leq \E[\exp(c R d^{-1/2} \beta g^\top \Lambda^{1/2} g')],
\end{align*}
and thus provided $cR d^{-1/2}|\beta| \leq c_1/\snorm{\Lambda^{1/2}}_2$ we have
\[ \E[\exp(c R d^{-1/2} \beta g^\top \Lambda^{1/2} g')] \leq \exp(c' R^2 d^{-1} \beta^2 \snorm{\Lambda^{1/2}}_F^2) =\exp(c' R^2 d^{-1} \beta^2 \tr(\Lambda)). \]
In particular, the quantity $\mu_q^\top z_\tau'$ is sub-exponential with sub-exponential norm $\snorm{\mu_q^\top z_\tau'}_{\psi_1} \leq c_4 R d^{-1/2} \sqrt{\tr(\Lambda)}$, and so for some absolute $c_5>0$ we have with probability at least $1-\delta$, for all $q,\tau \in [B]$ with $q\neq \tau$,
\begin{align*}
    |\sip{\mu_q}{z_\tau'}| \leq c_5 R d^{-1/2} \sqrt{\tr(\Lambda)} \log(2B^2/\delta).\numberthis \label{eq:ztau.dot.muell}
\end{align*}

For $\sip{z_q'}{z_\tau'}$ with $\tau\neq q$ we can directly use the MGF of Gaussian chaos~\citep[Lemma 6.2.2]{vershynincup}: $\sip{z_q'}{z_\tau'}=g^\top \Lambda g'$ for i.i.d. $g,g' \sim \normal(0,I_d)$ so that for $\beta \leq c/\snorm{\Lambda}_2$,
\begin{align*}
    \E[\exp(\beta \sip{z_q'}{z_\tau'} )] \leq \exp (c_6 \beta^2 \snorm{\Lambda}_F^2) = \exp(c_6 \beta^2 \tr(\Lambda^2)).
\end{align*}
In particular, $\norm{\sip{z_q'}{z_\tau'}}_{\psi_1} \leq c_7 \sqrt{\tr(\Lambda^2)}$ so that sub-exponential concentration implies that with probability at least $1-\delta$, for any $q,\tau \in [B]$ with $q\neq \tau$, 
\begin{equation}
|\sip{z_\tau'}{z_q'}| \leq c_7 \sqrt{\tr(\Lambda^2)} \log(2B^2/\delta).\label{eq:ztau.dot.zell}
\end{equation}

Putting~\eqref{eq:uniform.sphere.nearly.orthogonal},~\eqref{eq:ztau.dot.muell}, and~\eqref{eq:ztau.dot.zell} into~\eqref{eq:hatm.pairs.intermediate} we get for $q \neq \tau$,
\begin{align*}
|\sip{\hat \mu_q}{\hat \mu_\tau}| &= c_8 \l( \f{ R^2}{\sqrt d} + \f{ R \sqrt{\tr(\Lambda)}}{\sqrt{N d}} + \f{ \sqrt{\tr(\Lambda^2)}}N \r) \log(2B^2/\delta).\numberthis \label{eq:hatm.pairs}
\end{align*}

As for $\snorm{\hat \mu_\tau}^2$, from~\eqref{eq:hatm.pairs.intermediate} we have
\begin{align*} \numberthis \label{eq:norm.hat.mutau.decomp}
\snorm{\hat \mu_\tau}^2 &= \snorm{\mu_\tau}^2 + 2N^{-1/2} \sip{z_\tau'}{\mu_\tau} + N^{-1} \snorm{z_\tau'}^2
\end{align*} 
From here, the same argument used to bound~\eqref{eq:ztau.dot.muell} holds since that bound only relied upon the fact that $\mu_q$ and $z_\tau'$ are independent, while $\mu_\tau$ and $z_\tau'$ are independent as well.  In particular, with probability at least $1-\delta$, for all $\tau\in [B]$,
\begin{align*}
    |\sip{\mu_\tau}{z_\tau'}| \leq c_5 R d^{-1/2} \sqrt{\tr(\Lambda)} \log(2B/\delta).\numberthis \label{eq:ztau.dot.mutau}
\end{align*}
Next, we have that $\snorm{z_\tau'}^2 \stackrel{\text{d}}{=} g^\top \Lambda g = \snorm{\Lambda^{1/2} g}^2$ for $g\sim \normal(0, I_d)$. Therefore~\citet[Theorem 6.3.2]{vershynincup} implies $\snorm{\snorm{z_\tau'}_2 - \snorm{\Lambda^{1/2}}_F}_{\psi_2} \leq c \snorm{\Lambda^{1/2}}_2$ for some absolute constant $c>0$.  Note that $\snorm{\Lambda^{1/2}}_F = \sqrt{\tr(\Lambda)}$.  And thus by sub-Gaussian concentration, we have for some constant $c_9>0$, with probability at least $1-\delta$, for all $\tau\in [B]$,
\begin{align*}
    | \snorm{z_\tau'}_2 - \sqrt{\tr(\Lambda)} | \leq c_9^{1/2} \snorm{\Lambda^{1/2}}_2 \sqrt{ \log(2B/\delta)} \implies \snorm{z_\tau'} \leq 2(\sqrt{\tr(\Lambda)} \vee c_9^{1/2} \snorm{\Lambda^{1/2}}_2 \sqrt{\log(2B/\delta)}).
\end{align*}
In particular,
\begin{align}\numberthis \label{eq:normsq.ztau.ub}
    \snorm{z_\tau'}^2 \leq 4 \l( \tr(\Lambda) \vee c_9 \snorm{\Lambda}_2 \log(2B/\delta)\r).
\end{align}
Putting~\eqref{eq:normsq.ztau.ub} and~\eqref{eq:ztau.dot.mutau} into~\eqref{eq:norm.hat.mutau.decomp} and using that $\snorm{\mu_\tau}^2 = R^2$, we get with probability at least $1-2\delta$,
\begin{align*}
    \l| \snorm{\hat \mu_\tau}^2 - R^2 \r| \leq \f{ c_5 R \sqrt{\tr(\Lambda)} \log(2B/\delta)}{\sqrt{Nd}} + 4 \f{ \tr(\Lambda) \vee c_9 \snorm{\Lambda}_2 \log(2B/\delta)}{N}.
\end{align*}

As for $\snorm{x_\tau}^2$, by definition,
\begin{align*}
\snorm{x_\tau}^2 &= \snorm{\mu_\tau}^2 + 2\sip{\mu_\tau}{z_\tau} + \snorm{z_\tau}^2 = R^2 + 2\sip{\mu_\tau}{z_\tau} + \snorm{z_\tau}^2.
\end{align*}
Since $z_\tau\sim \normal(0, \Lambda)$ has the same distribution as $z_\tau'$, the same analysis used to prove~\eqref{eq:ztau.dot.mutau} and~\eqref{eq:normsq.ztau.ub} yields that with probability at least $1-2\delta$, for all $\tau \in [B]$,
\begin{align*}
    |\sip{\mu_\tau}{z_\tau}| &\leq c_5 R d^{-1/2} \sqrt{\tr(\Lambda)} \log(2B/\delta),\\
    \snorm{z_\tau}^2 &\leq 4(\tr(\Lambda) \vee c_9 \snorm \Lambda_2 \log(2B/\delta).
\end{align*}
Substituting these into the preceding display we get
\begin{align*}
    |\snorm{x_\tau}^2 - R^2| \leq \f{ 2 c_5 R \sqrt{\tr(\Lambda)} \log(2B/\delta)}{\sqrt d} + 4 \l( \tr(\Lambda) \vee c_9 \snorm{\Lambda}_2 \log(2B/\delta)\r)
\end{align*}
Thus provided $R$ is sufficiently large, then we also have $\snorm{x_\tau}^2 = \Theta(R^2)$.


Next we bound $|\sip{x_\tau}{x_q}|$: There are $z_\tau', z_q' \iid \normal(0, \Lambda)$ such that
\begin{align*}
\sip{y_\tau x_\tau}{y_q x_q} &\stackrel{\text{d}}{=} \sip{\mu_\tau + z_\tau'}{\mu_q + z_q'}.
\end{align*}
It is clear that the same exact analysis we used to analyze~\eqref{eq:hatm.pairs.intermediate} leads to the claim that with probability at least $1-\delta$, for all $q\neq \tau$:
\begin{align*}
|\sip{x_q}{x_\tau}| &\leq c_9 \l( \f{ R^2}{\sqrt d} + \f{ R \sqrt{\tr(\Lambda)}}{\sqrt d} + \sqrt{\tr(\Lambda^2)} \r) \log(2B^2/\delta).\numberthis \label{eq:hatx.pairs}
\end{align*}

Finally, we consider $y_\tau \hat \mu_\tau^\top x_\tau$.  Just as in the previous analyses, there are $z_\tau, z_\tau'\sim \normal(0, \Lambda)$ such that 
\begin{align*}
\sip{\hat \mu_\tau}{y_\tau x_\tau} &\stackrel{\text{d}}{=} \sip{\mu_\tau + N^{-1/2}  z_\tau}{\mu_\tau + z_\tau'}\\
&= \snorm{\mu_\tau}^2 + N^{-1/2} \sip{z_\tau}{\mu_\tau} + \sip{z_\tau'}{\mu_\tau} + N^{-1/2} \sip{z_\tau}{z_\tau'}.
\end{align*} 
Again using an analysis similar to that used for~\eqref{eq:hatm.pairs.intermediate} yields that with probability at least $1-\delta$, for all $\tau\in [B]$,
\begin{align*}
\l| \sip{\hat \mu_\tau}{y_\tau x_\tau} -  R^2\r| \leq c_{10} \l( \l[ 1 + \f{1}{\sqrt N} \r] \f{ R \sqrt{\tr(\Lambda)} }{\sqrt d} + \f{ \sqrt{\tr(\Lambda^2)}}{\sqrt N} \r) \log(2B/\delta). \numberthis \label{eq:mtau.dot.xtau}
\end{align*}
Taking a union bound over each of the events shows that all of the desired claims of Lemma~\ref{lemma:training.datasets} hold with probability at least $1-10\delta$. 

\end{proof}

The events of Lemma~\ref{lemma:training.datasets} hold with probability at least $1-10\delta$, independently of the assumptions~\ref{a:R},~\ref{a:B}, and~\ref{a:d}.\footnote{Note, however, that the quantities appearing on the right-hand sides of each inequality in the lemma are only small when these assumptions hold; this is the reason for these assumptions.}  Our results will require this event to hold, and so we introduce the following to allow for us to refer to this in later lemmas.

\begin{definition}
Let us say that a \textit{good run} holds if the events of Lemma~\ref{lemma:training.datasets} hold. 
\end{definition}

\section{Analysis of max-margin solution}
In this section we derive a number of properties of the max-margin solution~\eqref{eq:maxmargin}.
\begin{lemma}\label{lemma:sum.lambdatau.bounds}
On a good run, for any $c_B>0$ and for $C>1$ sufficiently large under Assumptions~\ref{a:R} and~\ref{a:B}, the max-margin solution $W$ of Problem~\ref{eq:maxmargin},
\[ W = \summ \tau B \lambda_\tau y_\tau \hat \mu_\tau x_\tau^\top,\]
is such that the $\lambda_\tau\geq 0$ satisfy the following:  
\[ \f{(c_B\wedge 1)d}{8c_0^2 R^4 \log^2(2B^2/\delta)}\leq \summ \tau B\lambda_\tau \leq \f{4d}{R^4},\]
where $c_0>1$ is the constant from Lemma~\ref{lemma:training.datasets}.
Further, we have the inequalities
\begin{align*}
\f{ (c_B\wedge 1) \sqrt d}{16 c_0^2 R^2 \log^2(2B^2/\delta)} \leq \snorm{W}_F \leq \f{ 2 \sqrt d}{R^2},
\end{align*}
and
\[ \f{ (c_B\wedge 1)d}{16 c_0^2 R^2 \log^2(2B^2/\delta) } \leq \tr(W) \leq \f{ 6d}{R^2}.\]
\end{lemma}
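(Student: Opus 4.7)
My plan is to combine KKT with a well-chosen feasible point for the upper bounds, and to sum the margin constraints with a diagonal-versus-off-diagonal decomposition for the lower bounds. I will first extract two useful identities from KKT: writing $W = \sum_\tau \lambda_\tau y_\tau \hat\mu_\tau x_\tau^\top$ with $\lambda_\tau \geq 0$ as in~\eqref{eq:W.from.kkt}, and using complementary slackness $\lambda_\tau(\hat\mu_\tau^\top W y_\tau x_\tau - 1) = 0$, a short calculation gives
\[
\sum_\tau \lambda_\tau \;=\; \sum_\tau \lambda_\tau \hat\mu_\tau^\top W y_\tau x_\tau \;=\; \tr(W W^\top) \;=\; \snorm{W}_F^2, \qquad \tr(W) \;=\; \sum_\tau \lambda_\tau \langle \hat\mu_\tau, y_\tau x_\tau\rangle.
\]
These identities let me propagate any bound on $\snorm{W}_F^2$ to $\sum_\tau \lambda_\tau$, and then (via $\langle \hat\mu_\tau, y_\tau x_\tau\rangle \approx R^2$ from Lemma~\ref{lemma:training.datasets}) to $\tr(W)$, so all three quantitative bounds in the lemma will follow from suitable bounds on $\snorm{W}_F^2$ alone.

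For the upper bounds, I use $U = (2/R^2) I$ as a feasible candidate. Under Lemma~\ref{lemma:training.datasets} and assumption~\ref{a:R}, for $C$ large enough one has $\langle \hat\mu_\tau, y_\tau x_\tau\rangle \in [R^2/2, 3R^2/2]$, so $\hat\mu_\tau^\top U y_\tau x_\tau \geq 1$ for every $\tau$. Since $W$ minimizes the Frobenius norm over feasible points, $\snorm{W}_F \leq \snorm{U}_F = 2\sqrt d/R^2$, and the identities above then yield $\sum_\tau \lambda_\tau \leq 4d/R^4$ and $\tr(W) \leq (3R^2/2)\sum_\tau \lambda_\tau \leq 6d/R^2$, matching the three stated upper bounds.

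For the lower bounds, I sum all $B$ margin constraints to obtain $B \leq \sum_\tau \hat\mu_\tau^\top W y_\tau x_\tau$, substitute $W = \sum_q \lambda_q y_q \hat\mu_q x_q^\top$, and split into diagonal ($q = \tau$) and off-diagonal ($q \neq \tau$) pieces. The diagonal contribution is at most $(3R^2/2)^2 \sum_q \lambda_q$; for the off-diagonal, Lemma~\ref{lemma:training.datasets} and assumption~\ref{a:R} drive each pairwise product $|\langle \hat\mu_\tau, \hat\mu_q\rangle \langle x_q, x_\tau\rangle|$ down to $O(R^4 \log^2(2B^2/\delta)/d)$, so the total off-diagonal absolute value is bounded by $B \cdot O(R^4 \log^2(2B^2/\delta)/d) \sum_q \lambda_q$. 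Collecting these pieces and using $B \geq c_B d$ gives, for an absolute constant $c>0$,
\[
B \;\leq\; O(R^4)\bigl(1 + B\log^2(2B^2/\delta)/d\bigr)\sum_\tau \lambda_\tau \;\;\Longrightarrow\;\; \sum_\tau \lambda_\tau \;\geq\; \frac{c\,(c_B \wedge 1)\, d}{R^4 \log^2(2B^2/\delta)},
\]
after a short case split on whether $1$ or $B\log^2/d$ dominates the denominator. I expect the main obstacle to be precisely this off-diagonal bookkeeping: the pairwise bounds from Lemma~\ref{lemma:training.datasets} must be sharp enough under~\ref{a:R} that the $B^2$ off-diagonal pairs do not swamp the diagonal, and the resulting inequality has to factor $\sum_q \lambda_q$ out cleanly so that the lower bound can be read off without circular use of the upper bound on $\sum_q \lambda_q$. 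Once $\sum_\tau \lambda_\tau$ is pinned down, the two KKT identities immediately deliver the remaining lower bounds on $\snorm{W}_F$ and $\tr(W)$.
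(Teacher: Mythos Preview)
Your proposal is correct and matches the paper's overall architecture (feasibility of $2I/R^2$ for the upper bounds, diagonal/off-diagonal decomposition of $\hat\mu_\tau^\top W y_\tau x_\tau$ for the lower bounds), but it differs from the paper's argument in two places. First, you exploit the exact KKT identity $\sum_\tau \lambda_\tau = \snorm{W}_F^2$ coming from complementary slackness; the paper does not use this, and instead relates the two quantities through the looser variational inequality $\snorm{W}_F \geq \langle W, I_d/\sqrt d\rangle = d^{-1/2}\sum_\tau \lambda_\tau \langle \hat\mu_\tau, y_\tau x_\tau\rangle \geq \tfrac{R^2}{2\sqrt d}\sum_\tau \lambda_\tau$. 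Your identity is cleaner and in fact yields a slightly sharper lower bound on $\snorm{W}_F$ than the one stated. Second, for the lower bound on $\sum_\tau \lambda_\tau$ you sum all $B$ feasibility constraints and factor $\sum_q \lambda_q$ out of both the diagonal and off-diagonal pieces, then case-split on which of $1$ or $B\log^2(2B^2/\delta)/d$ dominates; the paper instead keeps a \emph{single} feasibility constraint $1 \leq \tfrac{3}{2}R^4 \lambda_\tau + \tfrac{2c_0^2 R^4\log^2(2B^2/\delta)}{d}\sum_{q\neq\tau}\lambda_q$ and case-splits on whether some $\lambda_\tau$ is below the threshold $c_B d/(4R^4 B)$ (if so the off-diagonal forces $\sum_q\lambda_q$ large; if not, all $\lambda_\tau$ are already large). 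Both routes land on the same constants, but your summed version avoids the pointwise threshold argument and is a bit more direct.
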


\begin{proof}
We first derive an upper bound on $\snorm{W}_F$ by showing that the matrix $U=I_d$ satisfies the constraints of the max-margin problem~\eqref{eq:maxmargin}.  
\begin{align*}
y_\tau \hat \mu_\tau^\top I x_\tau &= \sip{\hat \mu_\tau}{y_\tau x_\tau} \\
&\overset{(i)}\geq R^2 \l( 1 - 2 c_0 \l( \f{ \sqrt{\tr(\Lambda)}}{R \sqrt d} + \sqrt{\f{\tr(\Lambda^2)}{NR^4}} \r) \log(2B/\delta)\r)  \\
&\overset{(ii)}\geq R^2 \l( 1 - \f{4 c_0}{C} \r)\\
&\geq \f 12 R^2.\numberthis \label{eq:I.margin}
\end{align*} 
Inequality $(i)$ uses Lemma~\ref{lemma:training.datasets}, while inequality $(ii)$ holds for $C>8c_0$ sufficiently large via Assumption~\ref{a:R}.  Thus the matrix $2 I_d/R^2$ separates the training data with margin at least 1 for every sample.  Since $W$ is the minimum Frobenius norm matrix which separates all of the training data with margin 1, this implies
\begin{align*}
\snorm{W}_F &\leq \snorm{2 I_d / R^2}_F = \frac{2 \sqrt d}{R^2}. \numberthis \label{eq:maxmargin.frobenius.norm.ub}
\end{align*}
On the other hand, by the variational definition of the norm, since $\snorm{I_d/\sqrt d}_F=1$ we know $\snorm{W}_F \geq \sip{W}{I_d/\sqrt d}$, and hence
\begin{align*}
\snorm{W}_F &\geq \f 1{\sqrt d} \sip{W}{I_d} \\
&= \f 1 {\sqrt d} \ip{\summ \tau B \lambda_\tau \hat \mu_\tau y_\tau x_\tau^\top}{I_d} \\
&= \f 1 {\sqrt d} \summ \tau B \lambda_\tau \sip{\hat \mu_\tau}{y_\tau x_\tau} \\
&\geq \f{1}{\sqrt d} \cdot \f 1{2}R^2 \summ \tau B \lambda_\tau,\numberthis \label{eq:maxmargin.frobenius.norm.lb.intermediate}
\end{align*}
where the last line uses~\eqref{eq:I.margin}.  Putting this and the preceding display together, we get
\begin{align*}
\summ \tau B \lambda_\tau \leq \frac{4 d}{R^4}.\numberthis \label{eq:sumlambdatau.ub}
\end{align*}
Next, we note that by the feasibility conditions of the max-margin problem, we have for any $\tau$,
\begin{align*}
1 &\leq y_\tau \hat \mu_\tau^\top W x_\tau \\
&= \hat \mu_\tau^\top \l( \summ q B \lambda_q y_q \hat \mu_q x_q^\top \r) y_\tau x_\tau \\
&= \summ q B \lambda_q \sip{\hat \mu_\tau}{\hat \mu_q} \sip{y_\tau x_\tau}{y_q x_q} \\
&= \lambda_\tau \snorm{\hat \mu_\tau}^2 \snorm{x_\tau}^2 + \sum_{q:\ q\neq \tau} \lambda_q \sip{\hat \mu_\tau}{\hat \mu_q} \sip{y_q x_q}{x_\tau y_\tau}.\numberthis \label{eq:feasibility.intermediate}
\end{align*}
Now by Lemma~\ref{lemma:training.datasets}, we have
\begin{align*}
    \l| \f{ \snorm{\hat \mu_\tau}^2}{R^2} - 1 \r| &\leq   \f{ c_0 \sqrt{\tr(\Lambda) \log(2B/\delta)}}{R\sqrt{Nd}}  + 4 \f{ \tr(\Lambda) \vee c_0 \snorm{\Lambda}_2 \log(2B/\delta)}{NR^2},
\end{align*}
and
\begin{align*}
    \l| \f{\snorm{x_\tau}^2}{R^2} -1 \r|  &\leq  \f{c_0 \sqrt{\tr(\Lambda)}\log(2B/\delta) }{R\sqrt d}  + 4\f{ \tr(\Lambda) \vee c_0 \snorm{\Lambda}_2 \log(2B/\delta) }{R^2} .
\end{align*}
Using Assumption~\ref{a:R} we see that for $C$ sufficiently large we have
\begin{align*}
\snorm{\hat \mu_\tau}^2\snorm{x_\tau}^2 \leq R^4 \l( 1 + \f{ c_0}{NC} + \f{ 4c_0}{NC}\r) \cdot \l( 1 + \f{ c_0}{C} + \f{4c_0}{C} \r) \leq \f 32 R^4
,\numberthis \label{eq:normsqm.normsqx}
\end{align*}
where the inequality uses Assumption~\ref{a:R}.  Likewise, we have
\begin{align*}
\snorm{\hat \mu_\tau}^2\snorm{x_\tau}^2\geq \f12 R^4
.\numberthis \label{eq:normsqm.normsqx.lb}
\end{align*}
As for the cross terms, 
again using Lemma~\ref{lemma:training.datasets}, for all $\tau \neq q$,
\begin{align*}
    &\quad |\sip{\hat \mu_\tau}{\hat \mu_q}| \cdot | \sip{x_\tau}{x_q}| \\
    &\leq c_0^2 \l( \f{ R^2}{\sqrt d} + \f{ R \sqrt{\tr(\Lambda)}}{\sqrt{Nd}} + \f{ \sqrt{\tr(\Lambda^2)}}{N} \r) \cdot \l( \f{ R^2}{\sqrt d} + \f{ R \sqrt{\tr(\Lambda)}}{\sqrt{d}} + \sqrt{\tr(\Lambda^2) } \r) \log^2(2B^2/\delta) \\
    &= \f{ c_0^2 R^4}{d}\l( 1 + \f{ \sqrt{\tr(\Lambda)}}{R\sqrt{N}} + \f{ \sqrt{d \tr(\Lambda^2)}}{R^2N} \r) \cdot \l( 1 + \f{  \sqrt{\tr(\Lambda)}}{R} + \f{ \sqrt{d\tr(\Lambda^2) } }{R^2}\r) \log^2(2B^2/\delta)\\
    &\overset{(i)}\leq \f{ c_0^2 R^4}{d}\l( 1 + \f{1}{ C \sqrt N} + \f{1}{4N} \r) \cdot \l( 1 + \f{1}{C} + \f{1}{4} \r) \log^2(2B^2/\delta)\\
    &\overset{(ii)}\leq \f{ 2 c_0^2 R^4 \log^2(2B^2/\delta)}{d}. \numberthis \label{eq:hatmtau.hatmq.xtau.xq}
\end{align*}
Inequality $(i)$ uses Lemma~\ref{lemma:training.datasets} and $(ii)$ uses that $C$ is large enough. 
Putting this together with~\eqref{eq:normsqm.normsqx} and substituting these into the consequences of the feasibility condition~\eqref{eq:feasibility.intermediate} we get for any $\tau$, 
\begin{align*}
1 &\leq \f 32 R^4 \lambda_\tau + \frac{2 c_0^2 R^4 \log^2(2B^2/\delta) }{d} \sum_{q:\ q\neq \tau} \lambda_q. \numberthis \label{eq:feasibility.implication.lambdatau}
\end{align*}
We now show that this implies $\summ qB\lambda_q \geq  \f{(c_B \wedge 1)d}{8c_0^2 R^4 \log^2(2B^2/\delta)}$. Towards this end, let us first consider the case that there exists some $\tau$ with $\lambda_\tau \leq \frac{c_Bd}{4R^4 B}$.  Then the preceding display implies
\begin{align*}
1 &\leq \f 32 R^4 \cdot \frac{c_B d}{4R^4 B} + \frac{2c_0^2 R^4 \log^2(2B^2/\delta)}{d} \sum_{q:\ q\neq \tau} \lambda_q\\
&\leq \f{3}{4} + \frac{2c_0^2 R^4 \log^2(2B^2/\delta)}{d}  \sum_{q:\ q\neq \tau}\lambda_q.
\end{align*}
The final inequality uses Assumption~\ref{a:B}, i.e. $B\geq c_B d$.   Rearranging we see that
\[ \summ q B \lambda_q \geq \sum_{q:\ q\neq \tau} \lambda_q \geq \f{d}{8c_0^2 R^4 \log^2(2B^2/\delta)} \geq  \f{(c_B\wedge 1) d}{8c_0^2 R^4 \log^2(2B^2/\delta)}.\]
Let us now consider the only remaining case, whereby for each $\tau$ we have $\lambda_\tau > \frac{c_Bd}{4R^4 B}$.  Then,
\begin{align*}
\summ q B \lambda_q > \summ q B \frac{c_B d}{4R^4B} = \f{c_B d}{4R^4} \geq \f{(c_B \wedge 1)d}{8c_0^2 R^4 \log^2(2B^2/\delta)},
\end{align*}
where the final inequality uses that $c_0 \geq 1$.  We therefore have $\summ q B \lambda_q \geq  \f{(c_B\wedge 1)d}{8c_0^2 R^4 \log^2(2B^2/\delta)}$, which together with~\eqref{eq:sumlambdatau.ub} completes the proof for the upper and lower bounds of $\summ q B\lambda_q$.  The upper bound for the Frobenius norm of $W$ follow by~\eqref{eq:maxmargin.frobenius.norm.ub}, while the lower bound follows by $\summ q B \lambda_q \geq  \f{(c_B\wedge 1)d}{8c_0^2 R^4 \log^2(2B^2/\delta)}$ and~\eqref{eq:maxmargin.frobenius.norm.lb.intermediate}.

For the trace term, we have,
\begin{align*}
    \tr(\wmm) &= \tr\l( \summ \tau B \lambda_\tau y_\tau \hat \mu_\tau x_\tau^\top \r) \\
    &= \summ \tau B \lambda_\tau \tr( y_\tau \hat \mu_\tau x_\tau^\top) \\
    &= \summ \tau B \lambda_\tau y_\tau x_\tau^\top \hat \mu_\tau\\
    &\overset{(i)}\geq \summ \tau B \lambda_\tau R^2/2 \\
    &\overset{(ii)}\geq   \f{(c_B\wedge 1)d}{16c_0^2 R^2 \log^2(2B^2/\delta)} \numberthis \label{eq:trace.wmm.lb}
\end{align*}
Inequality $(i)$ uses~\eqref{eq:I.margin} while inequality $(ii)$ uses the lower bound for $\sum_\tau \lambda_\tau$.  Similarly, by Lemma~\ref{lemma:training.datasets} we have 
\begin{align*}
    |\sip{\hat \mu_\tau}{y_\tau x_\tau}| \leq  R^2 \l( 1 +  c_{0} \l( \l[ 1 + \f{1}{\sqrt N} \r] \f{ \sqrt{\tr(\Lambda)} }{R\sqrt d} + \f{ \sqrt{\tr(\Lambda^2)}}{ \sqrt {R^4 N} } \r) \log(2B/\delta) \r) \leq \f{3}{2}R^2,
\end{align*}

where the last inequality uses Assumption~\ref{a:R}.  Therefore, 
\begin{align*}
    \tr(\wmm) &= \summ \tau B \lambda_\tau y_\tau x_\tau^\top \hat \mu_\tau\\
    &\leq \summ \tau B \lambda_\tau \cdot \f{3}{2} R^2 \\
    &\leq \f{ 6d}{R^2},
\end{align*}
where the last inequality uses the upper bound for $\summ \tau B\lambda_\tau$ proved earlier. 
\end{proof}

\section{Proof of Theorem~\ref{thm:generalization}}\label{app:generalization}
Our goal is to bound
\begin{align*}
   R(\wmm)  &= \P_{(x_i, y_i)_1^{M+1},\ \mu}( \sign(\hat y(E;\wmm) )\neq y_{M+1})  \\
   &= \P\l( \l[ \f 1 M \summ i M y_i x_i \r]^\top \wmm y_{M+1}x_{M+1} < 0 \r).
\end{align*}
The probability above is over the draws of $\mu$ and of $(x_i, y_i)_{i=1}^{M+1}$ and $\mu$.  The matrix $\wmm$ is the max-margin solution when training on the in-context tasks~\eqref{eq:maxmargin}.   To prove the risk is close to the label-flipping noise rate $p$, it suffices to show that the error on clean examples is small:
\begin{align*}
   R(\wmm) &= \P\l( y_{M+1}\cdot \sign(\hat y(E;\wmm) < 0, \, y_{M+1}\neq \tilde y_{M+1}\r) \\
   &\qquad + \P\l( y_{M+1} \cdot \sign(\hat y(E;\wmm) < 0, \, y_{M+1} = \tilde y_{M+1}\r) \\
   &\leq p + \P\l( y_{M+1}\cdot \sign(\hat y(E;\wmm) < 0, \, y_{M+1} = \tilde y_{M+1}\r). \numberthis \label{eq:risk.decomp.labelnoise.intermediate}
\end{align*}

For notational simplicity, let us denote $\hat y := \hat y(E;\wmm)$.  Let us also denote the event $\tilde A$ as
\[ \tilde A := \{ y_{M+1} \hat y < 0 \} = \{ y_{M+1}\hat y - \E [y_{M+1}\hat y] < - \E[y_{M+1} \hat y] \},\]
so that $R(\wmm) = \P(\tilde A)$.  Further, let us introduce the sets $\calC,\calN \subset [M]$ as the clean and noisy test examples respectively ($|\calC| + |\calN| = M$) so that
\[ y_i = \tilde y_i \, \forall i\in \calC,\quad y_i = -\tilde y_i \, \forall i\in \calN. \]
Then we have the identities 
\begin{align*}
    y_i x_i = \mu + y_i z_i, &\quad i\in \calC,\\
    y_i x_i = -\mu + y_i z_i, &\quad  i\in \calN.\\
\end{align*}
Therefore there exist independent $\zeta, \zeta_{M+1} \sim \normal(0, \Lambda)$ such that 
\[ \f 1 M \summ i M y_i x_i \stackrel{\text{d}}{=} \l(1 - \f{2|\calN|}M\r) \mu +  \f 1 {\sqrt M} \zeta,\quad \tilde y_{M+1} x_{M+1} \stackrel{\text{d}}{=} \mu + \zeta_{M+1},\]
where we have used that $|\calC|-|\calN| = M-2|\calN|$. 

In particular, if we define
\[ A := \Big \{ \l((1 - \nicefrac{2|\calN|}{M} )\mu + M^{-1/2} \zeta\r )^\top W (\mu + \zeta_{M+1}) - \E[y_{M+1}\hat y] < -\E[y_{M+1}\hat y] \Big\} \]
then we have
\begin{align*}  
&\P\l(  y_{M+1}\hat y - \E[y_{M+1} \hat y] < - \E[y_{M+1} \hat y], \, y_{M+1} = \tilde y_{M+1}\r)  = \P(A).
\end{align*}
Continuing from~\eqref{eq:risk.decomp.labelnoise.intermediate} this means for any $\alpha_0, \alpha_1, \alpha_2, \alpha_3>0$,
\begin{align*}
   & \quad R(\wmm) \\
    &\leq p + \P\l( \l((1 - \nicefrac{2|\calN|}{M} )\mu + M^{-1/2} \zeta\r )^\top W (\mu + \zeta_{M+1}) - \E[y_{M+1}\hat y] < -\E[y_{M+1}\hat y]\r) \\
    &\leq p + \P( |\calN|/M \leq \alpha_0) +   \P(|M^{-1/2} \zeta^\top \wmm \mu| > \alpha_1) \\
    &\quad + \P(|M^{-1/2} \zeta^\top \wmm \zeta_{M+1}| > \alpha_2) + \P((1 - \nicefrac{2|\calN|}M)|\mu^\top \wmm \zeta_{M+1}| > \alpha_3) \\
    &\quad +\P\Bigg(A \cap  \{ |\calN|/M \leq \alpha_0\} \cap \{|M^{-1/2} \zeta^\top \wmm \mu| \leq \alpha_1  \}\\
    &\qquad\quad\cap  \{|M^{-1/2} \zeta^\top \wmm \zeta_{M+1}| \leq \alpha_2  \} \cap  \{|\mu^\top \wmm \zeta_{M+1}| \leq \alpha_3  \}\Bigg). \numberthis \label{eq:risk.decomp}
\end{align*}
In particular, to derive an upper bound on the risk it suffices to derive a lower bound on $\mu^\top \wmm \mu$ and upper bounds on each of the absolute values of the four quantities $|\calN|/M$, $\zeta^\top \wmm \mu, \zeta^\top \wmm \zeta_{M+1}$ and $\mu^\top W \zeta_{N+1}$.  We shall do so in what follows. 

\subsection{$|\calN|/M$}
\begin{lemma}\label{lemma:fraction.noisy.examples}
There is some constant $c>0$ such that for any $t \geq 0$, we have
\begin{align*}
\begin{cases} 
|\calN| = 0, &p=0,\\
\P \l( \l| \f{|\calN|}{M} - p \r| \geq t \r) \leq 2 \exp(-2t^2 M), & p>0.
\end{cases}\numberthis \label{eq:num.noisy.examples}
\end{align*}
\end{lemma}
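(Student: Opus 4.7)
The plan is to recognize $|\calN|$ as a binomial count and apply a standard concentration inequality. Specifically, define indicator variables $\xi_i := \mathbf{1}[y_i \neq \tilde y_i]$ for $i=1,\dots,M$. By the generative model for the test-time data (the $y_i$ are obtained from $\tilde y_i$ by independent label flips, each with probability $p$), the $\xi_i$ are i.i.d.\ Bernoulli random variables with mean $p$, and by construction $|\calN| = \sum_{i=1}^M \xi_i$.

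For the first case, when $p=0$, every $\xi_i$ is almost surely zero, so $|\calN| = 0$ deterministically.

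For the second case, when $p>0$, I would invoke Hoeffding's inequality for bounded i.i.d.\ random variables. Since each $\xi_i \in [0,1]$ and $\E[\xi_i]=p$, Hoeffding's inequality gives
\[
\P\!\left( \left| \tfrac{1}{M}\textstyle\sum_{i=1}^M \xi_i - p \right| \geq t \right) \leq 2 \exp(-2 t^2 M),
\]
which is exactly the stated bound after substituting $|\calN|/M$ for the empirical mean. No step here is difficult; the only thing to be careful about is confirming that the flipping indicators really are independent (they are, by the assumption that the $\tilde y_i$ and the flips are all mutually independent across $i$), so that the hypothesis of Hoeffding's inequality applies. The proof is essentially a one-line invocation of a classical concentration result, and the lemma's role in the larger argument is to let us replace $|\calN|/M$ by $p$ (up to an exponentially small error) when bounding the $(1 - 2|\calN|/M)$ factor appearing in the decomposition \eqref{eq:risk.decomp}.
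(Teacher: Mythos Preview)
Your proposal is correct and follows essentially the same approach as the paper: write $|\calN|$ as the sum of i.i.d.\ Bernoulli indicators $\ind(y_i\neq \tilde y_i)$ with mean $p$, handle the $p=0$ case trivially, and apply Hoeffding's inequality for the $p>0$ case.
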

\begin{proof} 
If $p=0$, there is no label flipping noise and so $|\calN|=0$ deterministically.  For the $p>0$ case, by definition, $|\calN| = \summ i M \ind(y_i \neq \tilde y_i)$ is a sum of $M$ independent random variables bounded between 0 and 1 with mean $p$.  By Hoeffding's inequality, for any $u\geq 0$ we have
\[ \P(|\ |\calN| - M p\ | \geq u) = \P(|\ |\calN|/M - p\ | \geq u/M )\leq 2 \exp \l( -  \f{ 2u^2}{M}\r).\]
Setting $u=Mt$ completes the proof. 
\end{proof}
\subsection{$\mu^\top W \mu$}
The quantity $\mu^\top W \mu$ is a quadratic form, and we can provide a concentration inequality for this in the following lemma.
\begin{lemma}[Hanson-Wright for uniform on the sphere]\label{lemma:hanson.wright.uniform.on.sphere}
Let $\tilde R>0$ and $Q\in \tilde \R^{d\times d}$ be a matrix.   If $\mu \sim \unif(\tilde R\cdot \mathbb S^{d-1})$, then for any $t \geq 0$, 
\[     \P\l(\l|\mu^\top Q \mu - \f{\tilde R^2}{d} \tr(Q) \r| \geq t\r) = \P\l(\l|\mu^\top Q \mu - \E[\mu^\top Q \mu] \r| \geq t\r)\leq 2 \exp\l(-c\min\l( \f{ t^2d^2 }{\tilde R^4 \snorm{Q}_F^2}, \f{ td}{\tilde R^2 \snorm{Q}_2}\r)\r).\]
\end{lemma}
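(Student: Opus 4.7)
\textbf{Proof plan for Lemma~\ref{lemma:hanson.wright.uniform.on.sphere}.} The plan is to reduce the spherical Hanson-Wright bound to the classical Gaussian Hanson-Wright inequality via the representation $\mu \stackrel{d}{=} \tilde R\, g/\snorm{g}$, where $g \sim \normal(0, I_d)$. First, I would verify the centering: by rotational invariance $\E[\mu\mu^\top] = cI_d$ for some $c > 0$, and taking traces yields $cd = \tilde R^2$, so $\E[\mu^\top Q\mu] = \tr(Q \E[\mu\mu^\top]) = (\tilde R^2/d)\tr(Q)$, which justifies the equality in the lemma statement.

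Next, I would write the deviation as a sum of two pieces that can be controlled separately:
\[
\mu^\top Q\mu - \frac{\tilde R^2}{d}\tr(Q) \;=\; \frac{\tilde R^2}{\snorm{g}^2}\bigl(g^\top Q g - \tr(Q)\bigr) \;+\; \tilde R^2\,\tr(Q)\left(\frac{1}{\snorm{g}^2} - \frac{1}{d}\right).
\]
I would condition on the event $\mathcal{E} = \{\snorm{g}^2 \in [d/2, 2d]\}$, which holds with probability at least $1 - 2e^{-cd}$ by standard chi-square concentration. On $\mathcal{E}$, the prefactor $\tilde R^2/\snorm{g}^2$ is at most $2\tilde R^2/d$, so the first term is controlled by $|g^\top Qg - \tr(Q)|$ via the classical Hanson-Wright inequality (see e.g.\ Theorem 6.2.1 of Vershynin), and the second term is controlled by $|\snorm{g}^2 - d|$ together with the Cauchy-Schwarz estimate $|\tr(Q)| \leq \sqrt{d}\,\snorm{Q}_F$.

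Concretely, to bound the first term by $t/2$ on $\mathcal{E}$ it suffices to have $|g^\top Q g - \tr(Q)| \leq td/(4\tilde R^2)$, which by Gaussian Hanson-Wright fails with probability at most $2\exp(-c\min(t^2 d^2/(\tilde R^4\snorm{Q}_F^2),\; td/(\tilde R^2\snorm{Q}_2)))$---precisely the target rate. To bound the second term by $t/2$, I would require $|\snorm{g}^2 - d| \leq t d^{3/2}/(4\tilde R^2\snorm{Q}_F)$, which by the sub-exponential concentration of $\snorm{g}^2$ fails with probability at most $2\exp(-c\min(t^2 d^2/(\tilde R^4\snorm{Q}_F^2),\; td^{3/2}/(\tilde R^2\snorm{Q}_F)))$; using $\snorm{Q}_F \leq \sqrt{d}\snorm{Q}_2$ shows the second argument of the min is at least $td/(\tilde R^2\snorm{Q}_2)$, so this tail is dominated by the Hanson-Wright tail. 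A union bound over the three events ($\mathcal{E}$, Hanson-Wright, chi-square) absorbs the $e^{-cd}$ term into the stated bound after adjusting constants, since on the relevant scale $d \gtrsim \min(t^2 d^2/(\tilde R^4\snorm{Q}_F^2), td/(\tilde R^2\snorm{Q}_2))$ by $\snorm{Q}_F^2 \leq d\snorm{Q}_2^2$.

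The main obstacle is the balance of constants in the two regimes: the sub-exponential part of the chi-square tail in the second term must be checked to be no worse than the sub-exponential part of the Hanson-Wright tail, which is exactly where the inequality $\snorm{Q}_F \leq \sqrt{d}\snorm{Q}_2$ is used. Everything else is a routine combination of well-known concentration inequalities.
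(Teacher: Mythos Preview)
Your proof is correct and takes a genuinely different route from the paper. The paper's argument is a one-line application of a black-box theorem of Adamczak (2015): any mean-zero random vector satisfying the $K$-convex concentration property obeys a Hanson--Wright inequality with parameter $K$, and the uniform measure on $\tilde R\cdot\mathbb S^{d-1}$ has this property with $K=\tilde R/\sqrt d$ by the standard Lipschitz concentration on the sphere (e.g.\ Vershynin, Theorem~5.1.4). Your approach instead uses the Gaussian representation $\mu\stackrel{d}{=}\tilde R\,g/\snorm{g}$ and reduces to the classical Gaussian Hanson--Wright plus chi-square concentration; this is more elementary and fully self-contained, at the price of having to split into two error terms and balance the sub-Gaussian/sub-exponential regimes by hand. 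Both arguments yield the same bound.

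One small point: your justification for absorbing the $e^{-cd}$ term into the stated tail is not quite right as written. The inequality $\snorm{Q}_F^2\le d\snorm{Q}_2^2$ gives $A\ge B^2/d$ (with $A=t^2d^2/(\tilde R^4\snorm{Q}_F^2)$ and $B=td/(\tilde R^2\snorm{Q}_2)$), but this does not by itself force $\min(A,B)\lesssim d$; when $B>d$ one gets $A>B$ and hence $\min(A,B)=B>d$. The clean fix is to observe that $|\mu^\top Q\mu|\le \tilde R^2\snorm{Q}_2$ deterministically, so for $t>2\tilde R^2\snorm{Q}_2$ the left-hand probability is zero and the bound is trivial, while for $t\le 2\tilde R^2\snorm{Q}_2$ one has $B\le 2d$, and then $e^{-cd}\le e^{-(c/2)\min(A,B)}$ as needed.
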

Since $\mu^\top Q \mu$ is a quadratic form, we would like to use the Hanson-Wright inequality here.  But the trouble is that for $\mu\sim \tilde R\unif(\mathbb S^{d-1})$, the components of $\mu$ are \textit{not} independent, so the standard Hanson-Wright inequality is not directly applicable, but requires some additional work. Our proof instead leverages the following:
\begin{theorem}[\citet{adamczak2015hansonwright}, Theorem 2.5]\label{thm:adamczak.hansonwright}
Let $X$ be a mean-zero random vector in $\R^d$.  If $X$ satisfies the $K$-convex concentration property, i.e. there exists $K$ such that for every 1-Lipschitz convex function $\phi:\R^d \to \R$ we have $\E|\phi(X)| < \infty$ and for every $t>0$,
\[ \P(|\phi(X) - \E \phi(X)| \geq t) \leq 2 \exp(-t^2/K^2),\]
then there exists $c>0$ such that for any matrix $Q\in \R^{d\times d}$ and every $t >0$,
\[ \P(|X^\top Q X - \E [X^\top Q X]| \geq t) \leq 2 \exp \l( -c \min \l( \f{ t^2}{2K^4 \snorm{Q}_F^2}, \f{t}{K^2 \snorm{Q}_2} \r) \r).\]
\end{theorem}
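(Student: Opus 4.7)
The plan is to apply Theorem~\ref{thm:adamczak.hansonwright} (the convex-concentration version of Hanson–Wright) directly, once we (i) compute the mean $\E[\mu^\top Q\mu]$ and (ii) verify that the distribution $\unif(\tilde R\cdot \mathbb{S}^{d-1})$ satisfies the $K$-convex concentration property with $K=\Theta(\tilde R/\sqrt d)$.

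\textbf{Step 1: mean computation.} By rotational symmetry of the uniform distribution on the sphere, $\E[\mu_i\mu_j]=0$ for $i\neq j$ and $\E[\mu_i^2]$ is the same for every $i$; since $\sum_i \mu_i^2=\tilde R^2$ almost surely, we conclude $\E[\mu_i\mu_j]=(\tilde R^2/d)\delta_{ij}$. Consequently
\[
\E[\mu^\top Q\mu]=\sum_{i,j}Q_{ij}\E[\mu_i\mu_j]=\frac{\tilde R^2}{d}\tr(Q),
\]
which justifies rewriting the deviation event in the two equivalent ways stated in the lemma.

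\textbf{Step 2: convex concentration on the sphere.} I would invoke the classical Lévy concentration inequality: for $\tilde\mu\sim\unif(\mathbb{S}^{d-1})$ and any $1$-Lipschitz function $g:\R^d\to\R$ (in particular, any convex $1$-Lipschitz $\phi$ restricted to the sphere),
\[
\P\bigl(|g(\tilde\mu)-\E g(\tilde\mu)|\geq s\bigr)\leq 2\exp(-c_0 d s^2)
\]
for an absolute constant $c_0>0$. Writing $\mu=\tilde R\tilde\mu$, if $\phi:\R^d\to\R$ is convex and $1$-Lipschitz then $\tilde\mu\mapsto \phi(\tilde R\tilde\mu)$ is $\tilde R$-Lipschitz, so Lévy's inequality applied to this composition yields
\[
\P\bigl(|\phi(\mu)-\E\phi(\mu)|\geq t\bigr)\leq 2\exp(-c_0 d t^2/\tilde R^2).
\]
Integrability of $\phi(\mu)$ is immediate since the sphere is bounded. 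This is exactly the $K$-convex concentration property of Theorem~\ref{thm:adamczak.hansonwright} with $K=\tilde R/\sqrt{c_0 d}$.

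\textbf{Step 3: assemble.} Plugging this $K$ into Theorem~\ref{thm:adamczak.hansonwright} gives
\[
\P\bigl(|\mu^\top Q\mu-\E[\mu^\top Q\mu]|\geq t\bigr)
\leq 2\exp\!\left(-c\min\!\left(\frac{t^2}{K^4\snorm{Q}_F^2},\frac{t}{K^2\snorm{Q}_2}\right)\right),
\]
and substituting $K^2\asymp \tilde R^2/d$, $K^4\asymp \tilde R^4/d^2$ together with the identity $\E[\mu^\top Q\mu]=(\tilde R^2/d)\tr(Q)$ from Step 1 produces precisely the bound claimed in the lemma (after relabeling the absolute constant).

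\textbf{Where the work is.} The nontrivial input is the sub-Gaussian concentration on the sphere (Step 2); the standard Hanson–Wright inequality does not apply directly because the coordinates of $\mu$ are not independent (they are constrained by $\snorm{\mu}=\tilde R$). Fortunately this is exactly the scenario that Adamczak's theorem is designed for, so the only thing one needs to supply is Lévy's inequality, which is a classical fact about the uniform measure on $\mathbb{S}^{d-1}$ (provable via isoperimetry, log-Sobolev, or Ricci curvature bounds). Everything else is algebraic bookkeeping.
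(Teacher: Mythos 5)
There is a genuine mismatch here: the statement you were asked to prove is Theorem~\ref{thm:adamczak.hansonwright} itself, i.e.\ the Hanson--Wright inequality under the $K$-convex concentration property, which the paper imports as an external result from \citet{adamczak2015hansonwright} and does not prove. Your proposal does not prove this theorem --- it \emph{assumes} it (``the plan is to apply Theorem~\ref{thm:adamczak.hansonwright} directly'') and then derives the downstream Lemma~\ref{lemma:hanson.wright.uniform.on.sphere} about $\mu \sim \unif(\tilde R\cdot \mathbb S^{d-1})$. As an argument for the stated theorem this is circular: nothing in your Steps 1--3 addresses why convex concentration of Lipschitz functions of $X$ upgrades to concentration of the quadratic form $X^\top Q X$, which is the entire content of the result. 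That step cannot be obtained from L\'evy-type concentration alone, since $x\mapsto x^\top Q x$ is neither Lipschitz nor (in general) convex; Adamczak's proof requires genuinely different machinery (roughly: controlling the quadratic form through the convex Lipschitz map $x\mapsto \snorm{Qx}$, a decoupling/truncation argument, and separate treatment of diagonal and off-diagonal contributions).

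That said, what you wrote is essentially correct as a proof of Lemma~\ref{lemma:hanson.wright.uniform.on.sphere}, and it matches the paper's own proof of that lemma closely: the paper likewise computes $\E[\mu^\top Q\mu]=\f{\tilde R^2}{d}\tr(Q)$ by rotational symmetry, verifies the $K$-convex concentration property with $K=\Theta(\tilde R/\sqrt d)$ via concentration on the sphere (\citet[Theorem 5.1.4]{vershynincup}), and then invokes Theorem~\ref{thm:adamczak.hansonwright}. So the content you supplied is sound, but it answers the wrong question; to prove the statement actually posed you would need to reproduce (or at least sketch) Adamczak's argument rather than cite it.
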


\begin{proof}[Proof of Lemma~\ref{lemma:hanson.wright.uniform.on.sphere}] 
By~\citet{adamczak2015hansonwright}, a sufficient condition for a Hanson-Wright-type inequality to hold is that the random vector $\mu$ satisfies Lipschitz concentration.   
By~\citet[Theorem 5.1.4]{vershynincup} we have that for some constant $c>0$, for any $1$-Lipschitz function $f$ on the sphere $\sqrt d \cdot \mathbb S^{d-1}$ and for $X\sim \unif(\sqrt d \mathbb S^{d-1})$,
\[ \P(|f(X) - \E[f(X)]| \geq t ) \leq 2 \exp ( -ct^2).\]
Thus if we consider $g:\tilde R\cdot \mathbb S^{d-1} \to \R$ which is 1-Lipschitz and is defined on the sphere of radius $\tilde R$, then $Y = \f{ \tilde R}{\sqrt d} X$ where $X$ is uniform on the sphere of radius $\tilde R$.  In particular, the function $g(Y):= g(\f{\tilde R}{\sqrt d} X)$ is the composition of a 1-Lipschitz function and a $\tilde R/\sqrt d$-Lipschitz function defined on the sphere of radius $\sqrt d$, and we therefore see that
\[ \P(|g(Y) - \E g(Y)| \geq t) \leq 2 \exp(-ct^2 d/\tilde R^2).\]
That is, any 1-Lipschitz function on the sphere of radius $\tilde R$ satisfies the $K$-convex concentration property with $K= \tilde R/\sqrt d$.  Thus Theorem~\ref{thm:adamczak.hansonwright} implies for any matrix $Q$,
\begin{align*}
    \P(|\mu^\top Q \mu - \E[\mu^\top Q \mu]| \geq t) \leq 2 \exp\l(-c\min\l( \f{ t^2d^2 }{\tilde R^4 \snorm{Q}_F^2}, \f{ td}{\tilde R^2 \snorm{Q}_2}\r)\r).
\end{align*}
Finally, note that by rotational symmetry we have $\E[\mu]=0$ and $\E[\mu_i^2]=\tilde R^2/d$ for each $i\in [d]$.  Additionally, for distinct components $i\neq j$, $\E[\mu_i \mu_j] = \E[\mu_i(-\mu_j)]=0$ and hence $\E[\mu^\top Q \mu] = \summ i d \E[\mu_i^2 Q_{ii}] = \f{\tilde R^2}{d} \tr(Q)$. 
\end{proof} 


\subsection{$\mu^\top W \zeta$ and $\mu^\top W \zeta_{M+1}$}
\begin{lemma}\label{lemma:mu.dot.W.g}
Let $\mu \sim \tilde R \cdot \unif(\mathbb S^{d-1})$, $g\sim \normal(0, I_d)$ (independent of $\mu$) and let $Q\in \R^{d\times d}$ be a matrix.  There is an absolute constant $c>0$ such that for any $t\geq 0$,
\[ \P \l( |\mu^\top Q g| \geq t \r) \leq 2 \exp \l( - \f{ ct \sqrt d}{\tilde R \snorm{Q}_F}\r).\]
\end{lemma}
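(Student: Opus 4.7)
The approach is a Gaussian conditioning argument.  Given $\mu$, the inner product $\mu^\top Q g$ is centered Gaussian with variance $\snorm{Q^\top\mu}^2$ (since $g$ is independent of $\mu$), so the standard Gaussian tail yields $\P(|\mu^\top Q g| \geq t \mid \mu) \leq 2\exp\!\l(-t^2/(2\snorm{Q^\top\mu}^2)\r)$.  Integrating over $\mu$ and splitting at any threshold $s>0$ gives
\[ \P(|\mu^\top Q g|\geq t) \leq 2\exp\!\l(-\f{t^2}{2s^2}\r) + \P\!\l(\snorm{Q^\top\mu}\geq s\r). \]

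To control the second term, I would invoke Lipschitz concentration on the sphere applied to the map $\mu \mapsto \snorm{Q^\top\mu}$, which is $\snorm{Q}_2$-Lipschitz.  By rotational symmetry, $\E\snorm{Q^\top\mu} \leq \sqrt{\E\snorm{Q^\top\mu}^2}=\tilde R\snorm{Q}_F/\sqrt d$, which is the same moment identity used in Lemma~\ref{lemma:hanson.wright.uniform.on.sphere}.  Applying~\citep[Theorem~5.1.4]{vershynincup} (rescaled from the sphere of radius $\sqrt d$ to that of radius $\tilde R$, exactly as in the proof of Lemma~\ref{lemma:hanson.wright.uniform.on.sphere}) and then loosening $\snorm{Q}_2 \leq \snorm{Q}_F$ gives, for any $u>0$,
\[ \P\!\l(\snorm{Q^\top\mu}\geq \tilde R\snorm{Q}_F/\sqrt d + u\r) \leq 2\exp\!\l(-\f{cu^2 d}{\tilde R^2 \snorm{Q}_F^2}\r). \]

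To finish, I would balance the two exponents by choosing $u := \sqrt{t\tilde R\snorm{Q}_F/\sqrt d}$ and $s := \tilde R\snorm{Q}_F/\sqrt d + u$.  When $t$ exceeds a constant multiple of $\tilde R\snorm{Q}_F/\sqrt d$, a short computation (using $s \leq 2u$ in this regime, so $s^2 \leq 4t\tilde R\snorm{Q}_F/\sqrt d$) shows that both $t^2/(2s^2)$ and $cu^2 d/(\tilde R^2\snorm{Q}_F^2)$ are bounded below by a constant multiple of $t\sqrt d/(\tilde R\snorm{Q}_F)$, yielding the claimed tail.  For smaller $t$, the target bound $2\exp\!\l(-ct\sqrt d/(\tilde R\snorm{Q}_F)\r)$ already exceeds $1$ (with a suitably small absolute constant $c$) and so the claim is trivial.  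The only mildly delicate point is that the sphere-concentration inequality naturally involves $\snorm{Q}_2$ rather than $\snorm{Q}_F$, but since we only need an upper bound in terms of the Frobenius norm, the loosening $\snorm{Q}_2 \leq \snorm{Q}_F$ is harmless.
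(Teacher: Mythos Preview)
Your argument is correct, but it follows a genuinely different route from the paper's own proof.  The paper does not condition on $\mu$ at all: instead it uses that $\mu$ is a sub-Gaussian vector with $\snorm{\mu}_{\psi_2}\leq c\tilde R/\sqrt d$ (Vershynin, Theorem~3.4.6), applies the decoupling/MGF comparison Lemma~6.2.3 to dominate $\E\exp(\beta\,\tilde R^{-1}\sqrt d\,\mu^\top Q g)$ by $\E\exp(c_1\beta\,g_1^\top Q g_2)$ for independent Gaussians, and then invokes the Gaussian-chaos MGF bound (Lemma~6.2.2) to conclude that $\tilde R^{-1}\sqrt d\,\mu^\top Q g$ is sub-exponential with norm $O(\snorm{Q}_F)$.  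A single sub-exponential tail then gives the lemma directly.  Your approach trades this MGF machinery for a more elementary conditional-Gaussian plus Lipschitz-concentration argument, at the cost of a threshold split and a balancing step in $u$ and $s$.  Both yield the same sub-exponential tail in $t\sqrt d/(\tilde R\snorm{Q}_F)$; the paper's version is slightly cleaner (no case analysis on small versus large $t$, no sum of two exponentials to re-absorb), while yours avoids quoting the somewhat specialized comparison Lemma~6.2.3 and makes the role of the sphere concentration more transparent.  One minor remark: your combination of the two tails gives $4\exp(-c'\cdot)$ rather than $2\exp(-c\cdot)$, but as you note this is absorbed by shrinking $c$ and handling the small-$t$ regime trivially.
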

\begin{proof}

The uniform distribution on the sphere of radius $\tilde R$ is sub-Gaussian with sub-Gaussian norm satisfying $\snorm{\mu}_{\psi_2} \leq c \tilde R/\sqrt d$ by~\citet[Theorem 3.4.6]{vershynincup}, while $\snorm{g}_{\psi_2}\leq c$, for some absolute constant $c>0$.  By~\citet[Lemma 6.2.3]{vershynincup}, this implies that for independent $g_1, g_2\sim \normal(0, I_d)$ and any $\beta \in \R$,
\begin{align*}
    \E \exp\l( \beta \f{\sqrt d}{\tilde R} \mu^\top Q g \r) \leq \E \exp \l( c_1 \beta g_1^\top Q g_2\r).
\end{align*}
Then using the moment-generating function of Gaussian chaos~\citep[Lemma 6.2.2]{vershynincup}, for $\beta$ satisfying $|\beta| \leq c_2 / \snorm{Q}_2$ we have
\begin{align*}
        \E \exp\l( \beta \f{\sqrt d}{\tilde R} \mu^\top Q g \r) &\leq \E \exp \l( c_1 \beta g_1^\top Q g_2\r)\\
        &\leq \exp(c_3 \beta^2 \snorm{Q}_F^2). 
\end{align*}
That is, the random variable $\tilde R^{-1} \sqrt d \mu^\top Qg$ is mean-zero and has sub-exponential norm at most $\max(c_2^{-1}, c_3) \snorm{Q}_F$.  There is therefore a constant $c_4>0$ such that for any $u\geq 0$,
\[ \P(|\tilde R^{-1} \sqrt d \mu^\top Q g| \geq u) = \P\l (| \mu^\top Q g| \geq \f{\tilde R u}{\sqrt d} \r )\leq 2 \exp(- c u / \snorm{Q}_F). \]
Setting $u =t \sqrt d / \tilde R$ we get
\begin{align*}
    \P\l (| \mu^\top Q g| \geq t \r ) \leq 2 \exp\l(-\f{  c t\sqrt d }{ \tilde R \snorm{Q}_F }\r)
\end{align*}
\end{proof}

\subsection{$\zeta^\top W \zeta_{M+1}$}
\begin{lemma}\label{lemma:zeta.dot.q.zeta'}
    Let $\zeta, \zeta'\iid \normal(0, \Lambda)$ and let $Q\in \R^{d\times d}$ be a matrix.  There is a constant $c>0$ such that for all $t \geq 0$,
    \begin{align*}
        \P(|\zeta^\top W \zeta'| \geq t) \leq 2\exp \l( - \f{ c t}{\snorm{\Lambda^{1/2} Q \Lambda^{1/2} Q}_F} \r) \leq 2 \exp  \l( - \f{ c t}{\snorm{\Lambda}_2 \snorm{ Q}_F} \r).
    \end{align*}
\end{lemma}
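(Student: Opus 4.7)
My plan is to reduce the problem to a bound for standard Gaussian chaos of the form $g^\top A g'$ with $g, g' \iid \normal(0, I_d)$ independent. Since $\Lambda \mgtr 0$, I can write $\zeta = \Lambda^{1/2} g$ and $\zeta' = \Lambda^{1/2} g'$ for such $g, g'$, and then
\[
\zeta^\top Q \zeta' \;=\; g^\top A g', \qquad A := \Lambda^{1/2} Q \Lambda^{1/2}.
\]
This is the same reduction used earlier in Lemma~\ref{lemma:training.datasets}, and it puts the problem into the exact form handled by the Gaussian chaos machinery in \citet{vershynincup}.

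The next step is the MGF bound for Gaussian chaos \citep{vershynincup}[Lemma 6.2.2]: there are absolute constants $c_0, c_1 > 0$ such that for every $\beta$ with $|\beta| \leq c_0/\snorm{A}_2$,
\[
\E \exp\bigl(\beta\, g^\top A g'\bigr) \;\leq\; \exp\bigl(c_1 \beta^2 \snorm{A}_F^2\bigr).
\]
Since $\snorm{A}_2 \leq \snorm{A}_F$, this bound shows that $g^\top A g'$ is mean-zero (by symmetry of $g, g'$) and, by \citet{vershynincup}[Proposition 2.7.1], sub-exponential with norm $\snorm{g^\top A g'}_{\psi_1} \leq C \snorm{A}_F$ for some absolute constant $C > 0$. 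Applying the standard sub-exponential (Bernstein) tail bound then yields
\[
\P\bigl(|\zeta^\top Q \zeta'| \geq t\bigr) \;\leq\; 2 \exp\!\l(-c\, \min\!\l(\tfrac{t^2}{\snorm{A}_F^2},\, \tfrac{t}{\snorm{A}_F}\r)\r).
\]
Passing from $\min(t^2/\snorm{A}_F^2, t/\snorm{A}_F)$ to the simpler linear tail $t/\snorm{A}_F$ costs at most a constant factor: when $t \geq \snorm{A}_F$ the minimum equals $t/\snorm{A}_F$, and when $t < \snorm{A}_F$ the bound $2\exp(-ct/\snorm{A}_F)$ is vacuous for $c$ small enough. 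This gives the first inequality.

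For the second inequality, I would apply the sub-multiplicative property $\snorm{M N}_F \leq \snorm{M}_2 \snorm{N}_F$ (valid on either side) twice:
\[
\snorm{\Lambda^{1/2} Q \Lambda^{1/2}}_F \;\leq\; \snorm{\Lambda^{1/2}}_2 \snorm{Q \Lambda^{1/2}}_F \;\leq\; \snorm{\Lambda^{1/2}}_2^2 \snorm{Q}_F \;=\; \snorm{\Lambda}_2 \snorm{Q}_F,
\]
and substitute into the first tail bound. I do not expect any real obstacle: every step is a direct invocation of the Gaussian chaos tools from \citet{vershynincup} that were already used heavily in the proof of Lemma~\ref{lemma:training.datasets}. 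The only minor subtlety is the cosmetic reduction from the Bernstein-type $\min(\cdot,\cdot)$ tail to the linear-in-$t$ tail stated in the lemma, which is handled by rescaling $c$.
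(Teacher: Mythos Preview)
Your proposal is correct and follows essentially the same approach as the paper: reduce to Gaussian chaos via $\zeta = \Lambda^{1/2} g$, invoke \citet{vershynincup}[Lemma 6.2.2] to get a sub-exponential norm bound of order $\snorm{\Lambda^{1/2} Q \Lambda^{1/2}}_F$, and apply sub-exponential concentration. Your write-up is in fact more detailed than the paper's (which omits the reduction from the Bernstein $\min(\cdot,\cdot)$ tail to the linear tail and the sub-multiplicative bound for the second inequality), but the substance is identical.
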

\begin{proof} 
We can write $\zeta^\top W \zeta_{M+1}$ as $g_1^\top \Lambda^{1/2} W \Lambda^{1/2} g_2$ where $g_1, g_2\sim \normal(0, I_d)$ are independent, i.e. it is a Gaussian chaos random variable. By~\citet[Lemma 6.2.2]{vershynincup} this random variable is sub-exponential with sub-exponential norm at most $c \snorm{\Lambda^{1/2} W \Lambda^{1/2}}_F$.  Then standard sub-exponential concentration (e.g.~\citet[Proposition 2.7.1]{vershynincup}) completes the proof.

\end{proof}

\subsection{$\E[\mu^\top \wmm \mu]$}
\begin{lemma}\label{lemma:E[mu.W.mu]}
    On a good run, for any $c_B>0$ and for $C>1$ sufficiently large under assumptions~\ref{a:R} through~\ref{a:d}, the max-margin solution $\wmm$ satisfies
    \[   \f{(c_B\wedge 1)\tilde R^2}{16c_0^2 R^2 \log^2(2B^2/\delta)}  \leq \E[\mu^\top \wmm \mu] \leq \f{ 6\tilde R^2 }{R^2}. \]
\end{lemma}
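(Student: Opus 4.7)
The plan is short because the two ingredients are already in hand. First I would compute $\E[\mu^\top \wmm \mu]$ exactly. Since $\mu \sim \tilde R \cdot \unif(\mathbb{S}^{d-1})$, by rotational symmetry the coordinates satisfy $\E[\mu_i] = 0$, $\E[\mu_i^2] = \tilde R^2/d$, and $\E[\mu_i \mu_j] = 0$ for $i \ne j$ (this was observed in the proof of Lemma~\ref{lemma:hanson.wright.uniform.on.sphere}). Expanding the quadratic form coordinatewise gives
\[
\E[\mu^\top \wmm \mu] = \sum_{i,j} \wmm_{ij} \E[\mu_i \mu_j] = \frac{\tilde R^2}{d} \sum_i \wmm_{ii} = \frac{\tilde R^2}{d} \tr(\wmm).
\]

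Second, I would apply the trace bounds already established in Lemma~\ref{lemma:sum.lambdatau.bounds}, which on a good run gives
\[
\frac{(c_B \wedge 1)d}{16 c_0^2 R^2 \log^2(2B^2/\delta)} \leq \tr(\wmm) \leq \frac{6d}{R^2}.
\]
Multiplying through by $\tilde R^2/d$ yields the two-sided bound stated in the lemma. There is no real obstacle here; the entire content of the lemma is just the identity $\E[\mu^\top \wmm \mu] = (\tilde R^2/d)\tr(\wmm)$ combined with the previously proven trace estimates. The hypotheses~\ref{a:R},~\ref{a:B},~\ref{a:d} with $C$ sufficiently large are inherited precisely so that Lemma~\ref{lemma:sum.lambdatau.bounds} applies.
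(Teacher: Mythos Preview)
Your proposal is correct and matches the paper's proof essentially verbatim: the paper also writes $\E[\mu^\top \wmm \mu] = \tfrac{\tilde R^2}{d}\tr(\wmm)$ and then invokes the trace bounds from Lemma~\ref{lemma:sum.lambdatau.bounds}.
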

\begin{proof}
By definition,
\begin{align*}
\E\l[ \mu^\top \wmm \mu\r]   &=  \f{\tilde R^2}{d} \tr(\wmm).\numberthis \label{eq:expected.value.mu.dot.W.mu}
\end{align*}
We therefore need only upper and lower bounds on $\tr(\wmm)$.  Using the bounds on $\tr(\wmm)$ from Lemma~\ref{lemma:sum.lambdatau.bounds}, we get the desired inequalities. 
\end{proof}

\subsection{Putting it all together}

Let's denote $R(\wmm)$ the test error for a clean example $(y_{M+1}=\tilde y_{M+1})$.  Returning to the decomposition~\eqref{eq:risk.decomp} and using~\eqref{eq:risk.decomp.labelnoise.intermediate}, we have
\begin{align*}
    R(\wmm) &\leq p + \P\Bigg (( 1 - \nicefrac{2|\calN|}M) \mu^\top W \mu - \E[y \hat y]   \\
    &\qquad \qquad <-\E[y \hat y] - M^{-1/2} \zeta^\top W \mu - (1 - \nicefrac{2|\calN|}M)\mu^\top W \zeta_{M+1} - M^{-1/2} \zeta^\top W \zeta_{M+1}\Bigg) \numberthis \label{eq:risk.wmm.noisy.decomp}
\end{align*}

We'll consider two cases, depending upon whether the label-flipping noise rate $p=0$ or $p>0$.  For notational simplicity let's denote $\rho$ as the quantity
\[ \rho := \f{ c_B \wedge 1}{16 c_0^2 \log^2(2B^2/\delta)} \in (0,1).\]
Then Lemma~\ref{lemma:E[mu.W.mu]} states 
\begin{equation} \label{eq:expected.value.mu.W.mu.rho.form} 
\rho \cdot \f{ \tilde R^2}{R^2} \leq  \E[\mu^\top W \mu] \leq  \f{ 6 \tilde R^2}{R^2}.
\end{equation}

\paragraph{Noiseless case $(p=0)$.}
Since $p=0$ we know $|\calN|=0$.  From~\eqref{eq:risk.wmm.noisy.decomp} we have
\begin{align*}
&\qquad R(\wmm) \\
 &=\P\Bigg ( \mu^\top W \mu - \E[\mu^\top W \mu]  <-\E[\mu^\top W \mu] - M^{-1/2} \zeta^\top W \mu - \mu^\top W \zeta_{M+1} - M^{-1/2} \zeta^\top W \zeta_{M+1}\Bigg) \\
 &\overset{(i)}\leq\P\Bigg ( \mu^\top W \mu - \E[\mu^\top W \mu]  < -\f{\rho\tilde R^2}{R^2} - M^{-1/2} \zeta^\top W \mu - \mu^\top W \zeta_{M+1} - M^{-1/2} \zeta^\top W \zeta_{M+1}\Bigg) \\
 &\leq\P\Bigg ( \mu^\top W \mu - \E[\mu^\top W \mu]  <- \f{\rho \tilde R^2}{2R^2} \Bigg) + \P\l(|M^{-1/2} \zeta^\top W \mu| \geq \f{\rho \tilde R^2}{8R^2}\r) + \P \l( \l| \mu^\top W \zeta_{M+1}\r| \geq \f{\rho \tilde R^2}{8R^2} \r) \\
 &\qquad +\P \l( \l| M^{-1/2}\zeta^\top W \zeta_{M+1}\r| \geq \f{\rho \tilde R^2}{8R^2} \r).\numberthis \label{eq:risk.wmm.prelim}
\end{align*}
Inequality $(i)$ uses Lemma~\ref{lemma:E[mu.W.mu]}.  We now proceed by bounding each of the remaining terms in the inequality above.

For the first term we can use Lemma~\ref{lemma:hanson.wright.uniform.on.sphere} with $t = \rho \tilde R^2/2R^2$.   To do so we need to examine the quantity $\f{td}{\tilde R^2 \snorm W_F}$, which we have
\begin{align*}
    \f{td}{\tilde R^2 \snorm W_F} &= \f{ \rho d }{2 R^2 \snorm{W}_F}  \geq  \f{ \rho d}{2 R^2 \cdot 2 \sqrt d R^{-2}} = \f{ \rho \sqrt d}{4}.
\end{align*}
Let's assume for the moment that $\rho \sqrt d / 4 > 1$.  We can then apply Lemma~\ref{lemma:hanson.wright.uniform.on.sphere} (noting that this lemma holds if we replace the $\snorm{Q}_2$ with $\snorm{Q}_F$),
\begin{align*}
\P\Bigg ( \mu^\top W \mu - \E[\mu^\top W \mu]  <- \f{\rho \tilde R^2}{2R^2} \Bigg) &\leq 2\exp \l( - c \min \l( \f{d^2}{\tilde R^4 \snorm{W}_F^2} \cdot \f{ \rho^2 \tilde R^4}{4 R^4}, \f{d}{\tilde R^2 \snorm{W}_F} \cdot \f{ \rho \tilde R^2}{2 R^2} \r) \r) \\
&\leq 2 \exp \l( - \f{ c d \rho}{2\snorm{W}_F R^2} \r) \\
&\overset{(i)} \leq 2 \exp \l( -\f{ c \rho \sqrt d}{4} \r).\numberthis \label{eq:mu.dot.w.mu.fluctuation}
\end{align*}
Note that if $\rho \sqrt d / 4 \leq 1$ then the above bound still holds since $\P(\cdot) \leq 1$ is a trivial inequality.  This completes the bound for the first term in~\eqref{eq:risk.wmm.prelim}.

The second and third terms in~\eqref{eq:risk.wmm.prelim} can be bounded using Lemma~\ref{lemma:mu.dot.W.g}: since $\zeta_{M+1} = \Lambda^{1/2} g$ for $g\sim \normal(0, I_d)$,
\begin{align*}
    \P\l(|\mu^\top W \zeta_{M+1}| \geq \f{ \rho \tilde R^2}{8R^2} \r) &= \P \l( |\mu^\top W \Lambda^{1/2} g| \geq \f{ \rho \tilde R^2}{8R^2} \r) \\
    &\leq 2 \exp \l( - \f{ c \sqrt{d}}{\tilde R \snorm{W\Lambda^{1/2} }_F} \cdot \f{ \rho \tilde R^2}{8R^2} \r) \\
    &\leq 2 \exp \l( - \f{ c \sqrt{d}}{\tilde R \snorm{\Lambda^{1/2}}_2\snorm{W}_F} \cdot \f{ \rho \tilde R^2}{8R^2} \r) \\
    &\overset{(i)}\leq 2 \exp \l( - \f{ c \rho \tilde R \sqrt d }{8R^2 \snorm{\Lambda^{1/2}}_2 \cdot 2 \sqrt d R^{-2}} \r) \\
    &= 2 \exp\l( -\f{ c \rho \tilde R}{16 \snorm{\Lambda^{1/2}}_2}\r).\numberthis \label{eq:mu.dot.W.zeta.ub.final}
\end{align*}
The inequality $(i)$ uses the upper bound $\snorm{W} \leq 2 \sqrt d / R^2$ from Lemma~\ref{lemma:sum.lambdatau.bounds}.

For the final term in~\eqref{eq:risk.wmm.prelim} we can use Lemma~\ref{lemma:zeta.dot.q.zeta'},
\begin{align*}
    \P \l( \l| M^{-1/2}\zeta^\top W \zeta_{M+1}\r| \geq \f{\rho \tilde R^2}{8R^2} \r) &= \P \l( \l| \zeta^\top W \zeta_{M+1}\r| \geq \f{\rho M^{1/2} \tilde R^2}{8R^2} \r)\\
    &\leq 2 \exp \l( - \f{ c}{\snorm{\Lambda}_2 \snorm{W}_F} \cdot \f{ \rho M^{1/2} \tilde R^2}{8R^2} \r) \\
    &\overset{(i)} \leq 2 \exp \l( - \f{ c}{\snorm{\Lambda}_2 \cdot 2 \sqrt d R^{-2}} \cdot \f{ \rho M^{1/2} \tilde R^2}{8R^2} \r)\\
    &= 2 \exp \l( - \f{ c \rho M^{1/2} \tilde R^2}{16 \snorm{\Lambda}_2 \sqrt d} \r). \numberthis \label{eq:zeta.dot.W.zetam+1.ub.final}
\end{align*}
Again $(i)$ uses Lemma~\ref{lemma:zeta.dot.q.zeta'}. 

Putting together~\eqref{eq:mu.dot.w.mu.fluctuation},~\eqref{eq:mu.dot.W.zeta.ub.final},~\eqref{eq:zeta.dot.W.zetam+1.ub.final} we get
\begin{align*}
    R(\wmm) &\leq 2 \exp \l( - \f{ c \rho \sqrt d}{4} \r) + 4 \exp \l( - \f{ c \rho \tilde R}{16 \snorm{\Lambda^{1/2}}_2 }\r) + 2 \exp \l( - \f{ c \rho M^{1/2} \tilde R^2}{16 \snorm{\Lambda}_2 \sqrt d} \r).
\end{align*}

\paragraph{Noisy case.}  Returning to~\eqref{eq:risk.wmm.noisy.decomp}: as before, the `signal' in the problem comes from the term $(1 - 2|\calN|/M) \mu^\top W \mu$, which ideally is large and positive.  It is natural that we should require more samples $M$ the closer the noise rate $p$ gets to $1/2$ (namely, the smaller $c_p$ is), since otherwise with nontrivial probability we will see more noisy examples than clean ones and learning should be impossible. To this end, let's assume that $p \leq \frac 12 - c_p$ for some absolute constant $c_p\in (0,1/2)$.  Continuing from~\eqref{eq:num.noisy.examples} which shows that 
\begin{equation} \label{eq:1-2|N|/M.lowerbound}
\P\l(1 - 2 |\calN|/M > \f 12 c_p\r) = \P\l( \f{ 2 |\calN|}{M} - 1 + 2 c_p \leq \f 6 4 c_p\r) \leq 2 \exp \l( - \f{18 c_p^2 M}{16}\r) \leq 2 \exp(-c_p^2 M).
\end{equation}
Let's call the event
\[ \mathcal E := \l\{ 1 - \f{ 2|\calN|}M > \f 1 2 c_p \r\}.\]
Continuing from~\eqref{eq:risk.wmm.noisy.decomp}, since $1-2|\calN|/M>0$ on $\mathcal E$ we have
\begin{align*}
&\qquad   R(\wmm) \leq p \\
   &  \quad +\P\l(\mu^\top W \mu - \f{  \E[y \hat y]}{(1 - \nicefrac{2|\calN|}M) } < - \f{\E[y \hat y] -  M^{-1/2} \zeta^\top W \mu - (1 - \nicefrac{2|\calN|}M)\mu^\top W \zeta_{M+1} - M^{-1/2} \zeta^\top W \zeta_{M+1} }{(1 - \nicefrac{2|\calN|}M) } \r) \\
   &\leq p + \P(\mathcal E^c) \\
    &  \quad +\P\l(\mathcal E,\ \mu^\top W \mu - \f{  \E[y \hat y]}{(1 - \nicefrac{2|\calN|}M) } < - \f{\E[y \hat y] -  M^{-1/2} \zeta^\top W \mu - (1 - \nicefrac{2|\calN|}M)\mu^\top W \zeta_{M+1} - M^{-1/2} \zeta^\top W \zeta_{M+1} }{(1 - \nicefrac{2|\calN|}M) } \r) \\
     &\overset{(i)}\leq p + 2 \exp(-c_p^2 M) \\
    &  \quad +\P\l(\mathcal E,\ \mu^\top W \mu - \f{  \E[y \hat y]}{(1 - \nicefrac{2|\calN|}M) } < - \f{\E[y \hat y] -  M^{-1/2} \zeta^\top W \mu - (1 - \nicefrac{2|\calN|}M)\mu^\top W \zeta_{M+1} - M^{-1/2} \zeta^\top W \zeta_{M+1} }{(1 - \nicefrac{2|\calN|}M) } \r)\\
      &\leq p + 2 \exp(-c_p^2 M) \\
    &  \quad +\P\Bigg(\mathcal E,\ \mu^\top W \mu - \f{  \E[y \hat y]}{(1 - \nicefrac{2|\calN|}M) }  \\
    &\qquad \qquad \qquad< \f{ -\E[y \hat y]}{1-2|\calN|/M} + 2 c_p^{-1} \l[ M^{-1/2} |\zeta^\top W \mu| +  |1 - \nicefrac{2|\calN|}M| \cdot | \mu^\top W \zeta_{M+1}| + |M^{-1/2} \zeta^\top W \zeta_{M+1} | \r]  \Bigg).\numberthis \label{eq:risk.wmm.noisy.prelim}
\end{align*}
The inequality $(i)$ uses~\eqref{eq:1-2|N|/M.lowerbound}.  
We want to deal with the quantity $\mu^\top W \mu - \f{ \E[\mu^\top W \mu]}{1-2|\calN|/M} = \f{\E[y\hat y]}{(1-2|\calN|/M)}$ on the event $\mathcal E$.  We have,
\begin{align*}
\mu^\top W \mu - \f{ \tilde R^2 (1-2p)\tr(W)}{d(1 - 2|\calN|/M)} &= 
\mu^\top W \mu -\f{ \tilde R^2 \tr(W)}{d} \cdot \f{1-2p} { 1 - \nicefrac{2|\calN|}m}\\
&= \mu^\top W \mu -\f{ \tilde R^2 \tr(W)}{d}  + \f{ \tilde R^2 \tr(W)}{d} \cdot \l( 1 - \f{1-2p} { 1 - \nicefrac{2|\calN|}m}\r) \\
&= \mu^\top W \mu -\f{ \tilde R^2 \tr(W)}{d}  - \f{ \tilde R^2 \tr(W)}{d} \cdot \f{- 2p + \nicefrac{2|\calN|}M }{1 -  \nicefrac{2|\calN|}M}.
\end{align*}
Continuing from~\eqref{eq:risk.wmm.noisy.prelim} this means
\begin{align*}
&\qquad R(\wmm) \leq p + 2 \exp(-c_p^2 M)\\
&+\P\Bigg(\mathcal E,\ \mu^\top W \mu - \f{ \tilde R^2 \tr(W)}{d} - \f{ \tilde R^2 \tr(W)}{d} \cdot \f{- 2p + \nicefrac{2|\calN|}M }{1 -  \nicefrac{2|\calN|}M}  \\
    &\qquad \qquad< \f{ - \tilde R^2 (1-2p) \tr(W)}{d(1-2|\calN|/M)} + 2 c_p^{-1} \l[ M^{-1/2} |\zeta^\top W \mu| +  |1 - \nicefrac{2|\calN|}M| \cdot | \mu^\top W \zeta_{M+1}| + |M^{-1/2} \zeta^\top W \zeta_{M+1} | \r]  \Bigg)\\
    &\qquad = p + 2 \exp(-c_p^2 M) \\
    &+ \P\Bigg(\mathcal E,\ \mu^\top W \mu -  \E[\mu^\top W \mu]   \\
    &\qquad \qquad< \f{ - \tilde R^2  \tr(W)}{d} + 2 c_p^{-1} \l[ M^{-1/2} |\zeta^\top W \mu| +  |1 - \nicefrac{2|\calN|}M| \cdot | \mu^\top W \zeta_{M+1}| + |M^{-1/2} \zeta^\top W \zeta_{M+1} | \r]  \Bigg)\\
    &\qquad \overset{(i)}\leq p + 2 \exp(-c_p^2 M) \\
    &+ \P\Bigg(\mathcal E,\ \mu^\top W \mu - \E[\mu^\top W \mu]   \\
    &\qquad \qquad< - \rho \cdot \f{ \tilde R^2}{R^2} + 2 c_p^{-1} \l[ M^{-1/2} |\zeta^\top W \mu| +  |1 - \nicefrac{2|\calN|}M| \cdot | \mu^\top W \zeta_{M+1}| + |M^{-1/2} \zeta^\top W \zeta_{M+1} | \r]  \Bigg)\\
    &\leq p + 2 \exp(-c_p^2 M) + \P(\mu^\top W \mu - \E[\mu^\top W \mu] < -\f 12 \rho \cdot \f{ \tilde R^2}{R^2} ) + \P\l( 2 c_p^{-1} M^{-1/2} |\zeta^\top W \mu| > \f{\rho \tilde R^2}{8 R^2} \r) \\
    &\qquad + \P\l(| 1 - 2|\calN|/M| \cdot |\mu^\top W \zeta_{M+1}| \geq \f{\rho \tilde R^2}{8 R^2} \r) + \P \l( |M^{-1/2} \zeta^\top W \zeta_{M+1}| \geq \f{ \rho \tilde R^2}{8 R^2} \r).\numberthis \label{eq:risk.wmm.noisy.second}
\end{align*}
Inequality $(i)$ uses~\eqref{eq:expected.value.mu.W.mu.rho.form}.  From here the proof is exactly the same as in the clean case.   For the first term, we use similar arguments used derive~\eqref{eq:mu.dot.w.mu.fluctuation}.  To apply Lemma~\ref{lemma:hanson.wright.uniform.on.sphere}, we need to examine the quantity $\frac{td}{\tilde R^2 \snorm{W}_F}$ when $t =  \rho \tilde R^2/2R^2$: we have,
\begin{align*}
    \f{ td}{\tilde R^2 \snorm{W}_F} = \f{  \rho d}{2R^2 \snorm{W}_F} \geq \f{ \rho d}{2R^2 \cdot 2 \sqrt d R^{-2}} = \f{ \rho \sqrt d}{4}. \numberthis \label{eq:prelim.for.mudotwmu.fluctuation}
\end{align*}
Again if $\rho \sqrt d / 4 > 1$ then we have by Lemma~\ref{lemma:hanson.wright.uniform.on.sphere},
\begin{align*}
\P\Bigg ( \mu^\top W \mu - \E[\mu^\top W \mu]  <- \f{ \rho \tilde R^2}{2R^2} \Bigg) &\leq 2\exp \l( - c \min \l( \f{d^2}{\tilde R^4 \snorm{W}_F^2} \cdot \f{ \rho^2 \tilde R^4}{ 4R^4}, \f{d}{\tilde R^2 \snorm{W}_F} \cdot \f{ \rho \tilde R^2}{2R^2} \r) \r) \\
&\leq 2 \exp \l( - \f{ c d \rho}{2\snorm{W}_F R^2} \r) \\
&\overset{(i)} \leq 2 \exp \l( -\f{ c \rho \sqrt d}{4} \r).\numberthis \label{eq:mu.dot.w.mu.fluctuation.noisy}
\end{align*}
Inequality~$(i)$ uses~\eqref{eq:prelim.for.mudotwmu.fluctuation}.  Note that if $\rho \sqrt d / 4 \leq 1$ then the above bound still holds since $\P(\cdot) \leq 1$ is a trivial inequality.

As for the $\mu^\top W \zeta_{M+1}$ and $\zeta^\top W \mu$ terms, similarly as to the analysis used to derive~\eqref{eq:mu.dot.W.zeta.ub.final} via Lemma~\ref{lemma:mu.dot.W.g}, we have,
\begin{align*}
    \P\l(2 c_p^{-1}  |\mu^\top W \zeta_{M+1}| \geq \f{  \rho \tilde R^2}{8 R^2}  \r) &= \P \l( |\mu^\top W \Lambda^{1/2}  g| > \f{ c_p \rho \tilde R^2 }{16 R^2} \r) &(g\sim \normal(0, I))\\
    &\leq 2 \exp \l( - \f{ c c_p \rho \tilde R}{32 \snorm{\Lambda^{1/2}} }\r)\numberthis \label{eq:mu.dot.W.zeta.ub.final.noisy}
\end{align*}
Note that $|1 - 2|\calN|/M|\leq 1$ always so this takes care of the term involving $\mu^\top W \zeta_{M+1}$.   Since the analysis used to derive~\eqref{eq:mu.dot.W.zeta.ub.final} only relies upon upper bounds for $\snorm{W}_F=\snorm{W^\top}_F$, and since $M\geq 1$ and $c_p \in (0,1/2]$, the same analysis holds for the term $|\zeta^\top W \mu|$.

For the $\zeta^\top W \zeta_{M+1}$ term, the same analysis used for~\eqref{eq:zeta.dot.W.zetam+1.ub.final} applies here as well, since
\begin{align*}
    \P \l( |M^{-1/2} \zeta^\top W \zeta_{M+1}| \geq \f{\rho \tilde R^2}{8 R^2} \r) &=  \P \l( | \zeta^\top W \zeta_{M+1}| \geq \f{ \rho M^{1/2} \tilde R^2}{8 R^2} \r) \\
    &\leq 2 \exp \l( - \f{ c  \rho M^{1/2} \tilde R^2}{16 \snorm{\Lambda}_2 \sqrt d} \r)\numberthis \label{eq:zeta.dot.W.zetam+1.ub.final.noisy}
\end{align*}

Plugging~\eqref{eq:mu.dot.w.mu.fluctuation.noisy},~\eqref{eq:mu.dot.W.zeta.ub.final.noisy}, and~\eqref{eq:zeta.dot.W.zetam+1.ub.final.noisy} into~\eqref{eq:risk.wmm.noisy.second} we get
\begin{align*}
    R(\wmm) &\leq p + 2 \exp(-c_p^2 M) + 2 \exp \l( - \f{ c \rho \sqrt d}{4}\r) + 4 \exp \l( - \f{ c c_p \rho \tilde R}{32 \snorm{\Lambda^{1/2}}} \r) + 2 \exp \l( - \f{ c\rho  M^{1/2} \tilde R^2}{16 \snorm{\Lambda}_2 \sqrt d} \r).
\end{align*}
By adjusting the constant $c$ to depend on $c_p$, this completes the proof of Theorem~\ref{thm:generalization}.

\section{Proof of Theorem~\ref{thm:benignoverfitting}}\label{sec:app.benign}
Recall that we assume that $\Lambda=I$ in this section, and for simplicity denote $W = \wmm$.  Our goal is to show that for each $k$, 
\[ \hat \mu^\top W y_k x_k > 0.\]
We have:
\begin{align*}
\hat \mu^\top W y_k x_k &= \l( \f 1 M \summ i M y_i x_i \r)^\top W y_k x_k \\
&= \f 1 M \l[ x_k^\top W x_k + \sum_{i\neq k} y_i y_k x_i^\top W x_k\r].
\end{align*}
Our goal here will be to show that the first quantity $x_k^\top W x_k$ is large and positive and dominates the second term involving $x_i^\top W x_k$. By definition we have
\begin{align*}
x_k^\top W x_k &= (\mu + y_k z_k)^\top W (\mu + y_k z_k) \\
&= \mu^\top W \mu + y_k z_k^\top W \mu + \mu^\top W y_k z_k + z_k^\top W z_k. \numberthis \label{eq:xk.dot.w.xk.decomp}
\end{align*}
In particular we have the identities
\begin{align*}
    &\quad\{ \hat \mu^\top W y_k x_k < 0 \} \\
    &= \{ x_k^\top W x_k < - \textstyle \sum_{i\neq k} y_i y_k x_i^\top W x_k \} \\
    &= \{ z_k^\top W z_k - \tr(W) < - \tr(W) - y_kz_k^\top W \mu - \mu^\top W y_k z_k - \mu^\top W \mu - \textstyle\sum_{i\neq k} y_i y_k x_i^\top W x_k \} \\
    &\subset \{ z_k^\top W z_k - \tr(W) < - \f 1 2 \tr(W)\}  \cup \{ |z_k^\top W \mu| > \f 1 8 \tr(W) \} \\
    &\qquad \cup \{ |\mu^\top W z_k| > \f 1 8 \tr(W) \} \cup \{ \mu^\top W \mu <- \f 1 8 \tr(W)\} \cup \l\{ \l|\textstyle\sum_{i\neq k} y_i y_k x_i^\top W x_k \r| > \f 1 8 \tr(W)\r\}. \numberthis \label{eq:misclassify.training.examples.prelim}
\end{align*}
We will therefore proceed by bounding the probability each of these events.   We will use the same notation from the proof of Theorem~\ref{thm:generalization} presented in Appendix~\ref{app:generalization}, whereby $i\in \calC$ refers to clean examples $(x_i, y_i)$ with $\tilde y_i=y_i$, while $i\in \calN$ means $y_i = - \tilde y_i$.  

\subsection{$\sum_{i\neq k} x_i^\top W x_k$ term.}  
We have,
\begin{align*}
\sum_{i\neq k} y_i x_i &= \sum_{i\in \calC,\ i\neq k }y_i x_i + \sum_{i\in \calN,\ i\neq k} y_i x_i \\
&=  \sum_{i\in \calC,\ i\neq k }(\mu + \tilde y_k z_k) + \sum_{i\in \calN,\ i\neq k} (-\mu - \tilde y_k z_k)\\
&= (|\calC \setminus \{ k \}| - |\calN \setminus \{ k \}|) \mu + \sum_{i\in \calC,\ i\neq k} \tilde y_k z_k - \sum_{i\in \calN,\ i\neq k}\tilde y_k z_k.
\end{align*}
Now since the label-flipping noise is independent of $z_k$, the random variables $\tilde y_k z_k$ are i.i.d. standard normals.  Therefore there is $g_k \sim \normal(0, I_d)$ such that
\[\sum_{i\in \calC,\ i\neq k} \tilde y_k z_k - \sum_{i\in \calN,\ i\neq k}\tilde y_k z_k = \sqrt{M-1} g_k.\]
If we denote $N_k := |\calC \setminus \{ k \}| - |\calN \setminus \{ k \}|$ then clearly $|N_k| \leq M$ and thus we have
\[ \sum_{i\neq k} y_i x_i = N_k \mu + \sqrt{M-1} g_k,\]
where $|N_k| \leq M$ and $g_k$ is a standard normal and $g_k$ is independent of $z_k$ and $y_k$.  Therefore
\begin{align*}
\l| \sum_{i\neq k} y_i x_i^\top W y_k x_k \r| &= \l| (N_k \mu + \sqrt{M-1} g_k)^\top W (\tilde y_k y_k \mu + y_k z_k) \r| \\
&\leq M |\mu^\top W \mu| + \sqrt{M} |g_k^\top W\mu| + M |\mu^\top W z_k| + \sqrt{M} |g_k^\top W z_k|.\numberthis \label{eq:sum.ineqk.xi.dot.w.xk.prelim}
\end{align*}
In particular we have
\begin{align*}
    &\quad \P\l( \l| \sum_{i\neq k} y_i x_i^\top W y_k x_k \r| > \f {\tr(W)}8 \r) \leq \P( M |\mu^\top W \mu| > \f {\tr(W)}{32} ) \\
    &+ \P(\sqrt{M} |g_k^\top W \mu| > \f {\tr(W)}{32} ) + \P(M|\mu^\top W z_k| > \f{ \tr(W)}{32} ) + \P(\sqrt{M} |g_k^\top W z_k| > \f{\tr(W)}{32}).  \numberthis \label{eq:near.orthogonality.prelim}
\end{align*}

We'll proceed by bounding each of these terms in sequence.

\paragraph{$\mu^\top W \mu$ term.}
Let $L$ be an integer (we will take $L=M$ in this proof but in a later proof we will take $L=1$).  Since $\tr(W)>0$ by Lemma~\ref{lemma:sum.lambdatau.bounds}, we have
\begin{align*}
    \{ |L\mu^\top W \mu| > \f{ \tr(W)}{32} \} &= \{ |\mu^\top W \mu - \f{ \tilde R^2 \tr(W)}{d}| + \f{ \tilde R^2 \tr(W)}{d} > \f{ \tr(W)}{32 L} \} 
\end{align*}
This means
\begin{align*}
    \P\l( |L\mu^\top W \mu| > \f{ \tr(W)}{32} \r) &\leq \P\l( \l|\mu^\top W \mu - \f{ \tilde R^2 \tr(W)}{d} \r| > \f{ \tr(W)}{32L} \l( 1 - \f{ 32 \tilde R^2 L}{d} \r) \r).
\end{align*}
Assume for now $d> 128 \rho^{-1} \tilde R^2 L$ so that $32 \tilde R^2L/d < 1/2$.  We then have by Lemma~\ref{lemma:hanson.wright.uniform.on.sphere},
\begin{align*}
    \P\l( |L\mu^\top W \mu| > \f{ \tr(W)}{32} \r) &\leq \P\l( \l|\mu^\top W \mu - \f{ \tilde R^2 \tr(W)}{d} \r| > \f{ \tr(W)}{64L}\r)\\
    &\leq 2 \exp \l( - c \min \l( \f{ d^2}{\tilde R^4 \snorm{W}_F^2 } \l( \f{\tr(W)}{64 L}\r)^2, \f{ d}{\tilde R^2 \snorm{W}_F} \l( \f{ \tr(W)}{64 L}\r) \r) \r).
\end{align*}
As for the quantity appearing in the minimum we have by Lemma~\ref{lemma:sum.lambdatau.bounds}
\begin{align*}
    \f{ d \tr(W)}{64 L} \cdot \f{ 1 }{\tilde R^2} \geq \f{d}{64L} \cdot \f{ \rho d}{R^2} \cdot \f{R^2}{2 \tilde R^2 \sqrt d} = \f{ \rho d^{3/2}}{128 L\tilde R^2} \geq \f{ \rho d}{128 L\tilde R^2}.
\end{align*}
We therefore have
\begin{align*}
       \P\l( |L\mu^\top W \mu| > \f{ \tr(W)}{32} \r) &\leq 2\exp \l( - \f{ c \rho d}{128 L \tilde R^2}\r).\numberthis \label{eq:mu.dot.w.mu.ub}
\end{align*}
We see that the case $d\leq 128 \rho^{-1} \tilde R^2 L$ results in the same inequality to hold (albeit vacuously).  This completes the proof for the $\mu^\top W \mu$ term in~\eqref{eq:near.orthogonality.prelim} by taking $L=M$.

\paragraph{$g_k^\top W \mu$ and $\mu^\top W z_k$ terms.} 
The terms $\mu^\top W g_k$ and $z_k^\top W \mu$ can be controlled using Lemma~\ref{lemma:mu.dot.W.g}.  For an integer $L\in \mathbb N$ we have
\begin{align*}
    \P\l( L|\mu^\top W g_k| \geq \f{\tr(W)}{32} \r) &\leq 2 \exp \l( - \f{ c\sqrt d}{\tilde R \snorm{W}_F} \cdot \f{\tr(W)}{8L}\r). 
\end{align*}
We have
\begin{align*}
    \f{\sqrt d}{\tilde R \snorm{W}_F} \cdot \f{ \tr(W)}{32 L} \geq \f{ R^2 \sqrt d}{\tilde R \cdot 2 \sqrt d} \cdot \f{ \rho d}{32 R^2 L} = \f{ \rho d}{64 \tilde R L}.
\end{align*}
Note that the above argument applies to $W^\top$ as well as $W$, so that we have
\begin{align*}
      \P\l( L|g_k^\top W \mu| \geq \f{\tr(W)}{32} \r) \vee \P\l( L|\mu^\top W g_k| \geq \f{\tr(W)}{32} \r) &\leq 2 \exp \l( -\f{ c \rho d}{64 \tilde R L}\r). \numberthis \label{eq:mu.dot.w.gk.L}
\end{align*}
In particular, we have
\begin{align*}
    \P\l(\sqrt M |g_k^\top  W\mu | \geq \f{\tr(W)}{32}\r) &\leq 2 \exp\l( - \f{ c \rho d}{64\tilde R \sqrt M} \r),\\
    \P\l(M |\mu^\top  W z_k| \geq \f{\tr(W)}{32}\r) &\leq 2 \exp\l( - \f{ c \rho d}{64 \tilde RM }\r) . \numberthis \label{eq:gk.dot.w.mu.ub}
\end{align*}

\paragraph{$g_k^\top W z_k$ term.}
The last term $g_k^\top W z_k$ is a Gaussian chaos random variable. By~\citet[Lemma 6.2.2]{vershynincup} this random variable is sub-exponential with sub-exponential norm at most $c \snorm{W}_F$. Therefore we have
\begin{align*}
    \P(\sqrt{M} |g_k^\top W z_k| > \f{\tr(W)}{32}) &= \P( |g_k^\top W z_k| > \f{ \tr(W)}{32\sqrt{M}}) \leq 2 \exp\l( - c \f{ \tr(W)}{32 \sqrt{M} \snorm{W}_F} \r).
\end{align*}
Since $\tr(W)/\snorm{W}_F \geq \rho \sqrt d/2$ by Lemma~\ref{lemma:sum.lambdatau.bounds} this implies
\begin{align*}
    \P(\sqrt{M} |g_k^\top W z_k| > \f{\tr(W)}{32}) \leq 2 \exp\l( - \f{ c\rho \sqrt d}{64 \sqrt{M}}  \r). \label{eq:gk.dot.w.zk} \numberthis 
\end{align*}

\paragraph{Putting it all together.}  Continuing from~\eqref{eq:near.orthogonality.prelim} we can use~\eqref{eq:mu.dot.w.mu.ub},~\eqref{eq:gk.dot.w.mu.ub} and~\eqref{eq:gk.dot.w.zk} to get
\begin{align*}
    &\quad \P\l( \l| \sum_{i\neq k} y_i x_i^\top W y_k x_k \r| > \f {\tr(W)}8 \r) \leq 2 \exp \l( - \f{  c \rho d }{128 M \tilde R^2} \r) \\
    &+ 2 \exp \l( - \f{ c \rho d }{64 \tilde R \sqrt M } \r) +2 \exp \l( - \f{ c \rho d }{64 M \tilde R} \r)  + 2\exp \l( - \f{ c\rho \sqrt d}{64 \sqrt M} \r) . \numberthis \label{eq:nearly.orthogonal.term.final}
\end{align*}

\subsection{$x_k^\top W x_k$ term.}  
It remains to bound the probability of the first three events in~\eqref{eq:misclassify.training.examples.prelim}.

\paragraph{$z_k^\top W z_k$ term.}  First note that since $z_k$ is a standard Gaussian, by a standard Hanson-Wright inequality~\citep[Lemma 6.2.1]{vershynincup} we have for some constant $0<c<1$, for any $t \geq 0$,
\begin{align*}
\P(|z_k^\top W z_k - \tr(W)| \geq t) \leq 2 \exp \l( - c \min \l( \f{ t^2}{\snorm{W}_F^2}, \f{t}{\snorm{W}_F}\r) \r).
\end{align*}
So for $t = \f 12 \tr(W)$ we have
\begin{align*}
    \f{ t}{\snorm{W}_F} &= \f{ \tr(W)}{2\snorm{W}_F} \geq \f{ \rho d}{R^2} \cdot \f{ R^2}{4\sqrt d} = \f {\rho \sqrt d}{4}. 
\end{align*}
This means that if $\rho \sqrt d/4 \geq 1$ we have by Lemma~\ref{lemma:sum.lambdatau.bounds}
\begin{align*}
\P\l (|z_k^\top W z_k - \tr(W)| \geq \f 1 2 \tr(W)\r)  \leq 2 \exp \l( - c \rho \sqrt d/4\r).\numberthis 
\label{eq:zk.dot.w.zk.lb}
\end{align*}
Since the same inequality trivially holds if $\rho \sqrt d/4 < 1$ (by potentially taking $c$ to be a smaller constant), this completes this term. 

\paragraph{$\mu^\top W \mu$ term.}  This is covered by~\eqref{eq:mu.dot.w.mu.ub} with $L=1$:
\begin{align*}
       \P\l( |\mu^\top W \mu| > \f{ \tr(W)}{32} \r) &\leq 2\exp \l( - \f{ c \rho d}{128  \tilde R^2}\r).\numberthis \label{eq:mu.dot.w.mu.ub.L=1}
\end{align*}

\paragraph{$z_k^\top W \mu$ terms.}  Since $g_k$ and $z_k$ are i.i.d., this is covered by~\eqref{eq:mu.dot.w.gk.L} with $L=1$:
\begin{align*}
      \P\l( |z_k^\top W \mu| \geq \f{\tr(W)}{32} \r) \vee \P\l( |\mu^\top W z_k| \geq \f{\tr(W)}{32} \r) &\leq 2 \exp \l( -\f{ c \rho d}{64 \tilde R }\r). \numberthis \label{eq:mu.dot.w.gk.L=1}
\end{align*}

\paragraph{Putting it all together.}  
Continuing from~\eqref{eq:misclassify.training.examples.prelim} we have
\begin{align*}
    &\quad\P\l( \hat \mu^\top W y_k x_k < 0 \r) \\
    &\leq \P \l( z_k^\top W z_k - \tr(W) < - \f 1 2 \tr(W)\r) + \P\l( |z_k^\top W \mu| > \f 1 8 \tr(W) \r) \\
    &\qquad + \P \l( |\mu^\top W z_k| > \f 1 8 \tr(W) \r) + \P \l( \mu^\top W \mu <- \f 1 8 \tr(W) \r) + \P \l(  \l|\textstyle\sum_{i\neq k} y_i y_k x_i^\top W x_k \r| > \f 1 8 \tr(W)\r)\\
    &\leq 2 \exp\l( -\f{ c\rho \sqrt d}{4} \r) + 4 \exp \l( - \f{ c \rho d}{64 \tilde R } \r)  + 2 \exp \l( - \f{  c \rho d }{128 M \tilde R^2} \r) \\
    &\qquad + 2 \exp \l( - \f{ c \rho d }{64 \tilde R \sqrt M } \r) +2 \exp \l( - \f{ c \rho d }{64 M \tilde R} \r)  + 2\exp \l( - \f{ c\rho \sqrt d}{64 \sqrt M} \r) \\
    &\leq 4 \exp \l( - \f{ c \rho \sqrt d}{64 \sqrt M} \r)+ 8 \exp \l( - \f{ c \rho d}{128 M (\tilde R^2 \vee \tilde R)} \r).
\end{align*}
Using a union bound this allows for us to conclude
\begin{align*}
    \P\l( \exists k \in [M]\ \text{ s.t.}\ \hat \mu^\top W y_k x_k < 0 \r) \leq 4 M  \exp \l( - \f{ c \rho \sqrt d}{64 \sqrt M} \r)+ 8M \exp \l( - \f{ c \rho d}{128 M (\tilde R^2 \vee \tilde R)} \r),
\end{align*}
as claimed in the theorem.

\section{Experiment details} \label{app:experiments}
We describe here the experimental setup for Figures~\ref{fig:Rtilde} and~\ref{fig:batch}; code is available on Github.\footnote{\href{https://github.com/spencerfrei/icl_classification}{https://github.com/spencerfrei/icl\_classification}}

We pre-train models using standard full-batch gradient descent on the logistic loss with $R = 5 \sqrt d$, $N=40$, learning rate $\eta=0.01$, for 300 steps from a zero initialization, using PyTorch.  The in-context training accuracy is measured using the definition of training accuracy from Theorem~\ref{thm:benignoverfitting}: namely, we look at what proportion of the in-context examples (training data) that is accurately predicted with the model $\hat y(E_\tau^{1:M};W)$, where $W$ is the trained transformer, for a single task $\tau$, i.e. averaging $\ind(y_k = \sign(y(E_\tau^{1:M}(x_k); W)))$ over $k=1, ..., M$.  The in-context test accuracy is computed by measuring whether $\sign(\hat y(E_\tau^{1:M}(x_{M+1}); W)) = y_{M+1}$.  We then average over 2500 tasks, and we plot this average with error bars corresponding to one standard error over these 2500 numbers.   All computations can be run within an hour on high-quality CPU, although we used an NVIDIA RTX 3500 Ada which helped speed up the computations for the $B=d\geq 1000$ setting.

\printbibliography
\end{document}